\documentclass{article}

\usepackage{microtype}
\usepackage{graphicx}
\usepackage{subcaption}
\usepackage{booktabs} 

\usepackage{hyperref}

\usepackage{bbm}
\usepackage{enumitem}

\usepackage[accepted]{icml2026}

\usepackage{amsmath}
\usepackage{amssymb}
\usepackage{mathtools}
\usepackage{amsthm}

\usepackage[capitalize,noabbrev]{cleveref}

\theoremstyle{plain}
\newtheorem{theorem}{Theorem}[section]

\newtheorem{lemma}[theorem]{Lemma}
\newtheorem{corollary}[theorem]{Corollary}
\theoremstyle{definition}
\newtheorem{definition}[theorem]{Definition}
\newtheorem{assumption}[theorem]{Assumption}
\theoremstyle{remark}
\newtheorem{remark}[theorem]{Remark}
\newtheorem{example}{Example}

\usepackage[textsize=tiny]{todonotes}

\icmltitlerunning{Why Linear Recurrent Memory Works in Partially Observable Reinforcement Learning}

\begin{document}

\twocolumn[
  \icmltitle{Why Linear Recurrent Memory Works in Partially Observable \\Reinforcement Learning}



  \icmlsetsymbol{equal}{*}

  \begin{icmlauthorlist}
    \icmlauthor{Yike Zhao}{epfl}
    \icmlauthor{Onno Eberhard}{mpi}
    \icmlauthor{Malek Khammassi}{epfl}
    \icmlauthor{Ali H. Sayed}{epfl}
    \icmlauthor{Michael Muehlebach}{mpi}
  \end{icmlauthorlist}

    \icmlaffiliation{epfl}{EPFL, Lausanne, Switzerland}
  \icmlaffiliation{mpi}{Max Planck Institute for Intelligent Systems, Tübingen, Germany}
  
  \icmlcorrespondingauthor{Yike Zhao}{yike.zhao@epfl.ch}

  \icmlkeywords{Partially Observable Reinforcement Learning, Linear Recurrent Memory, Hidden Markov Model}

  \vskip 0.3in
]



\printAffiliationsAndNotice{}  

\begin{abstract}
  The family of linear recurrent neural networks has shown strong performance as recurrent memory units in partially observable reinforcement learning. We provide a theoretical justification for their empirical effectiveness by constructing and studying two linear filters:
(i) the first exactly reproduces the pre–softmax logits of the belief vector in a hidden Markov model (HMM) under a deterministic transition matrix, thereby serving as a sufficient statistic for optimal policy learning,  
(ii) the second achieves vanishing state-decoding error under a nearly deterministic transition matrix, thus reducing state ambiguity to near zero. The results extend to action-controlled HMMs, where the corresponding linear filters become time-varying with action-dependent dynamics. We illustrate our main results through numerical experiments and further show that the constructed linear filter serves as a strong feature extractor in a small reinforcement learning game.
\end{abstract}

\section{Introduction}
Many environments in reinforcement learning (RL) are partially observable. In this setting, formalized as a partially observable Markov decision process (POMDP), the true state of the environment is unknown and the agent only receives stochastic observations. Hence, a single observation does not fully determine the state and the agent must therefore leverage its interaction history to make effective decisions. A common approach is to learn a \emph{recurrent memory} that recursively compresses this history into a latent state \citep{pmlr-v162-ni22a, lu2023structured}. 

A particularly effective choice are \emph{linear recurrent neural networks} (linear RNNs), where the hidden state transition is linear and governed by a state transition matrix. Existing linear RNNs can be classified as either \emph{time-invariant} \citep{gu2021efficiently, smith2022simplified, orvieto2023resurrecting} or \emph{time-varying} \citep{gu2024mamba, yang2024gated, yang2024parallelizing}, depending on whether this matrix is fixed or input-dependent.
Compared with nonlinear RNNs, linear RNNs achieve substantially lower training time by parallelizing computation across time steps \citep{gu2021efficiently, gu2024mamba}, and have demonstrated strong performance on partially observable benchmarks \citep{lu2023structured, lu2024rethinking}. In contrast, nonlinear RNNs often suffer from optimization issues that lead to suboptimal results \citep{pascanu2013difficulty, kanai2017preventing}.

A sufficient statistic for optimal decision-making in POMDP is the Bayesian posterior distribution over the environment’s state given the history of interactions (e.g., \citealp[]{krishnamurthy2016partially}). The posterior, or belief vector, is updated recursively following Bayes’ rule, resulting in a nonlinear optimal filter. Therefore, a \emph{linear} recurrent memory cannot in general reproduce these \emph{nonlinear} dynamics. This raises questions about the information captured by linear dynamics and, more broadly, the explanation of the strong empirical success of linear RNNs.
We address these questions by explicitly constructing linear filters related to the (nonlinear) log-belief dynamics in hidden Markov models (HMMs) and their action-controlled counterpart (the dynamics of general POMDPs). We establish the following two key results:
\vspace{-0.5em}
\begin{enumerate}
    \item A time-invariant linear recurrent memory \emph{exactly} reproduces the log-belief dynamics in HMMs with deterministic state transitions; the argument extends to POMDPs, yielding a time-varying linear memory. 
        \item We construct a linear recurrent memory called the time-invariant \textbf{Adaptive Logit Filter (ALF)} for HMMs with a nearly deterministic transition matrix, i.e., $T = D + \varepsilon Q$. This means that states evolve deterministically following $D$ and are perturbed by an arbitrary small stochastic element $\varepsilon Q$.  We show that this filter recovers the true state with an asymptotic error rate of $\varepsilon \log(1/\varepsilon)$, which vanishes as $\varepsilon \to 0$. This rate matches the optimal asymptotic rate achieved by nonlinear filtering. A corresponding time-varying ALF satisfies the same guarantee in POMDPs.
\end{enumerate}
\vspace{-0.3em}
Taken together, these results provide theoretical support that linear recurrent memories can represent or approximate the log-belief vector in partially observable RL. We underline these findings with  experiments: We first illustrate the theoretical properties of ALF, using its time-invariant form, in a  numerical simulation. We then construct a small RL environment and show that the time-varying ALF yields a strong learned policy and serves as an effective target for linear memories to learn.

The constructed filters offer new insights for the design of linear RNNs in partially observable RL:
\vspace{-0.6em}
\begin{enumerate}
    \item 
    Our two ALFs highlight a link between the eigenvalues of the latent dynamics matrix and environmental determinism: less stochastic environments require eigenvalues closer to the unit circle.
\vspace{-0.2em}
    \item All constructed filters use a hidden dimension equal to the state space size, highlighting the representational efficiency achievable by linear memories.
\vspace{-0.2em}
    \item  The proposed two ALFs balance past memory and new information via the coefficients \((1-\delta)\) and \(\delta\), a structure widely observed in well-performing linear RNNs (e.g., \citealp[]{orvieto2023resurrecting}). Our analysis provides a principled interpretation of this mechanism in the reinforcement learning context.
\end{enumerate}
\vspace{-1.0em}
\section{Related Work}
\vspace{-0.2em}
Our work focuses on providing a theoretical justification for linear RNNs in POMDPs. This section reviews common memory mechanisms in partially observable RL and prior works closely related to our theoretical developments.

The most widely used classes of memory are linear/nonlinear RNNs and Transformers. Although Transformers \citep{vaswani2017attention} revolutionized supervised sequence modeling, their impact in RL has been more modest \citep{lu2024rethinking} due to their nonrecurrent structure and the quadratic complexity of the attention mechanism. Existing theoretical studies in the context of RL have therefore focused on recurrent memory and investigated \emph{specific} memory mechanisms, such as fixed-length windows \citep{efroni2022provable, cayci2024finite} or exponential moving averages \citep{eberhard-2025-partially}. We build on these works by providing a theoretical justification for the strong empirical success of linear RNNs in general, highlighting their validity as memory units.

We focus on the capability of linear RNNs for state tracking. Our analysis is inspired by the literature on Adaptive Social Learning (ASL; \citealp[]{bordignon2021adaptive, khammassi2024adaptive}), a multi-agent state tracking algorithm. In particular, \citet{khammassi2024adaptive} analyze its performance under slow-varying HMMs. In the single-agent setting, we identify a direct connection between ASL and linear RNNs: after removing normalization and applying a log transform, ASL falls into this class \citep[][(10) and (13)]{hu2023optimal}. Building on this insight, we construct linear filters that provably track the fast-changing states in nearly-deterministic HMMs and action-controlled HMMs.



\vspace{-0.3em}
\section{Preliminaries}
\vspace{-0.2em}
\subsection{Notation}
Boldface letters denote random variables.  The floor and ceiling operators are denoted by  
$\lfloor \cdot \rfloor$ and $\lceil \cdot \rceil$, respectively. 
The operator $\mathrm{diag}\{v\}$ transforms a state-related vector $v \in \mathbb{R}^N$ into a diagonal matrix, where the elements are interchangeably referred to as $v_i$ or $v_{x_i}$. Throughout the paper, $\{u_i\}$ denotes the canonical basis vectors, where the $i$-th component of $u_i$ equals 1 and all other components equal 0.
\vspace{-1.8em}
\subsection{Linear Recurrent Memory}
Linear recurrent memory can be broadly classified into two categories: \emph{time-invariant} and \emph{time-varying}.
Time-invariant linear memory is widely used in RL practice, where the core dynamics take the form~\footnote{Linear time-invariant RNNs are often written as $\boldsymbol{z}_k = A \boldsymbol{z}_{k-1} + B \boldsymbol{b}_k$, but since inputs are typically passed through a nonlinear encoder, the effect of $B$ can be absorbed into $\phi$. The same argument also applies to the time-varying case.}:
\begin{align}
\label{eq:linear_memory}
\boldsymbol{z}_k = A \boldsymbol{z}_{k-1} + \phi(\boldsymbol{b}_{k}),
\end{align}
where $\boldsymbol{z}_k \in \mathbb{R}^H$ is the hidden state and $A \in \mathbb{R}^{H\times H}$ is the transition matrix. The input $\boldsymbol{b}_{k}$ is a feature vector from recent interactions, such as the current observation, an observation–action pair, or a fixed window of past observations. The encoder $\phi(\cdot)$ is a nonlinear mapping (e.g., a multi-layer perceptron (MLP)). 

Time-varying linear memory has recently received increasing attention, whose dynamics become:
\begin{align}
\label{eq:linear_memory_2}
\boldsymbol{z}_k = A(\boldsymbol{b}_{k})\boldsymbol{z}_{k-1} + \phi(\boldsymbol{b}_{k}),
\end{align}
where $A(\cdot)$ is a function that maps  $\boldsymbol{b}_{k}$ to a matrix in $\mathbb{R}^{H\times H}$. This formulation retains parallelizable computation across time \citep{smith2022simplified} while providing additional flexibility through input-dependent state transitions.
\vspace{-0.3em}
\subsection{POMDPs}
We consider a finite POMDP with 
state space $\mathcal{X} = \{x_1,\dots,x_N\}$, action space 
$\mathcal{A} = \{a_1,\dots,a_L\}$, and observation space 
$\mathcal{Y} = \{u_1,\dots,u_S\}$, where each $u_i$ is a canonical basis 
vector in $\mathbb{R}^S$ (i.e., one-hot encoded). The initial state distribution is $\pi_0 \in \mathbb{R}^N$. Starting at time step $k=0$, the true state $\boldsymbol{x}_{k}^\star$ evolves according to the controlled transition matrices 
$\{T(a)\}_{a \in \mathcal{A}}$, where
\vspace{-0.2em}
\begin{align}
\label{eq:tij}
t_{ij}(a) \triangleq \mathbb{P}(\boldsymbol{x}_{k+1}^\star = x_i 
\mid \boldsymbol{x}_k^\star = x_j, \boldsymbol{a}_k = a).\vspace{-0.2em}
\end{align}
Given the state–action 
pair $(\boldsymbol{x}_k^\star, \boldsymbol{a}_k)$, the agent receives 
a reward $r(\boldsymbol{x}_k^\star, \boldsymbol{a}_k)$.
The next observation $\boldsymbol{y}_{k+1}$ is generated according to the emission matrix $E \in \mathbb{R}^{S \times N}$ with entries
\vspace{-0.3em}
\begin{align} 
\label{eq:def_e_ij}
e_{ij} \triangleq \mathbb{P}(\boldsymbol{y}_{k+1} = u_i 
\mid \boldsymbol{x}_{k+1}^\star = x_j).\vspace{-0.3em}
\end{align}
Thus, the state–observation dynamics form an 
\textbf{action-controlled HMM} \citep{krishnamurthy2016partially}. 

\vspace{-0.2em}
\subsection{Belief Vector and Logit–Space Filtering}
The belief vector is the 
posterior distribution over the states given the entire action–observation history. It is a sufficient statistic for optimal decision-making 
\citep[][Theorems~7.2.1, 7.6.1]{krishnamurthy2016partially}, and therefore provides 
a compact representation of the history.
We denote the belief vector by $\boldsymbol{\alpha}_k \in \mathbb{R}^N$. In an action-controlled HMM, with $\alpha_0 = \pi_0$, for $k \geq 1$, $\boldsymbol{\alpha}_k$  has  components:
\vspace{-0.2em}
\begin{align}
    \label{eq:belief_def}\boldsymbol{\alpha}_k(x_i) 
    \triangleq 
    \mathbb{P}\!\left(\boldsymbol{x}_k^\star = x_i \,\middle|\, \boldsymbol{y}_{1:k}, \boldsymbol{a}_{0:k-1}\right).\vspace{-0.4em}
\end{align}
The belief vector can be updated recursively via Bayesian filtering, also known as the \textbf{optimal filter} \citep[(3)]{shue1998exponential}:
\vspace{-0.5em}
\begin{align}
\label{eq:opt}
    \boldsymbol{\alpha}_{k} &= \frac{\text{diag}\{E^\top \boldsymbol{y}_k\} T(\boldsymbol{a}_{k-1}) \boldsymbol{\alpha}_{k-1} }{\mathbbm{1}^\top \text{diag}\{E^\top \boldsymbol{y}_k\} T(\boldsymbol{a}_{k-1})  \boldsymbol{\alpha}_{k-1}}.\vspace{-0.5em}
\end{align}
Removing the normalization and taking elementwise logarithms on both sides yields: 
\vspace{-0.3em}
\begin{align}
\label{eq:log_filter}
    \boldsymbol{w}^o_{k} = \log( T(\boldsymbol{a}_{k-1}) \exp(\boldsymbol{w}^o_{k-1})) + \log(E^\top \boldsymbol{y}_{k}),\vspace{-0.3em}
\end{align}
where $ \boldsymbol{w}^o_{k}$ is related to the belief vector $\boldsymbol{\alpha}_{k}$ via softmax:
\vspace{-0.5em}
\begin{align}
\label{eq:softmax_a}
    \boldsymbol{\alpha}_{k} = \text{softmax} (\boldsymbol{w}^o_{k}) \triangleq \frac{1}{\mathbbm{1}^\top \exp (\boldsymbol{w}^o_{k})} \exp (\boldsymbol{w}^o_{k}) .
\end{align}
We therefore call \( \boldsymbol{w}^o_k \) the \textbf{optimal logit} and \eqref{eq:log_filter} the \textbf{logit optimal filter (LOF)}. 

From \eqref{eq:log_filter}, we see that LOF looks similar to a linear recurrent memory, except for the nonlinear first term.  
In general,
\begin{align}
    \log(T(\boldsymbol{a}_{k-1}) \exp(\boldsymbol{w}^o_{k-1})) \neq T(\boldsymbol{a}_{k-1})\boldsymbol{w}^o_{k-1},
\end{align}
so LOF is not a linear recurrent memory. Equality holds only in special cases, 
e.g., when $T(\boldsymbol{a}_{k-1})$ is the identity or a permutation matrix.

Motivated by this similarity, we focus on designing surrogate logit filters in the next two sections: 
recursions that evolve a surrogate logit $\boldsymbol{w}_k$ via a linear rule and provide the necessary information for learning a performant policy. The logit $\boldsymbol{w}_k$ induces a \textbf{proxy belief} via softmax,
\begin{align}
\label{eq:proxy_belief}
\widehat{\boldsymbol{\alpha}}_k \triangleq \operatorname{softmax}(\boldsymbol{w}_{k}).
\end{align}
\vspace{-1.5em}
\begin{remark}
    For analytical convenience, we will mainly focus on the  
HMM setting with a fixed transition 
matrix~$T$. The concepts introduced above remain unchanged, except that \eqref{eq:tij}, \eqref{eq:belief_def}, \eqref{eq:opt}, and \eqref{eq:log_filter} become action-independent. This structure underlies partially observable RL benchmarks 
(e.g., card games in POPGym~\citep{morad2023popgym}). Note that our key results extend naturally to the more general class of action-controlled HMMs. 
We will explicitly indicate whether the discussion pertains to the fixed-transition or action-controlled setting when necessary.
\end{remark}
\vspace{-0.4em}
\section{Deterministic Transitions}
\label{sec:dete}
In this section, we  construct linear recurrent memories that reproduce the optimal logits (i.e., $\boldsymbol{w}_k = \boldsymbol{w}_k^o$), which serve as sufficient statistics for optimal policy learning.
We first consider a time-invariant linear filter in the HMM setting with deterministic transitions.
We then extend the construction to the action-controlled HMM setting, resulting in a time-varying linear filter.
\vspace{-0.6em}
\subsection{Linear Filters for HMMs}
We consider a transition matrix $T$ of an HMM written in column form as:
    \begin{equation}
        T = \left[\begin{array}{ccc}
             T_1 &
              \cdots
              &T_N
        \end{array} \right],
    \end{equation}
    where $T_i$ denotes the $i$-th column of $T$. The definition of a deterministic matrix is:
\begin{definition}[\textbf{Deterministic Matrix}]
\label{def:deter}
    A matrix $T \in \mathbb{R}^{N \times N}$ is called \textbf{deterministic} if each column $T_i$ is one of the canonical basis vectors in $\mathbb{R}^N$:
    \begin{equation}
        T_i \in \{u_1,\dots,u_N\}, \quad \forall\, i \in \{1,\dots,N\}.
    \end{equation}
\end{definition}
The definition of a deterministic matrix naturally includes permutation matrices, and it may also include matrices with repeated columns, which create transient states (i.e., states that, once exited, cannot be reentered).

\begin{lemma}
\label{lemma:determinsitic}
Let \( r \) denote the number of transient states for a deterministic transition matrix $T \in \mathbb{R}^{N \times N}$. The matrix $T$ can be represented as follows:
\begin{equation}
\vspace{-0.5em}
    \label{eq:Amatrix}
        T = \left[
  \begin{array}{cc}
    T_{11} &0 \\
    T_{21} & P
  \end{array}
\right],\vspace{-0.3em}
    \end{equation}
where:
\begin{itemize}  \setlength\itemsep{0pt}
\vspace{-0.9em}
    \item \( T_{11} \in \mathbb{R}^{r \times r} \) is a strictly lower-triangular matrix with all-zero diagonal entries. The nilpotent index is denoted by \( \ell \) (i.e., $T_{11}^\ell =0$), where $\ell \leq r$. This matrix represents transitions between transient states. 
    \item  \( T_{21} \in \mathbb{R}^{N-r \times r} \) represents transitions from transient states to recurrent states. If $r \neq 0$, it is non-zero.
    \item  \( P \in \mathbb{R}^{N-r \times N-r} \) describes the transitions among the recurrent states, and is a permutation matrix.
\end{itemize}
\vspace{-1.3em}
\end{lemma}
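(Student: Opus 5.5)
The plan is to read $T$ as the adjacency matrix of a functional graph. Define $f:\{1,\dots,N\}\to\{1,\dots,N\}$ by $T_j = u_{f(j)}$, so that $t_{ij}=1$ iff $i=f(j)$. "Representable as" is understood up to a simultaneous relabeling of states, i.e.\ $\Pi^\top T \Pi$ for a permutation matrix $\Pi$. Such a relabeling is again deterministic and corresponds to conjugating $f$.

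\textbf{Step 1 (recurrent vs.\ transient).} Call $x_j$ \emph{recurrent} if $f^m(j)=j$ for some $m\ge1$, and \emph{transient} otherwise. Since $f$ maps a cycle into itself, the recurrent set $\mathcal R$ is forward invariant. On $\mathcal R$, $f$ is a bijection, because it restricts to a permutation on each cycle. Let $r=N-|\mathcal R|$. A transient state, once left, is never revisited: if $f^m(j)=j$ then $j$ would be recurrent. This matches the paper's informal notion.

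\textbf{Step 2 (ordering).} Order the transient states so that $f$ strictly increases the index within the transient set $\mathcal T$. Define the depth $d(j)$ as the least $m$ with $f^m(j)\in\mathcal R$. This is finite because every orbit in a finite set eventually enters a cycle. For transient $j$ with $f(j)$ also transient, we have $d(f(j))=d(j)-1$. Listing $\mathcal T$ in order of decreasing depth (ties arbitrary), then $\mathcal R$, gives a labeling in which:
\begin{itemize}
\item transient $j$ maps either to a later transient index or into $\mathcal R$;
\item recurrent states map only into $\mathcal R$.
\end{itemize}

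Reading off the blocks:
\begin{itemize}
\item The upper-right block is $0$, since $\mathcal R$ is invariant.
\item $P$ is the matrix of the bijection $f|_{\mathcal R}$, hence a permutation matrix.
\item $T_{11}$ has its nonzero entries at $(f(j),j)$ with $f(j)>j$, so it is strictly lower triangular.
\end{itemize}

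\textbf{Step 3 (nilpotency and $T_{21}\ne0$).} A strictly lower-triangular $r\times r$ matrix satisfies $T_{11}^r=0$, so the nilpotency index satisfies $\ell\le r$. More precisely, $\ell=\max_{j\in\mathcal T} d(j)$, since $(T_{11}^m)_{\cdot j}$ is the indicator of $f^m(j)$ when that state is still transient. If $r\neq0$, pick a transient $j$ with $d(j)=1$; one exists because any transient orbit has a last transient state before entering $\mathcal R$. Then $f(j)\in\mathcal R$, so column $j$ of $T_{21}$ equals $u_{f(j)}\neq0$.

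\textbf{Main obstacle.} The only genuinely nontrivial point is Step 2, namely producing an ordering that makes $T_{11}$ strictly lower triangular. The depth function handles this cleanly. The rest is bookkeeping, together with being explicit that the statement holds after a permutation of state labels.
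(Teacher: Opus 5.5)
Your proof is correct, and it takes a different route from the paper's. The paper does not build the ordering itself. It first notes that a non-permutation deterministic $T$ must miss some basis vector, so the corresponding state is unreachable and $r>0$. It then invokes the standard canonical form of a reducible stochastic matrix: the transient block is block lower-triangular with diagonal blocks that are irreducible or $[0]_{1\times 1}$, and the recurrent block is block diagonal with irreducible blocks. Finally, it rules out irreducible diagonal blocks among the transient states by contradiction, and observes that an irreducible deterministic block is a permutation.

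Your functional-graph argument replaces the cited decomposition with a self-contained construction. Because the depth $d$ drops by exactly one along every transient-to-transient edge, the transient subgraph is acyclic, which is exactly what the paper's contradiction step is meant to establish. Ordering by decreasing depth then gives an explicit topological order that makes $T_{11}$ strictly lower triangular.

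This buys you some rigor and extra information:
\begin{itemize}
\item The paper justifies its contradiction step by saying such a block is one that ``no external state could enter,'' which is not the operative property. What actually forces recurrence is that, with one outgoing edge per state, an irreducible class is a single cycle and hence closed. Your depth argument never needs this.
\item You obtain the exact nilpotency index $\ell=\max_{j\in\mathcal T} d(j)$.
\item You give an explicit witness for $T_{21}\neq 0$, which the paper simply attributes to classical results on absorbing chains.
\end{itemize}
The paper's route, in exchange, is shorter and places the lemma inside the general transient/recurrent decomposition of Markov chains.

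One cosmetic point: for $r=0$ your formula for $\ell$ is a maximum over an empty set, whereas the paper's subsequent remark adopts the convention $\ell=1$ there.
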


\begin{proof}
    See Appendix \ref{proof:lemma 1}.
\end{proof}

 \begin{remark}
    If $T$ is a permutation matrix, then the number of transient states is $r=0$, and the nilpotent index is $\ell=1$.
\end{remark}

Now we reproduce the optimal logit $\boldsymbol{w}_k^o$ using a concrete realization of the time-invariant linear recurrent memory structure~\eqref{eq:linear_memory}. 

\begin{theorem}
\label{thm:deter}
Under a deterministic transition matrix $T$ as defined in Definition~\ref{def:deter}, the optimal logits $\boldsymbol{w}_k^o$ can be reproduced by the following dynamics:
\begin{equation}
\label{eq:det_dynamics}
    \boldsymbol{w}_k = T^\prime \boldsymbol{w}_{k-1} + \psi(k, \boldsymbol{y}_k, \dots, \boldsymbol{y}_{k-\ell+1}),
\end{equation}
where
\begin{align} 
T^\prime \triangleq \left[ \begin{array}{cc} I_r &0 \\ 0 & P \end{array} \right], 
\end{align}
$P$ is the same matrix as in \eqref{eq:Amatrix}, and $\psi(\cdot)$ is a nonlinear mapping that takes as input the most recent $\ell$ observations  together with the time step $k$. The recursion is initialized at an arbitrary vector $w_0 \in \mathbb{R}^N$. 
The input to $\psi(\cdot)$ is padded with zeros for $k-\ell <0$. 
\end{theorem}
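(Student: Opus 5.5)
The plan is to show that the logit optimal filter \eqref{eq:log_filter} itself becomes linear once every transient state has been left. Before that point, the remaining nonlinearity depends only on the first $\ell$ observations, so it can be absorbed into the time-dependent input $\psi$. Throughout I use the block form of $T$ from Lemma~\ref{lemma:determinsitic}. I adopt the conventions $\log 0=-\infty$ and $c\cdot(-\infty)=-\infty$ for $c>0$, since the optimal logits of unreachable states equal $-\infty$. I also treat the initialization $w_0$ as fixed and known when $\psi$ is designed. Identifying $\boldsymbol{w}^o_0=\log\pi_0$ with $w_0$ (or absorbing the difference into $\psi$ at $k=1$) gives the claimed freedom in choosing $w_0$.

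\textbf{Step 1 (unrolling).} Write the unnormalized belief as
\[
\tilde{\boldsymbol\alpha}_k=\mathrm{diag}\{E^\top\boldsymbol y_k\}\,T\cdots\mathrm{diag}\{E^\top\boldsymbol y_1\}\,T\pi_0 ,
\]
so that $\boldsymbol w^o_k=\log\tilde{\boldsymbol\alpha}_k$. Because $T_{11}^\ell=0$, the transient block of $T^k\pi_0$ vanishes for $k\ge\ell$. The diagonal observation factors preserve this zero pattern, so the transient coordinates of $\tilde{\boldsymbol\alpha}_k$ are zero and $\boldsymbol w^o_k$ equals $-\infty$ on them for all $k\ge\ell$. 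Consequently, for $k>\ell$ the vector $\exp(\boldsymbol w^o_{k-1})$ is supported on the recurrent states. On those states $T$ acts as the permutation $P$, and a permutation commutes with the elementwise $\log$ and $\exp$. This gives
\[
\boldsymbol w^o_k = T'\boldsymbol w^o_{k-1}+\log(E^\top\boldsymbol y_k)\qquad (k>\ell),
\]
with the transient block propagated by $I_r$ and remaining at $-\infty$. So for $k>\ell$ I set $\psi(k,\boldsymbol y_k,\dots)=\log(E^\top\boldsymbol y_k)$, which depends only on the current observation.

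\textbf{Step 2 (the transient phase $1\le k\le\ell$).} For these times I define $\psi$ by forcing exact agreement:
\[
\psi(k,\boldsymbol y_k,\dots,\boldsymbol y_{k-\ell+1}) \triangleq \boldsymbol w^o_k - T'\boldsymbol w^o_{k-1}.
\]
I need to check that the right-hand side is a legitimate input to $\psi$. By Step 1's unrolling, $\boldsymbol w^o_k$ is a deterministic function of $\pi_0$ (a fixed constant) and of $\boldsymbol y_1,\dots,\boldsymbol y_k$. When $k\le\ell$, all of these lie in the window $\boldsymbol y_k,\dots,\boldsymbol y_{k-\ell+1}$, with zero padding for nonpositive indices. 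Because $\psi$ is also allowed to depend on $k$, it can tell which padded entries are genuine, so it is a well-defined nonlinear map of its arguments.

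\textbf{Step 3 (induction).} An induction on $k$ then gives $\boldsymbol w_k=\boldsymbol w^o_k$ for all $k$. The base case is the initialization. The inductive step uses the definition of $\psi$ for $k\le\ell$ and the identity from Step 1 for $k>\ell$.

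\textbf{Main obstacle.} I expect the delicate point to be the handling of $-\infty$ entries and of the arbitrary initialization. The transient block evolves under $I_r$, so the transient logits keep whatever value they hold at time $\ell$. The argument therefore relies on that value being $-\infty$, or, if finite surrogates are preferred, on $\psi$ pushing the transient logits to $-\infty$ at $k=\ell$ and then adding $0$ afterwards. I would state the extended-real convention explicitly. I would also note the alternative reading in which agreement is only required through the softmax \eqref{eq:softmax_a}, where a common additive shift is harmless.
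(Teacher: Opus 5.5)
Your proposal is correct and is essentially the paper's proof. The paper also shows, via $T_{11}^\ell=0$ and Lemma~\ref{lem:matrix_mul}, that the unnormalized belief vanishes on transient states for $k\ge\ell$, derives $\boldsymbol{w}^o_{k}=T'\boldsymbol{w}^o_{k-1}+\log(E^\top\boldsymbol{y}_{k})$ for $k>\ell$, and sets $\psi(k,\cdot)=\boldsymbol{w}^o_k-T'\boldsymbol{w}_{k-1}$ for $k\le\ell$ with the $w_0$ mismatch absorbed at $k=1$; the paper writes this only for $\ell=2$, whereas you state it for general $\ell$. One extended-real caveat, which the paper shares: if $\pi_0$ or $E$ has zero entries, inflow through $T_{11}$ or $T_{21}$ can make $\boldsymbol{w}^o_k$ finite where $T'\boldsymbol{w}^o_{k-1}=-\infty$ (e.g.\ $\pi_0=u_1$ on the chain $x_1\to x_2\to x_3$ with $x_3$ absorbing, at $k=2$), so exact matching for $2\le k\le\ell$ is impossible there, and you should either assume $\pi_0,E>0$ entrywise or keep $\boldsymbol{w}_k$ finite until $k=\ell$ and match only from then on (you also need the convention $0\cdot(-\infty)=0$ and an arbitrary choice of $\psi$ where both terms are $-\infty$).
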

\vspace{-1em}
\begin{proof}
    Here we provide the proof when $T$ is a permutation matrix for intuition. The full proof for general deterministic matrices is deferred to Appendix \ref{proof:thm_1}.

    Let $T=P$, then we have $r=0$ and $\ell=1$. For $k \geq 1$, the optimal logit $\boldsymbol{w}^o_{k}$ satisfies:
    \begin{align}
    \label{eq:det_1}
        \boldsymbol{w}^o_k 
        &\stackrel{(a)}{=} P \boldsymbol{w}^o_{k-1} + \log(E^\top \boldsymbol{y}_{k}), 
    \end{align}
    where $(a)$ follows from the fact that $P \exp(\boldsymbol{w}^o_{k-1})$ permutes the entries of $\exp(\boldsymbol{w}^o_{k-1})$. 
    
     However, as the arbitrary initialization $w_0$ in our dynamics might differ from $w^o_0 = \log(\pi_0)$, we design $\psi(k, y)$ to accommodate the potentially wrong initialization:
    \begin{align}
    \label{eq:encoder}
        \!\!\!\!\!\!\psi(k,y) = 
\log\big(E^\top y\big)
+\mathbbm{I}\{k=1\}\!\,P\big(\!\log(\pi_0)-w_0\!\big), \!\!\!\!
    \end{align}
    where $\mathbbm{I}\{\cdot\}$ is the indicator function. Here, the effect of wrong initialization is canceled out when \(k=1\), and later on, only the first term remains valid.
\end{proof}

Note that the structure of $\psi$ is, however, uncommon in deep learning. Understanding how well standard architectures, such as MLPs, can approximate this mapping is an interesting future direction, particularly given the common use of MLP encoders.

\subsection{Linear Filters for Action-Controlled HMMs}
We consider an action-controlled HMM and suppose that the transition matrices are permutation matrices, i.e.,
$T(a) = P(a), \forall a \in \mathcal{A}$. We then construct a realization of the time-varying linear recurrent structure \eqref{eq:linear_memory_2} that reproduces the optimal logits.

\begin{corollary}
    If the transition matrices satisfy $T(a) = P(a), \forall a \in \mathcal{A}$, the optimal logits $\boldsymbol{w}_k^o$ can be reproduced by the following dynamics:
    \begin{align}
    \label{eq:deter_action_hmm}
        \boldsymbol{w}_k 
        = P( \boldsymbol{a}_{k-1}) \boldsymbol{w}_{k-1} + \psi(k,  \boldsymbol{y}_{k}, \boldsymbol{a}_{k-1}),
    \end{align}
    where $\psi(\cdot)$ is a nonlinear mapping that takes as input the most recent action-observation pair together with the time step $k$. The initialization $w_0 \in \mathbb{R}^N$ is arbitrary.
\end{corollary}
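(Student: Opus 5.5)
The plan is to mirror the permutation case in the proof of Theorem~\ref{thm:deter}, now with a time-varying matrix selected by the action. First I check that the LOF \eqref{eq:log_filter} becomes linear when every $T(a)$ is a permutation matrix. For any permutation matrix $P$ and any vector $v$, we have $P\exp(v)=\exp(Pv)$, because $P$ only reorders entries and $\exp$ acts entrywise. Applying this with $P=P(\boldsymbol{a}_{k-1})$ and $v=\boldsymbol{w}^o_{k-1}$ gives, for $k\ge 1$,
\begin{align*}
\boldsymbol{w}^o_k = P(\boldsymbol{a}_{k-1})\boldsymbol{w}^o_{k-1} + \log(E^\top \boldsymbol{y}_k),
\end{align*}
with $\boldsymbol{w}^o_0=\log(\pi_0)$.

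Second, I handle the arbitrary initialization $w_0$. The natural choice of encoder is
\begin{align*}
\psi(k,y,a) = \log(E^\top y) + \mathbbm{I}\{k=1\}\,P(a)\big(\log(\pi_0)-w_0\big).
\end{align*}
This depends only on $k$, the current observation $y=\boldsymbol{y}_k$, and the previous action $a=\boldsymbol{a}_{k-1}$, as the statement requires. Since $\mathcal{A}$ is finite, $P(a)$ is a fixed lookup for each action, so $\psi$ is a well-defined nonlinear map.

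Third, I prove $\boldsymbol{w}_k=\boldsymbol{w}^o_k$ for all $k\ge1$ by induction. For the base case $k=1$,
\begin{align*}
\boldsymbol{w}_1 = P(\boldsymbol{a}_0)w_0 + \log(E^\top\boldsymbol{y}_1) + P(\boldsymbol{a}_0)\big(\log(\pi_0)-w_0\big),
\end{align*}
and the $w_0$ terms cancel, leaving $P(\boldsymbol{a}_0)\log(\pi_0)+\log(E^\top\boldsymbol{y}_1)=\boldsymbol{w}^o_1$. For $k\ge2$ the indicator term vanishes, so the recursion for $\boldsymbol{w}_k$ coincides with the linearized LOF above. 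The inductive step is then immediate, and the conclusion $\boldsymbol{\alpha}_k=\operatorname{softmax}(\boldsymbol{w}_k)$ follows from \eqref{eq:softmax_a}.

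I do not expect a substantive obstacle; the argument is essentially notational. The one point needing care is the convention $\log 0=-\infty$. If some $e_{ij}=0$ or $\pi_0$ has zero entries, the logits take value $-\infty$, and the cancellation $w_0 + \log(\pi_0) - w_0$ and the identity $P\exp(v)=\exp(Pv)$ must be read in the extended reals. Since $w_0$ is finite, the cancellation is still valid there. I would state this convention explicitly, as the proof of Theorem~\ref{thm:deter} implicitly does.
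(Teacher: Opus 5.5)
Your proposal is correct and takes the same approach as the paper, which proves the corollary by substituting $P(a)$ for $P$ in the permutation-case argument of Theorem~\ref{thm:deter}. Its encoder is the same initialization-correcting $\psi(k,y,a)=\log(E^\top y)+\mathbbm{I}\{k=1\}\,P(a)\big(\log(\pi_0)-w_0\big)$ that you chose. Your explicit induction and your remark on the extended-real convention just spell out details the paper leaves implicit.
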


\begin{proof}
    The result follows directly from Theorem~\ref{thm:deter} by replacing the fixed permutation matrix $P$ in \eqref{eq:det_1}--\eqref{eq:encoder} by $P(a)$ and $\psi(k,y)$ by $\psi(k,y,a)$.
\end{proof}

    Here, only the latest observation and action are required, since there are no transient states (i.e., $r=0$) for each $P(a)$. It should be noted that both the constructed linear filters \eqref{eq:det_dynamics} and \eqref{eq:deter_action_hmm} have a hidden dimension $N$, the state space dimension. 
The choice of initialization $w_0$ is not critical, as is typical in linear RNNs, which often use zero initialization across tasks \citep{gu2021efficiently}.  
Additionally, all eigenvalues of $T^\prime$ and $P(\boldsymbol{a}_{k-1})$ lie on the unit circle, consistent with the requirements of linear RNNs for modeling long-range dependencies \citep{orvieto2023resurrecting, grazzi2024unlocking}.

\section{Nearly-Deterministic Transitions}
\label{sec:near_deter}
 In this section, we show that vanishing state-decoding error can be achieved by concrete realizations of linear recurrent memory. 
 We first present a time-invariant filter for HMMs with nearly deterministic transitions, and then extend the construction to the action-controlled setting, yielding a time-varying linear filter.
 
The ability to reproduce the true states with vanishing error is important for effective policy learning. The benefits are further illustrated through experiments in Section \ref{sec:ringworld}.

We define the instantaneous state-decoding error at time step $k$ as:
\begin{align}
    p_{k} &\triangleq  \mathbb{P} \left[ \arg\max_{x \in \mathcal{X}} \widehat{\boldsymbol{\alpha}}_k(x) \neq \boldsymbol{x}^\star_k \right] \notag\\
    &\stackrel{(a)}{=} \mathbb{P} \left[ \arg\max_{x \in \mathcal{X}} \boldsymbol{w}_{k}(x) \neq \boldsymbol{x}^\star_k \right].
\end{align}
where $(a)$ follows from the fact that softmax preserves the ordering of elements. 
As a reference, the maximum a posteriori rule based on the exact belief vector $\boldsymbol{\alpha}_k$ in \eqref{eq:opt} achieves the minimum state-decoding error. We refer to this rule as the \emph{Bayes-optimal decoder}, and compare the performance of our filters against it.

\subsection{Linear Filters for HMMs}
\paragraph{Time-Invariant ALF.} In the following, we construct a time-invariant linear filter under the nearly-deterministic transition model. A nearly-deterministic transition matrix in an HMM is defined as follows:
\begin{definition}[\textbf{Nearly-Deterministic Matrix}]
\label{def:near_det}
    Let $D$ be a deterministic transition matrix as in Definition~\ref{def:deter}. A matrix $T$ is called \textbf{nearly-deterministic} if it admits the decomposition
    \begin{align}
    \label{eq:near_general}
        T = D + \varepsilon Q
    \end{align}
    for some \emph{small} drift parameter $0<\varepsilon <1$.
    
    The matrix $Q$ satisfies for each $j = 1, \ldots, N$:
    \begin{align}
    \label{eq:Q_constraint}
        q_{ij} \geq 0 \ \ (i \neq n_j), \quad q_{n_j j} = -\sum_{i\neq n_j} q_{ij},
    \end{align}
    where $n_j$ is the row index of the unique $1$ in column $j$ of $D$ (i.e., $d_{n_j j} = 1$, $d_{i j} = 0$, for all $i \neq n_j$).
\end{definition}

The condition \eqref{eq:Q_constraint} guarantees that the matrix $T$ is a valid transition matrix (necessary and sufficient). It can be easily shown that $T$ is column-stochastic with nonnegative entries. 
This type of matrix is called \emph{nearly-deterministic} because the small parameter $\varepsilon$ allows the deterministic component $D$ to dominate. 
As a result, the states $\boldsymbol{x}_k^\star$ follow the deterministic dynamics induced by $D$ for most time steps, with only occasional irregular transitions due to the term $\varepsilon Q$. 

\begin{remark}
    When $D = I$, $T$ in Definition \ref{def:near_det} reduces to the transition matrix of the slowly varying Markov chains studied by~\citet{khasminskii1996asymptotic}, ~\citet{yin2005lms}, and~\citet{khammassi2024adaptive}.
\end{remark}

According to Lemma \ref{lemma:determinsitic}, the deterministic transition matrix \(D\) can be expressed, after proper state labeling, as
\begin{equation}
\label{eq:near_D}
D = \begin{bmatrix}
    P & D_{12} \\
    0 & D_{22}
\end{bmatrix},
\end{equation}
where \(P \in \mathbb{R}^{N-r\times N-r}\) is a permutation matrix, \(D_{12} \in \mathbb{R}^{N-r\times r}\), and \(D_{22} \in \mathbb{R}^{r\times r}\) is nilpotent with index \(\ell\). In contrast to Lemma~\ref{lemma:determinsitic}, we label the recurrent states of $D$ as the first $N-r$ states:
\begin{align}
    \widetilde{\mathcal{X}} \triangleq \{x_{1}, \ldots, x_{N-r}\},
\end{align}
and the transient states as the last $r$ states, i.e., $\mathcal{X} \setminus \widetilde{\mathcal{X}}$. In what follows, we assume \(D\) has this form.

Motivated by the ASL algorithm in \citet{bordignon2021adaptive, khammassi2024adaptive}, we propose the following realization of the time-invariant linear recurrent memory \eqref{eq:linear_memory} in the logit space, referred to as the \textbf{adaptive logit filter (ALF)} for nearly-deterministic transition matrices:
\begin{align}
\label{eq:near_deter_dynamic}
    \boldsymbol{w}_{k} = \left[
  \begin{array}{cc}
     P &0 \\
    0 & I_r
  \end{array}
\right] (1-\delta) \boldsymbol{w}_{k-1} + \delta \log(E^\top \boldsymbol{y}_{k}).
\end{align}
ALF recursively updates a surrogate logit vector by combining a permuted, discounted past surrogate logit with the instantaneous log-likelihood of new observations. When $\delta = 0$, ALF depends entirely on the past surrogate logit and thus cannot adapt to changes in the underlying observation distribution that arise from irregular state transitions. In contrast, when $\delta = 1$, ALF relies solely on the instantaneous observation, allowing ALF to adapt quickly but making it more sensitive to observation noise. Hence, $\delta$ governs the speed of adaptation, trading off stability against responsiveness.  

The main difference in relation to the model used by \citet{bordignon2021adaptive} and  \citet{khammassi2024adaptive} is the presence of the matrix before $(1-\delta) \boldsymbol{w}_{k-1}$ in \eqref{eq:near_deter_dynamic}. This matrix enables \(\boldsymbol{w}_k\) to capture the fast state transition that dominates most of the time; without it, the state decoding error increases substantially. For $\boldsymbol{w}_{k}$, we denote the components corresponding to the recurrent states as:
\begin{align}
    \widetilde{\boldsymbol{w}}_{k} \triangleq \left[\begin{array}{c}
             \boldsymbol{w}_{k}(x_{1})  \\
              \vdots\\
              \boldsymbol{w}_{k}(x_{N-r})
        \end{array}\right] \in \mathbb{R}^{N-r}.
\end{align}
We impose the following assumption on the initial condition of ALF.
\begin{assumption}
\label{ass:init_dynamics_2}
    The initial $w_0$ for ALF is:
    \begin{align}
        w_0 = \left[ \begin{array}{c}
            \widetilde{w}_{0}\\
             -\infty\mathbbm{1}_r  
        \end{array}\right],
    \end{align}
    where $\widetilde{w}_{0}$ is finite (i.e., $\|\widetilde{w}_{0}\|_\infty < \infty$).
\end{assumption}

We adopt the convention $-\infty \cdot 0 = 0$. Assumption \ref{ass:init_dynamics_2} ensures that the decoded states over time are restricted to the recurrent states, never transient ones.

\paragraph{Observability Assumption.} Our goal is to analyze the state-decoding property of ALF, which will require additional definitions and assumptions on the emission matrix. These will be introduced next.

For the permutation $P$, we define its order as $M$:
\begin{definition}
\label{def:P_order}
    The order of a permutation matrix $P$ is the smallest positive integer $M$ such that $P^M = I$.
\end{definition}

\begin{lemma}{\citep[][Theorem 4.2.6]{humphreys2004numbers}}
    \label{lemma:P_order}
    Every permutation matrix has a finite order $M$, which is given by the least common multiple of the lengths of the cycles in the underlying permutation.
\end{lemma}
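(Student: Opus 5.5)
Regard the permutation matrix $P \in \mathbb{R}^{N\times N}$ as the map $\sigma$ on $\{1,\dots,N\}$ defined by $P u_j = u_{\sigma(j)}$. The plan is to decompose $\sigma$ into disjoint cycles and reduce the claim to two elementary facts: a single cycle of length $c$ has order exactly $c$, and the order of a product of commuting permutations on disjoint supports is the least common multiple of their orders. Finiteness of the order comes first and does not need the cycle structure. The set of $N\times N$ permutation matrices is a finite group under multiplication, of size $N!$, so the powers $P, P^2, \dots$ must repeat. If $P^a = P^b$ with $a<b$, multiplying by $(P^{-1})^a = (P^\top)^a$ gives $P^{b-a} = I$. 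Hence a smallest positive $M$ with $P^M = I$ exists.

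\textbf{Step 1: cycle decomposition.} Take an index $j$ and follow its orbit $j, \sigma(j), \sigma^2(j), \dots$. Because $\sigma$ is a bijection of a finite set, this orbit returns to $j$ for the first time after some $c_j \geq 1$ steps, and distinct orbits are disjoint. This partitions $\{1,\dots,N\}$ into cycles $C_1,\dots,C_m$ with lengths $c_1,\dots,c_m$. In this step I would also record that for $i \in C_t$ we have $\sigma^n(i) = i$ if and only if $c_t \mid n$. The reason is that the elements $i, \sigma(i), \dots, \sigma^{c_t-1}(i)$ are pairwise distinct, so $\sigma^n(i)$ depends only on $n \bmod c_t$.

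\textbf{Step 2: characterize $P^n = I$.} Since $P^n u_i = u_{\sigma^n(i)}$, we have $P^n = I$ exactly when $\sigma^n(i) = i$ for every $i$. By Step 1 this holds exactly when $c_t \mid n$ for all $t$, that is, when $n$ is a common multiple of $c_1,\dots,c_m$. The smallest positive such $n$ is therefore $\mathrm{lcm}(c_1,\dots,c_m)$, which gives $M = \mathrm{lcm}(c_1,\dots,c_m)$ and also reproves finiteness.

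The only step that needs care is the divisibility claim in Step 1, which rests on the minimality of $c_t$ and the distinctness of the orbit elements. Everything else is bookkeeping. In the paper this lemma is quoted from \citep{humphreys2004numbers}, so this argument would serve only as a self-contained justification.
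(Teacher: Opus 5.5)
Your proof is correct: the pointwise fact $\sigma^n(i)=i \iff c_t \mid n$ for $i\in C_t$, combined with $P^n u_i = u_{\sigma^n(i)}$, shows that $P^n=I$ exactly when $n$ is a common multiple of the cycle lengths, which gives $M=\mathrm{lcm}(c_1,\dots,c_m)$. The paper gives no proof of its own and only cites Theorem 4.2.6 of Humphreys and Prest, whose standard proof is this same cycle-decomposition argument (usually stated via commuting disjoint cycles, which your pointwise version does not need), so your proposal works as a self-contained substitute for the citation.
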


For any $x \in \widetilde{\mathcal{X}}$, let $\sigma: \widetilde{\mathcal{X}} \rightarrow \widetilde{\mathcal{X}}$ denote the permutation operation corresponding to $P$. Applying the permutation matrix $P$ repeatedly $i$ times corresponds to $\sigma^i(x)$. Note that $\sigma^M(x) = x$ according to Lemma \ref{lemma:P_order}.

The emission matrix $E\in \mathbb{R}^{S\times N}$ has the following form:
    \begin{align}
        E = \left[\begin{array}{cccc}
           E_{1:N-r}& E_{N-r+1}  & \ldots & E_N 
        \end{array}\right]
    \end{align}
where $E_{1}, \ldots, E_N$ are column vectors. We denote
\begin{align}
    \widetilde{E} \triangleq E_{1:N-r}
\end{align}
and define a corresponding stacked matrix $\widetilde{E}_{s}$ as:
    \begin{align}
    \label{eq:eprime}
        \widetilde{E}_{s} \triangleq \left[ \begin{array}{c}
             \widetilde{E}P^0 \\
             \vdots\\
             \widetilde{E}P^{M-1}
        \end{array}\right] \in \mathbb{R}^{SM \times N-r},
    \end{align}
where $M$ is the order of $P$ defined in Definition \ref{def:P_order}. To build intuition, we consider the first column:
    \begin{align}
    \label{eq:E_prime_u1}
        \widetilde{E}_{s} u_1 &= \left[ \begin{array}{c}
             \widetilde{E}P^0 u_1\\
             \vdots\\
             \widetilde{E}P^{M-1} u_1
        \end{array}\right] 
        \stackrel{(a)}{=} \left[ \begin{array}{c}
             \widetilde{E} u_{x_1}\\
             \vdots\\
             \widetilde{E} u_{\sigma^{M-1}(x_1)}
        \end{array}\right], 
    \end{align}
    where \((a)\) follows since \(P^i u_1\) applies the permutation \(\sigma\) to \(x_1\) exactly \(i\) times, yielding \(u_{\sigma^i(x_1)}\).
    The first $S$ elements correspond to the observation distribution when the true state is $x_1$, the next $S$ elements correspond to the state $\sigma(x_1)$, and so on. 
    
\begin{assumption}
\label{ass:oberv_E_weak_2}
    The emission matrix $E$ satisfies:\vspace{-0.5em}
    \begin{enumerate}[label=(\alph*), ref=\theassumption\alph*]
        \item \label{ass:pos} All elements of $ \widetilde{E}$ are strictly positive. \vspace{-0.5em}
        \item \label{ass:E_M_2} No two columns of the stacked matrix $\widetilde{E}_{s}$ are identical.
    \end{enumerate}  
\end{assumption}
\vspace{-0.8em}
In Assumption~\ref{ass:oberv_E_weak_2}, no restrictions are placed on the observation models $\mathbb{P}(\cdot \mid x)$ for transient states $x \in \mathcal{X} \setminus \widetilde{\mathcal{X}}$. For recurrent states $x \in \widetilde{\mathcal{X}}$, Assumption~\ref{ass:pos} requires that the distributions $\mathbb{P}(\cdot \mid x)$ share identical support, ensuring finite Kullback--Leibler divergence across states, while Assumption~\ref{ass:E_M_2} enforces that observation distributions along different $M$-step trajectories are distinct. Similar multi-step observability assumptions are commonly imposed in POMDPs (e.g., \citealp{liu2022partially, golowich2023planning}) to ensure stable filtering and learnability. Assumption \ref{ass:oberv_E_weak_2} is weaker than the one used in \citet{khammassi2024adaptive}; we defer the discussion to Appendix \ref{sec:obs}.


\paragraph{Main Results.}Our goal is to recover the true state $\boldsymbol{x}_k^\star$. 
The following theorem provides a sufficient condition on the parameters $\varepsilon$ and $\delta$ under which the dynamics in \eqref{eq:near_deter_dynamic} achieve vanishing error probability in identifying $\boldsymbol{x}_k^\star$. 
This result extends Theorems~1 and 2 of \citet{khammassi2024adaptive} to the model \eqref{eq:near_deter_dynamic}, allowing a more general class of transition matrices and relying on a weaker observability assumption. 
Further intuition on the sufficient condition in the special case of $D = I$ can be found in \citet{khammassi2024adaptive}.

\begin{theorem}
\label{thm:1}
     Consider a nearly-deterministic transition matrix $T$ under Assumptions~\ref{ass:init_dynamics_2} and~\ref{ass:oberv_E_weak_2}. Let $\delta = \delta_\varepsilon$ be the adaptation parameter used for a given drift parameter $\varepsilon$. If the following conditions are satisfied:
\begin{align}
    \lim_{\varepsilon \to 0} \delta_\varepsilon &= 0, \label{eq:delta_1}\\
    \lim_{\varepsilon \to 0} \frac{\varepsilon}{\delta_\varepsilon} &= 0, \label{eq:delta_2}
\end{align}
then the \textbf{ALF} consistently learns the true states as $\varepsilon$ goes to zero, which is formally expressed as
\begin{equation}
    \lim_{\varepsilon \to 0} \limsup_{k \to \infty} p_{k} = 0.
\end{equation}
Moreover, following conditions \eqref{eq:delta_1} and \eqref{eq:delta_2}, consider
\begin{align}
\label{eq:rule_3}
    \delta_\varepsilon = \frac{\lambda}{\log(1/\varepsilon)}, \quad \text{with } \lambda \in (0, \xi),
\end{align}
where the constant $\xi>0$ depends only on the permutation backbone $P$ and the likelihood model $E$; its detailed definition and further discussion are provided in Appendices~\ref{proof:thm1} and~\ref{remark:xi}, respectively. Then it holds that
\begin{align}
    \limsup_{k \to \infty} p_{k} \leq \kappa \ \varepsilon \log \frac{1}{\varepsilon} + o(\varepsilon \log \frac{1}{\varepsilon}),
\end{align}
with the constant $\kappa>0$ defined in Appendix~\ref{proof:thm1}. The resulting decay rate matches that of the Bayes-optimal decoder \citep{khasminskii1996asymptotic}, in the sense that the long-term error is also dominated by an
\(\varepsilon \log(1/\varepsilon)\) term. 
\end{theorem}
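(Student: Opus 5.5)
We plan to reduce the ALF recursion to a windowed sum of log-likelihoods aligned with the deterministic backbone, and then split the decoding error into two events: a recent irregular transition, or a large deviation of a weighted log-likelihood sum.

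\textbf{Step 1: unrolling the recursion.} By Assumption~\ref{ass:init_dynamics_2} the transient coordinates stay at $-\infty$, so the argmax always lies in $\widetilde{\mathcal{X}}$. The recurrent block then evolves as $\widetilde{\boldsymbol{w}}_k = (1-\delta)P\widetilde{\boldsymbol{w}}_{k-1} + \delta \log(\widetilde{E}^\top \boldsymbol{y}_k)$. Unrolling gives
\begin{align}
\widetilde{\boldsymbol{w}}_k = (1-\delta)^k P^k \widetilde{w}_0 + \delta\sum_{m=0}^{k-1}(1-\delta)^m P^m \log(\widetilde{E}^\top \boldsymbol{y}_{k-m}).
\end{align}
The initialization term vanishes as $k\to\infty$. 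Using $P^m u_x$-type identities, coordinate $x$ of the sum equals $\delta\sum_m (1-\delta)^m \log e_{\boldsymbol{y}_{k-m},\sigma^{-m}(x)}$. So $\widetilde{\boldsymbol{w}}_k(x)$ is an exponentially weighted log-likelihood of the hypothesis ``the state $m$ steps ago was $\sigma^{-m}(x)$'', i.e.\ that the chain followed $P$ backward from $x$.

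\textbf{Step 2: error decomposition.} Fix a window length $K_\varepsilon = \lceil c\log(1/\varepsilon)/\delta\rceil$. Let $G_k$ be the event that on $[k-K_\varepsilon,k]$ the true chain follows $D$ exactly and sits in $\widetilde{\mathcal{X}}$, so that $\boldsymbol{x}^\star_{k-m} = \sigma^{-m}(\boldsymbol{x}^\star_k)$. Under $T = D+\varepsilon Q$, and after the chain has left the transient states (which happens within $\ell$ steps of any deviation, contributing only $O(\ell\varepsilon)$), a union bound gives $\mathbb{P}(G_k^c)\le C\varepsilon K_\varepsilon$. On $G_k$, the terms older than $K_\varepsilon$ carry weight $(1-\delta)^{K_\varepsilon}\approx\varepsilon^c$ times a bounded quantity; bounding them needs Assumption~\ref{ass:pos}, which makes the log-likelihoods bounded. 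For $x\neq \boldsymbol{x}^\star_k$ we therefore need to bound the probability that $\sum_{m<K_\varepsilon}\delta(1-\delta)^m Z_m(x) \ge -o(1)$, where $Z_m(x) = \log\big(e_{\boldsymbol{y}_{k-m},\sigma^{-m}(x)}/e_{\boldsymbol{y}_{k-m},\sigma^{-m}(x^\star)}\big)$. The $Z_m$ are independent given the path, bounded, and $M$-periodic in their means. Grouping them in blocks of $M$, the mean of each block is $-\sum_{i<M} D_{\mathrm{KL}}\big(\widetilde{E}u_{\sigma^{i}(x^\star)}\,\|\,\widetilde{E}u_{\sigma^i(x)}\big)$. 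This is strictly negative by Assumption~\ref{ass:E_M_2}, since columns of $\widetilde{E}_s$ differ.

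\textbf{Step 3: large deviations, the main obstacle.} A Chernoff bound on the exponentially weighted sum of independent non-identically distributed terms (the approach of \citet{khammassi2024adaptive}, adapted to the periodic structure) should give a probability of at most $\exp(-\xi'/\delta)$ as $\delta\to0$. Here $\xi'$ is a rate given by a Legendre transform of the averaged log-moment generating function over one period, minimized over $x$. The delicate points are handling the periodic, non-identical means and showing that the rate integral $\int_0^\infty$ of the weights converges to a positive constant. This is where $\xi$ is defined.

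Combining the steps gives
\begin{align}
\limsup_k p_k \le C\varepsilon\log(1/\varepsilon)/\delta + N e^{-\xi/\delta}+o(\cdot).
\end{align}
Under \eqref{eq:delta_1}--\eqref{eq:delta_2} both terms vanish, which proves consistency. Substituting \eqref{eq:rule_3}, the first term becomes $(C/\lambda)\,\varepsilon\log^2(1/\varepsilon)$. Since that is slower than $\varepsilon\log(1/\varepsilon)$, the union bound over the window must be sharpened. Instead of requiring no deviation in the whole window, we only count deviations within the last $O(1/\delta')$-effective steps needed to overturn a margin. A deviation at lag $m$ is harmful only if the post-deviation evidence has not yet dominated, and this happens with probability summable in $m$, giving an $O(\varepsilon/\delta)$ contribution. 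This yields $\kappa\varepsilon\log(1/\varepsilon)$, while $e^{-\xi/\delta} = \varepsilon^{\xi/\lambda} = o(\varepsilon)$ precisely because $\lambda<\xi$. I expect this refined accounting of deviation lags to be the second technical difficulty.
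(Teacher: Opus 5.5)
Your architecture matches the paper's: unroll along the $\sigma$-orbit, split off an irregular-jump event on a window, then apply a Chernoff bound to an exponentially weighted, $M$-periodic sum of log-likelihood ratios. However, the window length in Step~2 breaks both conclusions. With $K_\varepsilon=\lceil c\log(1/\varepsilon)/\delta\rceil$ you get $\mathbb{P}(G_k^c)=O(\varepsilon\log(1/\varepsilon)/\delta_\varepsilon)$, and this need not vanish under \eqref{eq:delta_1}--\eqref{eq:delta_2}. For example, $\delta_\varepsilon=\varepsilon\sqrt{\log(1/\varepsilon)}$ satisfies both conditions, yet $\varepsilon\log(1/\varepsilon)/\delta_\varepsilon=\sqrt{\log(1/\varepsilon)}\to\infty$. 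Your final bound is therefore vacuous there, and the sentence ``both terms vanish'' is false. Under \eqref{eq:rule_3} the same choice produces the extra $\log(1/\varepsilon)$ factor. You then try to remove it with a lag-refinement heuristic that is never actually carried out, and which would itself need to control the logits left behind by a deviation.

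The missing idea is that the pre-window contribution does not have to be $o(1)$; a bounded constant suffices. Take the window $T_\varepsilon=\alpha/\delta_\varepsilon$ with $\alpha$ a fixed constant. Each step is a convex combination and $\|Pv\|_\infty=\|v\|_\infty$, so induction gives $\|\widetilde{\boldsymbol{w}}_k\|_\infty\le W=\max\{\|\widetilde w_0\|_\infty,\max_y\|\log(\widetilde E^\top y)\|_\infty\}$ for all $k$ (Lemma~\ref{lemma:bound_w}). Hence the inherited part of any logit difference is at least $-2W(1-\delta_\varepsilon)^{\alpha/\delta_\varepsilon}\approx-2We^{-\alpha}$. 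This constant only shifts the Chernoff exponent. Let $\Lambda_n(\cdot;x)$ be the log-MGF of the phase-$n$ log-likelihood ratio and $d_n(x)>0$ its mean, a KL divergence. Take $t=-1/\gamma_\varepsilon$ with $\gamma_\varepsilon=1-(1-\delta_\varepsilon)^M$. The paper then obtains the bound $\exp\{\gamma_\varepsilon^{-1}[\sum_{n\in\Gamma(x)}\int_0^{-1/M}\tau^{-1}\Lambda_n(\tau;x)\,d\tau+e^{-\alpha}(\tfrac1M\sum_{n\in\Gamma(x)}d_n(x)+2W)+o(1)]\}$. The integral term is strictly negative, because $\Lambda_n(0;x)=\Lambda_n(-1;x)=0$ and strict convexity give $\Lambda_n<0$ on $(-1,0)$. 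A large enough $\alpha$ therefore keeps the bracket negative, and this term decays like $e^{-c/\delta_\varepsilon}$. Meanwhile $\mathbb{P}(\text{IrJ in window})\le -q_{\min}(\alpha/\delta_\varepsilon+1)\varepsilon$ with $q_{\min}=\min_j q_{n_j j}<0$, which vanishes under exactly \eqref{eq:delta_2}. Under \eqref{eq:rule_3} it equals $(-q_{\min}\alpha/\lambda)\,\varepsilon\log(1/\varepsilon)+O(\varepsilon)$, which is precisely where $\kappa=-q_{\min}\alpha/\lambda$ comes from. For $\lambda<\xi$ and $\alpha$ large, the Chernoff term is $o(\varepsilon\log(1/\varepsilon))$, so no refined accounting of deviation lags is needed.
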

\vspace{-1em}
\begin{proof}
    Let $\varepsilon>0$ and $k \in \mathbb{N}$. For any $T_\varepsilon >0$, we can write the instantaneous error probability as follows:
    \vspace{-0.5em}
    \allowdisplaybreaks
    \begin{align}
    \label{eq:prob_upper}
    p_{k} &= \mathbb{P} \left[ \arg\max_{x \in \mathcal{X}} \boldsymbol{w}_{k}(x) \neq \boldsymbol{x}^\star_k, \text{ no IrJ in } [k - T_\varepsilon, k] \right] \notag\\
    &\;\, + \mathbb{P} \left[  \arg\max_{x \in \mathcal{X}} \boldsymbol{w}_{k}(x) \neq \boldsymbol{x}^\star_k, \text{ IrJ in } [k - T_\varepsilon, k] \right] \notag\\
    &\leq \mathbb{P} \left[ \arg\max_{x \in \mathcal{X}} \boldsymbol{w}_{k}(x) \neq \boldsymbol{x}^\star_k, \text{ no IrJ in } [k - T_\varepsilon, k] \right] \notag\\
    &\quad + \mathbb{P} \left[ \text{IrJ in } [k - T_\varepsilon, k] \right],
    \end{align}
    where “IrJ” stands for \emph{irregular jumps}. An irregular jump is observed at instant $k$ if $\boldsymbol{x}^\star_{k-1}$ to $\boldsymbol{x}^\star_{k}$ does not follow matrix $D$. We would like to properly choose $T_\varepsilon$ such that both terms in  \eqref{eq:prob_upper} are small. For each $\delta_\varepsilon$, choose 
    \vspace{-0.2em}
    \begin{equation}
    \label{eq:T_epsilon}
        T_\varepsilon \triangleq \frac{\alpha}{\delta_\varepsilon},\vspace{-0.3em}
    \end{equation}
    where $\alpha > 0$. As $\varepsilon$ goes to 0, $\delta_\varepsilon$ goes to 0 as well according to \eqref{eq:delta_1}. For $\delta_\varepsilon$ small enough, it is possible to find a $h_\varepsilon > 0$ such that \vspace{-0.7em}
    \begin{align}
    \label{eq:h_epsilon}
        h_\varepsilon \triangleq \left\lfloor \dfrac{T_\varepsilon}{M} \right\rfloor,
    \end{align}
    which means $T_\varepsilon = \frac{\alpha}{\delta_\varepsilon} \geq h_\varepsilon M$, where $M$ is the order of the permutation matrix $P$ according to Lemma \ref{lemma:P_order}.
    We denote\vspace{-0.4em}
    \begin{align}
        I_k \triangleq [k-h_\varepsilon M+1,\,k]
    \end{align}
    Since $\{\text{no IrJ in } [k - T_\varepsilon, k]\} \subseteq \{\text{no IrJ in } I_k\}$, we further upper bound \eqref{eq:prob_upper} by:
    \vspace{-0.5em}
    \begin{flalign}
    \label{eq:prob_upper_long}
    &\hspace*{-0.15em} \!\!\sum_{x^\prime \in \mathcal{X} } \!\mathbb{P} \!\left[ \arg\max_{x \in \mathcal{X}} \boldsymbol{w}_{k}(x) \!\neq\! \boldsymbol{x}^\star_k \middle\vert \text{no IrJ in } I_k, \boldsymbol{x}^\star_{k-h_\varepsilon M} \!=\! x^\prime\!\right] \notag\\
    &\hspace*{-0.15em}\quad \mathbb{P} \left[\text{no IrJ in } I_k \middle\vert \boldsymbol{x}^\star_{k-h_\varepsilon M}  = x^\prime\right] \mathbb{P} \left[ \boldsymbol{x}^\star_{k-h_\varepsilon M}  = x^\prime\right]\notag\\
    &\hspace*{-0.15em}\quad+ \mathbb{P} \left[ \text{IrJ in } [k - T_\varepsilon, k] \right] \notag\\
    &\hspace*{-0.15em}\stackrel{(a)}{\leq} \!\!\sum_{x^\prime \in \widetilde{\mathcal{X}} } \!\mathbb{P} \!\left[ \arg\max_{x \in \widetilde{\mathcal{X}}} \widetilde{\boldsymbol{w}}_{k}(x) \!\neq\! \boldsymbol{x}^\star_k \middle\vert \text{no IrJ in } I_k, \boldsymbol{x}^\star_{k-h_\varepsilon M} \!=\! x^\prime\!\right]  \notag\\
    &\hspace*{-0.15em}\quad + \!\!\!\!\sum_{x^\prime \in \mathcal{X} \setminus \widetilde{\mathcal{X}} } \!\!\!\! \mathbb{P} \left[ \boldsymbol{x}^\star_{k-h_\varepsilon M} \!=\! x^\prime\right] \!+\! \mathbb{P} \left[ \text{IrJ in } [k - T_\varepsilon, k] \right],\vspace{-0.5em}
    \end{flalign}
    where $(a)$ follows since probabilities are smaller or equal to $1$ and $\arg\max_{x \in \mathcal{X}} \boldsymbol{w}_{k}(x) = \arg\max_{x \in \widetilde{\mathcal{X}}} \widetilde{\boldsymbol{w}}_{k}(x)$
    due to the $-\infty$ initialization of $w_0$ for the transient states, as specified in Assumption~\ref{ass:init_dynamics_2}.
    
    From Lemma \ref{lemma:lim_other}, since $h_\varepsilon M$ does not change as $k \rightarrow \infty$, we have:
    \begin{align}
        &\lim_{\varepsilon \to 0} \limsup_{k\rightarrow\infty} \sum_{x^\prime \in \mathcal{X} \setminus \widetilde{\mathcal{X}} }  \mathbb{P} \left[ \boldsymbol{x}^\star_{k-h_\varepsilon M}  = x^\prime\right] = 0,\label{eq:transient}\\
        &\lim_{\varepsilon \to 0} \limsup_{k \rightarrow \infty} \mathbb{P} \left[ \text{IrJ in } [k - T_\varepsilon, k] \right] = 0.\label{eq:lim_p_jumps}
    \end{align}
    By Lemma \ref{lemma:lemma_main}, under the specified evolution pattern of the true states, we have:
    \begin{align}
    \label{eq:lemma_c_2}
        \lim_{\varepsilon \rightarrow0} \limsup_{k\rightarrow \infty}\ \mathbb{P}& \left[ \arg\max_{x \in \widetilde{\mathcal{X}}} \widetilde{\boldsymbol{w}}_{k}(x) \!\neq\! \boldsymbol{x}^\star_k \middle\vert \text{no IrJ in } I_k, \right. \notag\\
        & \left.\boldsymbol{x}^\star_{k-h_\varepsilon M} \!=\! x^\prime\right] = 0, \ \forall  x^\prime \in \widetilde{\mathcal{X}}
    \end{align}
    Now we return to \eqref{eq:prob_upper_long} and complete the proof:
\begin{align}
\label{eq:final}
    &\lim_{\varepsilon \rightarrow0} \limsup_{k\rightarrow \infty} p_k \stackrel{(a)}{=} 0
\end{align}
where $(a)$ follows from \eqref{eq:transient}, \eqref{eq:lim_p_jumps}, and \eqref{eq:lemma_c_2}. 
    The derivations of Lemmas \ref{lemma:lim_other} and \ref{lemma:lemma_main}, as well as the error decaying rate under the $\delta_\varepsilon$ rule \eqref{eq:rule_3} can be found in Appendix \ref{proof:thm1}.  
\end{proof}
\vspace{-1em}
Theorem~\ref{thm:1} shows that, by appropriately controlling the adaptation parameter, linear recurrent memory can achieve vanishing state-decoding error, with a decay rate that matches that of the Bayes-optimal decoder.
This result provides a quantitative and strong performance bound on ALF’s state-decoding capability, demonstrating that the gap between ALF and the nonlinear optimal filter is small.
\vspace{-0.2em}
\subsection{Linear Filters for Action-Controlled HMMs}
In action-controlled HMMs, we assume each transition matrix is nearly a permutation:
\begin{align}
\label{eq:near_deter_2}
   T(a) = P(a) + \varepsilon Q(a), \quad a \in \mathcal{A},
\end{align}
with each $T(a)$ being a subclass of nearly-deterministic matrices. We propose the following time-varying ALF:
\begin{align}
    \label{eq:near_deter_ext}
    \boldsymbol{w}_{k} = P(\boldsymbol{a}_{k-1})(1-\delta)\,\boldsymbol{w}_{k-1} 
    + \delta \log(E^\top \boldsymbol{y}_{k}),
\end{align}
Moreover, we assume the following:
\begin{assumption}
\label{ass:last_ass}
    The initial and emission matrix $E$ satisfy:
    \begin{enumerate}\vspace{-0.8em}
    \item The initialization $w_0$ is finite, i.e., $\|w_0\| < \infty$.\vspace{-0.5em}
    \item All entries of $E$ are strictly positive, and no two columns of $E$ are identical.
\end{enumerate}
\end{assumption}
\vspace{-0.3em}

\begin{theorem}
    Consider transition matrices satisfying \eqref{eq:near_deter_2}. Under Assumption \ref{ass:last_ass}, the results in Theorem \ref{thm:1} hold for time-varying ALF.
\end{theorem}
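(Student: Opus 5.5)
I will mirror the proof of Theorem~\ref{thm:1}, replacing the fixed permutation by the action-dependent product $\Pi_{k,j}\triangleq P(\boldsymbol{a}_{k-1})P(\boldsymbol{a}_{k-2})\cdots P(\boldsymbol{a}_{j})$. There are no transient states here ($r=0$), so the transient-state term in \eqref{eq:prob_upper_long} disappears. The $-\infty$ initialization is also unnecessary, and a finite $w_0$ suffices.

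\textbf{Step 1 (window decomposition).} Exactly as in \eqref{eq:prob_upper}, I split
\[
p_k \le \mathbb{P}[\text{error},\ \text{no IrJ in } [k-T_\varepsilon,k]] + \mathbb{P}[\text{IrJ in } [k-T_\varepsilon,k]],
\]
with $T_\varepsilon=\alpha/\delta_\varepsilon$. An irregular jump now means $\boldsymbol{x}^\star_{j}\neq\sigma_{\boldsymbol{a}_{j-1}}(\boldsymbol{x}^\star_{j-1})$. At each step this has probability at most $\varepsilon\max_{a,j}|q_{jj}(a)|$, whatever policy generates the actions. A union bound then gives $\mathbb{P}[\text{IrJ}]\le c\,\varepsilon T_\varepsilon = c\alpha\,\varepsilon/\delta_\varepsilon\to 0$ by \eqref{eq:delta_2}, which replaces \eqref{eq:lim_p_jumps}.

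\textbf{Step 2 (unrolling on a jump-free window).} Unrolling \eqref{eq:near_deter_ext} gives
\[
\boldsymbol{w}_k = (1-\delta)^{k}\Pi_{k,0}w_0 + \delta\sum_{j=1}^{k}(1-\delta)^{k-j}\,\Pi_{k,j}\log(E^\top\boldsymbol{y}_j).
\]
On the event of no IrJ in the window, $\Pi_{k,j}^\top u_{\boldsymbol{x}^\star_k}=u_{\boldsymbol{x}^\star_j}$. More generally, each candidate $x$ at time $k$ is traced back along the same action-induced permutations to a distinct "virtual trajectory" $x_j=\Pi_{k,j}^\top$-image of $x$. Therefore
\[
\boldsymbol{w}_k(\boldsymbol{x}^\star_k)-\boldsymbol{w}_k(x)=\delta\sum_j(1-\delta)^{k-j}\log\frac{e_{\boldsymbol{y}_j,\boldsymbol{x}^\star_j}}{e_{\boldsymbol{y}_j,x_j}}+\text{(terms from before the window)}.
\]
Because permutations are bijections, $x\neq\boldsymbol{x}^\star_k$ implies $x_j\neq\boldsymbol{x}^\star_j$ for \emph{every} $j$. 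Conditioned on the true path, each summand then has mean $D_{\mathrm{KL}}(E_{\boldsymbol{x}^\star_j}\|E_{x_j})\ge \eta_{\min}>0$. This is finite by positivity of $E$ and strictly positive because no two columns of $E$ are identical (Assumption~\ref{ass:last_ass}). This single-step separation is why the stacked-matrix condition is replaced by distinct columns of $E$: the action sequence is arbitrary, so no $M$-periodicity can be exploited. The pre-window contribution is bounded by $(1-\delta)^{T_\varepsilon}$ times $\|w_0\|_\infty+\max|\log e_{ij}|/1$, i.e.\ $O(e^{-\alpha})$, since permutations preserve $\|\cdot\|_\infty$. Letting $\alpha\to\infty$ slowly makes this term negligible.

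\textbf{Step 3 (concentration, the main obstacle).} I must show that the weighted sum of log-likelihood ratios is positive with probability tending to one, i.e.\ the analogue of Lemma~\ref{lemma:lemma_main}. The difficulty is that the actions, and hence the virtual trajectories $x_j$, are chosen adaptively by the policy based on past observations. Consequently the summands are not independent given a fixed path. I would handle this with a martingale argument. Conditioned on $\boldsymbol{x}^\star_j$ and the history $\mathcal{F}_{j-1}$, which determines $\boldsymbol{a}_{j-1}$, the increment $Z_j=\log(e_{\boldsymbol{y}_j,\boldsymbol{x}^\star_j}/e_{\boldsymbol{y}_j,x_j})$ has conditional mean at least $\eta_{\min}$ and is bounded. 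However, $x_j$ depends on future actions up to $k$, so I would instead take a union bound over the at most $N$ terminal candidates. Equivalently, I fix the backward pairing through the bijection and note that $x_j\neq\boldsymbol{x}^\star_j$ holds for every pairing. The bound should then be taken uniformly over all pairs $(\boldsymbol{x}^\star_j,x_j)$ with $x_j\ne \boldsymbol{x}^\star_j$ by working with $\min_{x\neq x'}$ of the conditional Chernoff exponent. An Azuma/Chernoff bound for the exponentially weighted bounded martingale differences then gives
\[
\mathbb{P}[\text{error}\mid\text{no IrJ}]\le N\exp(-\xi'/\delta)\quad\text{up to subexponential factors.}
\]
This tends to $0$ as $\delta_\varepsilon\to0$, which proves the first claim. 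For the rate, I choose $\delta_\varepsilon=\lambda/\log(1/\varepsilon)$ with $\lambda<\xi'$, where $\xi'$ is the large-deviation exponent defined via $\min_{x\ne x'}$ over columns of $E$. This makes the error term $o(\varepsilon)$, so the IrJ term $c\,\varepsilon\log(1/\varepsilon)\cdot\alpha/\lambda$ dominates and yields $\kappa\,\varepsilon\log(1/\varepsilon)$. The step I expect to be delicate is making the large-deviation bound rigorous when the action-dependent pairing is only revealed at time $k$. I would resolve this by bounding, via the uniform minimum over pairs, the moment generating function conditioned step-by-step.
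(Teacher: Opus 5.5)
Your skeleton matches the paper's proof, which consists only of the remark that the result follows analogously to Theorem~\ref{thm:1}; your Steps~1--2 are that analogy. You also correctly see what the one-liner hides. Once actions depend on past observations, conditioning on a jump-free window and its initial state no longer fixes the state path, so the independence used in step (a) of \eqref{eq:lambda_hM} is lost. There is also no period $M$ along which summands are identically distributed, which is why Assumption~\ref{ass:last_ass} asks for pairwise distinct columns of $E$. Replacing independence by a step-by-step conditional MGF bound is the right move. The relevant per-step quantity is the worst-pair log-MGF $\bar\Lambda\triangleq\max_{i\neq i'}\Lambda_{i,i'}$, with $\Lambda_{i,i'}(t)\triangleq\log\sum_{s=1}^{S}e_{si}^{1+t}e_{si'}^{-t}$. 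This route gives a guarantee uniform over history-dependent policies.

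The gap is in exactly the step you flag, and uniformity over pairs does not close it. The peeling bound $\mathbb{E}[e^{tZ_j}\mid\mathcal{F}_{j-1}]\le e^{\bar\Lambda(t)}$ requires the pair $(\boldsymbol{x}^\star_j,x_j)$ to be $\mathcal{F}_{j-1}$-measurable. For a fixed terminal label $x$ it is not: the traced state $x_j$ depends on $\boldsymbol{a}_j$, which the policy chooses after seeing $\boldsymbol{y}_j$. Against a competitor allowed to depend on $\boldsymbol{y}_j$, the per-step drift is $\sum_s e_{si}\min_{i'\neq i}\log(e_{si}/e_{si'})$. This can be negative even under Assumption~\ref{ass:last_ass}: the columns $(0.4,0.3,0.3)^\top$ and its cyclic shifts give $-0.2\log(4/3)$. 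So a union bound over terminal labels plus $\min_{x\neq x'}$ does not yield a proof.

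The missing observation is that the backward trace of a terminal label is the forward trajectory of its label $z$ at the window start $k_0$, namely $u_{x_j}=\Pi_{j,k_0}u_z$. Here $\Pi_{j,k_0}$ involves only $\boldsymbol{a}_{k_0},\dots,\boldsymbol{a}_{j-1}$. By bijectivity, the error event is the union over the $N-1$ starting labels $z\neq\boldsymbol{x}^\star_{k_0}$, so union-bound over those instead. Every competitor trajectory is then predictable, and your martingale Chernoff bound goes through with exponent $\xi'=-\int_0^1\bar\Lambda(-u)/u\,du$. The worst pair is taken pointwise in $u$ because it may change along the window. Since $\bar\Lambda$ need not be $C^2$, Lemma~\ref{lemma:lambda_sum} does not apply directly. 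Use instead that $\bar\Lambda(-u)/u$ is nondecreasing in $u$, inherited from item (5) of Lemma~\ref{lemma:Lambda}, which gives the needed Riemann-sum comparison.

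Two smaller repairs are also needed:
\begin{enumerate}
\item Do not condition on ``no IrJ''. Under an adaptive policy, that event reweights action sequences by the product of their no-jump probabilities. Bound $\mathbb{P}[\text{error},\text{no IrJ}]$ instead, by coupling with the process whose window transitions follow $P(\boldsymbol{a})$ exactly.
\item Keep $\alpha$ fixed for the rate claim rather than sending it to infinity, because $\kappa\propto\alpha$. The pre-window term enters the exponent as $2We^{-\alpha}/\delta_\varepsilon$, so choose $\alpha$ large enough that $\lambda$ plus this $O(e^{-\alpha})$ correction stays below $\xi'$, as the paper does with $\alpha\ge\alpha_1(x)$.
\end{enumerate}
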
 
\vspace{-1.1em}
\begin{proof}
    The result follows analogously to the proof of Theorem~\ref{thm:1}.
\end{proof}
\vspace{-0.8em}
Our results show the ability of time-varying ALF in recovering the true states. Experiments in the next section underline that time-varying ALF enables learning performant policies. Moreover, both the time-invariant and time-varying ALF resemble practical linear RNNs, with eigenvalues near the unit circle that support long-range dependencies.

\section{Experiments}
\label{sec:exp}
In this section, we first illustrate the theoretical properties of ALF with a numerical example in the HMM setting. We then introduce a partially observable game, the \emph{RingWorld}, to show that time-varying ALF is a suitable learning target for linear RNNs.
\begin{remark}
    In this section, we use ``ALF" for both its time-invariant and time-varying variants for brevity. 
\end{remark}
\vspace{-0.5em}
\subsection{Illustrative Example}
\label{sec:two_state_exp}
We consider a HMM with state space $\mathcal{X} = \{x_1, x_2\}$ and observation space $\mathcal{Y} = \{u_1, u_2\}$.  
Following Definition~\ref{def:near_det}, the nearly-deterministic transition matrix $T$ and the emission matrix $E$ are:
\begin{align}
\label{eq:T_example}
    T = \begin{bmatrix}
        \varepsilon & 1-\varepsilon \\
        1-\varepsilon & \varepsilon
    \end{bmatrix}, \quad 
    E = \begin{bmatrix}
        0.9 & 0.1 \\
        0.1 & 0.9
    \end{bmatrix}.
\end{align}
Since $D$ is a two-state permutation matrix, we have $r=0$, and ALF \eqref{eq:near_deter_dynamic} simplifies accordingly.  
Similar to \citet{khammassi2024adaptive}, we select the following $\delta_\varepsilon$ values that satisfy conditions \eqref{eq:delta_1} and \eqref{eq:delta_2}.
\vspace{-0.5em}
\begin{align}
    \delta_\varepsilon = \varepsilon^{1/2}, \quad 
    \delta_\varepsilon = \frac{0.7}{\log(1/\varepsilon)}.\vspace{-0.7em}
\end{align}
The latter choice also satisfies \eqref{eq:rule_3}; the detailed derivations are provided in Appendix \ref{remark:xi}. 
In addition to valid $\delta_\varepsilon$ choices, we also consider three invalid ones:
\vspace{-0.5em}
\begin{align}
    \delta_\varepsilon = \varepsilon^2, \quad 
    \delta_\varepsilon = 0, \quad 
    \delta_\varepsilon = 1,\vspace{-0.5em}
\end{align}
where $\delta_\varepsilon = \varepsilon^2$ and $\delta_\varepsilon = 0$ violate \eqref{eq:delta_2}, and $\delta_\varepsilon = 1$ violates both \eqref{eq:delta_1} and \eqref{eq:delta_2}.
We choose $23$ values of $1/\varepsilon$ uniformly from $[30,250]$ and run $20{,}000$ Monte Carlo iterations for each $\varepsilon$ to estimate the instantaneous error probabilities. As the error stabilizes after $1000$ steps (see Appendix \ref{sec:illus_exp}), we adopt $p_{1000}$ as an empirical proxy for $\limsup_{k \to \infty} p_k$ and study its variation as $1/\varepsilon$ increases, as shown in Figure~\ref{fig:long_run_error}. Valid choices yield vanishing long-run error probabilities as $\varepsilon \to 0$, with only a small performance gap. Consistently, $\delta_\varepsilon=0.7/\log(1/\varepsilon)$ performs better than $\delta_\varepsilon=\varepsilon^{1/2}$. For invalid choices, $p_{1000}$ does not vanish; for example, $\delta_\varepsilon=\varepsilon^2$ is too small to track the state transitions. Together, these observations provide evidence that Theorem \ref{thm:1} accurately describes the state-decoding in practice. These observations are also consistent with those reported for $D=I$ in \citet{khammassi2024adaptive}.
\begin{figure}
    \centering
    \includegraphics[width=3in]{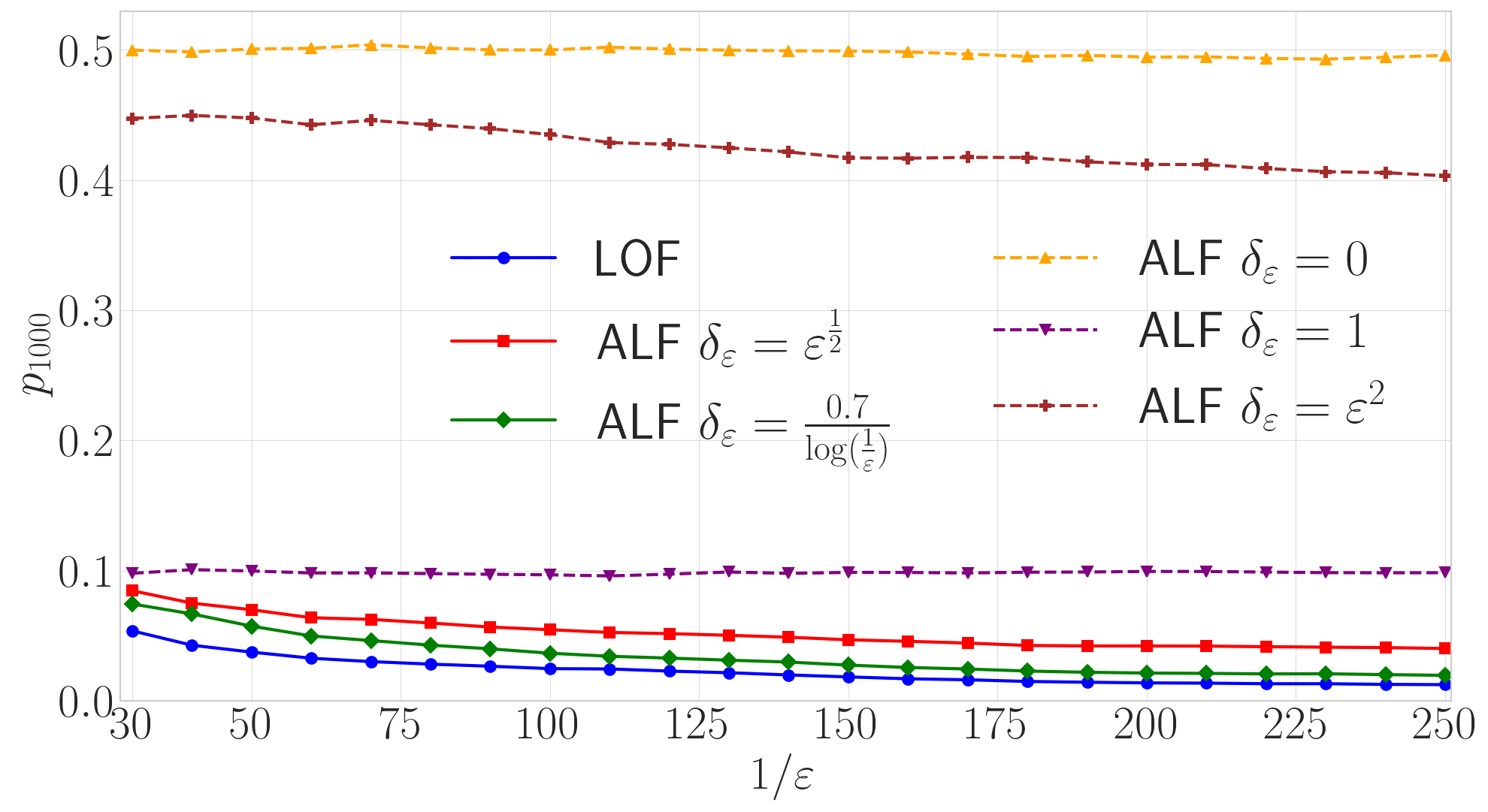}
    \caption{Long-run error probability versus $1/\varepsilon$ for ALF and LOF. Solid/dashed lines correspond to valid/invalid $\delta_\varepsilon$ selections, respectively.}
    \label{fig:long_run_error}
\end{figure}
\vspace{-0.6em}
\subsection{RingWorld Game}
\label{sec:ringworld}
\vspace{-0.3em}
In this experiment, we study a general POMDP with action-dependent state transitions. Motivated by a classical aliased cyclic-state benchmark from the predictive modeling literature (e.g., \citealp[]{tanner2005temporal, schlegel2023investigating}), we design a \emph{RingWorld} game with stochastic observations and more complex action-dependent dynamics. 

The state space is $\mathcal{X}=\{1,\ldots,12\}$ (positions on a ring) and the action set is $\mathcal{A}=\{\text{CW}1,\text{CW}2,\text{CCW}1,\text{CCW}2\}$, where $\text{CW}1/\text{CCW}1$ move one step clockwise/counterclockwise.  
The agent may overshoot or slip, each with probability $\varepsilon=0.05$, making the transitions nearly permutation. Four beacons are placed around the ring.  
At each step, the agent observes the index of the strongest beacon, encoded as $u_i \in \mathcal{Y}=\{u_1,\ldots,u_4\}$. The closer the agent is to a beacon, the more likely it will be observed.
The states $6, 11, \text{and } 12$ are ``trap" states,
and the states $2$ and $3$ are goal states. 
Entering a trap state incurs a penalty, while entering a goal state yields a reward. An episode terminates after a fixed number of steps and the total episodic return lies in \([-1,1]\).
The objective is to avoid or quickly escape the trap states while remaining in the goal states under partial observability. The setting is illustrated in Figure~\ref{fig:ringworld_illus}.

\begin{figure}
    \centering
    \includegraphics[width=0.7\linewidth]{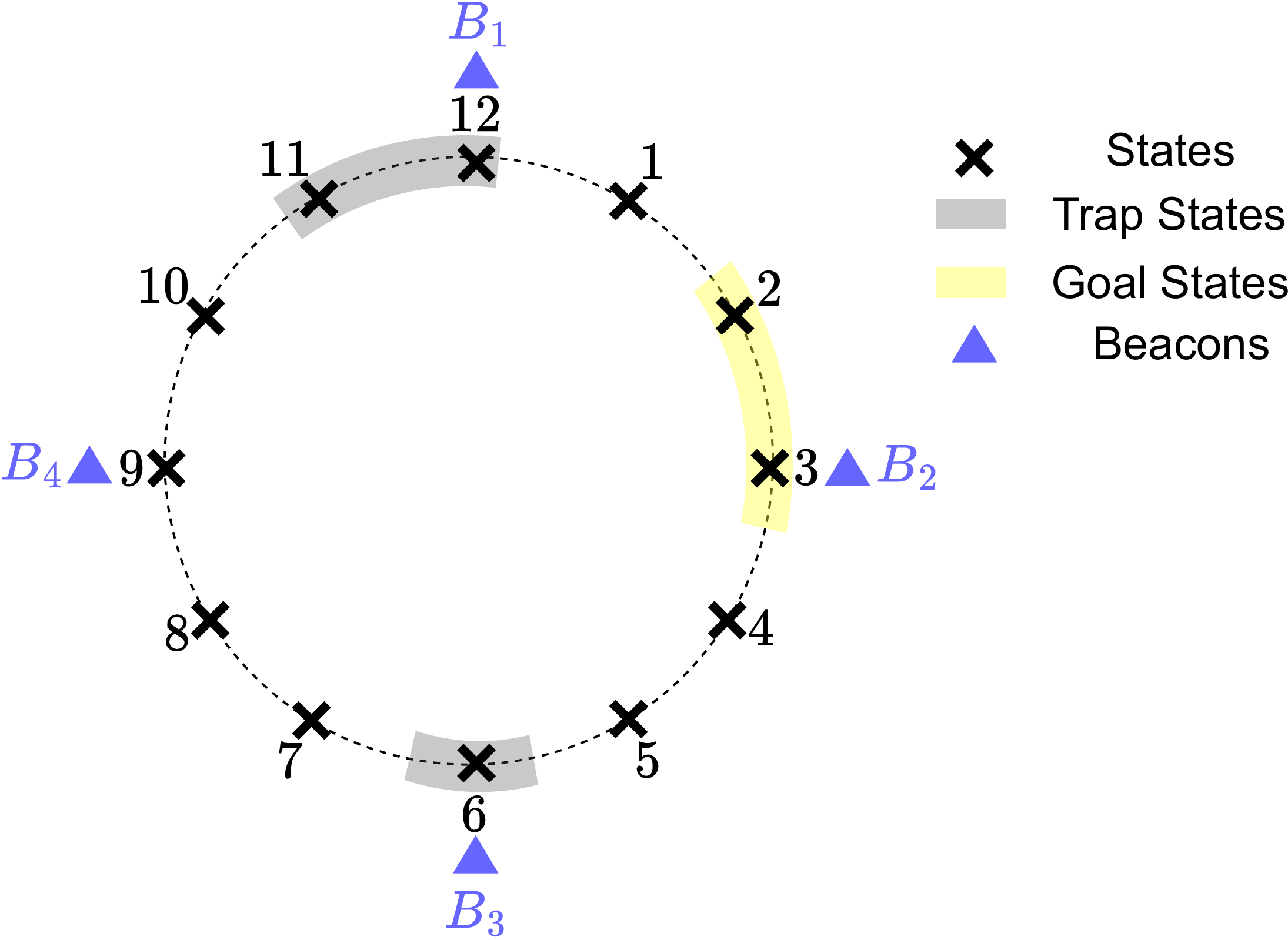}
\caption{RingWorld environment. $B_i$ represents the $i$-th beacon.}
\label{fig:ringworld_illus}
\end{figure}

We train Proximal Policy Optimization agents (PPO; \citealp[]{schulman2017proximal}) to act in this environment under four types of memory: \textbf{LOF}, \textbf{ALF}, an \textbf{S5-based memory} which is a single S5 layer~\citep{smith2022simplified} with a two–layer MLP encoder $\phi(\cdot)$, and an ALF-inspired deep learning variant called \textbf{Deep ALF}. Specifically, \textbf{Deep ALF} is obtained by parameterizing the ALF recursion in complex space and treating the parameters as learnable, similar to standard approaches for designing linear RNNs. For \textbf{LOF} and \textbf{ALF}, the memory output is either fed directly to the actor and critic, or passed through a softmax to form a (proxy) belief vector. Both the actor and critic are implemented as two–layer MLPs. For \textbf{LOF} and \textbf{ALF}, the memory is fixed and obtained from environment  knowledge (\textbf{ALF} follows \eqref{eq:near_deter_ext}, while \textbf{LOF} uses \eqref{eq:log_filter}), and only the actor and critic are trained. For both \textbf{S5-based memory} and \textbf{Deep ALF}, the memory parameters are randomly initialized and jointly trained with the actor and critic.

\begin{figure}
    \centering    \includegraphics[width=3in]{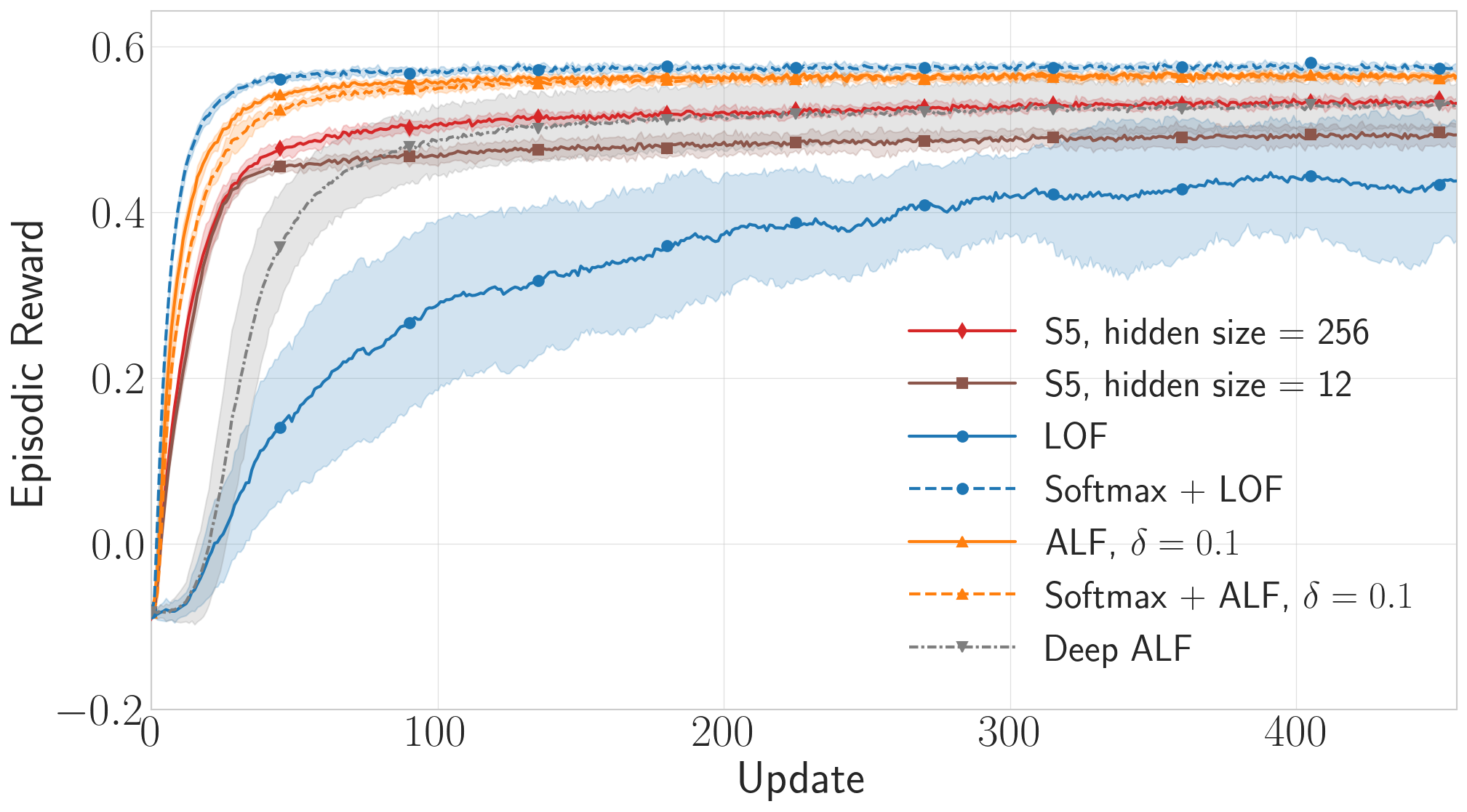}
    \vspace{-0.2em}
    \caption{Episodic return versus update. The shaded area is the standard deviation across 8 random seeds. ALF/LOF are constructed from environment knowledge, whereas S5 and Deep ALF are trained from scratch.}
    \label{fig:ringwold_results}
    \vspace{-1em}
\end{figure}

As shown in Figure~\ref{fig:ringwold_results}, \textbf{ALF} with direct outputs converges quickly to a strong return, whereas \textbf{LOF} trains slowly with high variance across seeds. The instability stems from the logits in \textbf{LOF} drifting increasingly negative over time, while \textbf{ALF} remains bounded (Lemma~\ref{lemma:bound_w}). Applying a softmax to \textbf{LOF} recovers a belief vector, stabilizes training, and yields the best returns in this experiment. For the \textbf{S5-based memory}, increasing the hidden size from $12$ (the memory size of \textbf{LOF}, \textbf{ALF}, and \textbf{Deep ALF}) to $256$ allows it to match the performance of \textbf{Deep ALF}, but it still falls short of \textbf{ALF} and softmax-processed \textbf{LOF}. Notably, \textbf{Deep ALF} is more parameter-efficient than the \textbf{S5-based memory}, as its encoder has a number of parameters comparable to matrix $E$, rather than a two-layer MLP. Overall, these results show that \textbf{ALF} provides an efficient and reliable linear memory, and \textbf{Deep ALF} achieves better performance than the \textbf{S5-based memory} using fewer parameters.

Moreover, when environment knowledge is partially or fully available, \textbf{Deep ALF} can leverage this information to obtain improved initialization and achieve higher returns (see Figure~\ref{fig:deep_alf_comp_results}). In contrast, it remains unclear how such prior knowledge can be incorporated into existing linear recurrent networks. More details can be found in Appendix~\ref{sec:ringworld_details}.

\section{Discussion and Limitations}
Our analysis focuses on environments with (near) deterministic transitions, which provide a sufficient regime for linear memory to perform well. While this regime may appear restrictive, such environments are not uncommon in existing benchmarks (e.g., the Passive T-Maze task in \citet{lu2024rethinking} and the RepeatPrevious task in \citet{lu2023structured}). Moreover, although our theoretical results are developed for finite state, observation, and action spaces, many continuous-control problems can be approximated by finite, nearly deterministic POMDPs through discretization (e.g., robotic control tasks in MuJoCo \citep{todorov2012mujoco}). Extending the evaluation to these more complex environments is an important direction for future work.

A natural question is how linear memory performs outside the (near) deterministic regime. We note that, in \emph{highly stochastic} environments, sequence models may offer limited gains over memoryless models for state decoding. For example, in the fully stochastic version of the two-state example from Section~\ref{sec:two_state_exp} ($\varepsilon=0.5$; see \eqref{eq:T_example}), the first term in the LOF update \eqref{eq:log_filter} becomes proportional to the all-one vector, and the optimal filter reduces to a myopic estimator based solely on $\log(E^\top \boldsymbol{y})$. Consequently, past observations no longer provide useful information for state estimation. This suggests that, when the environment dynamics are sufficiently stochastic, leveraging long-term history may provide little benefit, regardless of whether one uses linear RNNs or other sequence models.

\section{Conclusions}
We justify the use of linear recurrent neural networks (RNNs) in partially observable reinforcement learning by constructing specific instances: First,  we show that linear recurrent memory can exactly reproduce the log-belief dynamics under deterministic state transitions. Second, we introduce the Adaptive Logit Filter (ALF), which asymptotically recovers the true states under nearly deterministic state transitions. Theoretical properties of \textbf{ALF} are illustrated through a numerical example, and we further demonstrate that \textbf{ALF} provides a strong learning target in a small POMDP environment, the \emph{RingWorld}. 


\vspace{-0.7em}
\section*{Impact Statement}
This paper presents work whose goal is to advance the field of machine learning. There are many potential societal consequences of our work, none
which we feel must be specifically highlighted here.

\bibliography{example_paper}

@article{lu2023structured,
  title={Structured state space models for in-context reinforcement learning},
  author={Lu, Chris and Schroecker, Yannick and Gu, Albert and Parisotto, Emilio and Foerster, Jakob and Singh, Satinder and Behbahani, Feryal},
  journal={Advances in Neural Information Processing Systems},
  volume={36},
  pages={47016--47031},
  year={2023}
}

@article{schulman2017proximal,
  title   = {Proximal policy optimization algorithms},
  author  = {Schulman, John and Wolski, Filip and Dhariwal, Prafulla and Radford, Alec and Klimov, Oleg},
  journal = {arXiv:1707.06347},
  year    = {2017}
}

@article{eberhard-2025-partially,
  title={Partially Observable Reinforcement Learning with Memory Traces},
  author={Eberhard, Onno and Muehlebach, Michael and Vernade, Claire},
  journal = {International Conference on Machine Learning},
  year      = {2025},
  volume    = {267},
  pages     = {14934--14949}
}

@article{morad2023popgym,
  title={{POPGym}: Benchmarking partially observable reinforcement learning},
  author={Morad, Steven and Kortvelesy, Ryan and Bettini, Matteo and Liwicki, Stephan and Prorok, Amanda},
  journal={arXiv:2303.01859},
  year={2023}
}

@book{meyer2023matrix,
  title={Matrix Analysis and Applied Linear Algebra},
  author={Meyer, C. D.},
  year={2023},
  publisher={SIAM}
}

@article{matta2016diffusion,
  title={Diffusion-based adaptive distributed detection: Steady-state performance in the slow adaptation regime},
  author={Matta, Vincenzo and Braca, Paolo and Marano, Stefano and Sayed, Ali H},
  journal={IEEE Transactions on Information Theory},
  volume={62},
  number={8},
  pages={4710--4732},
  year={2016},
  publisher={IEEE}
}

@article{khasminskii1996asymptotic,
  title={Asymptotic filtering for finite state {M}arkov chains},
  author={Khasminskii, Rafail and Zeitouni, Ofer},
  journal={Stochastic Processes and Their Applications},
  volume={63},
  number={1},
  pages={1--10},
  year={1996},
  publisher={Elsevier}
}

@article{yin2005lms,
  title={{LMS} algorithms for tracking slow {M}arkov chains with applications to hidden {M}arkov estimation and adaptive multiuser detection},
  author={Yin, Gang George and Krishnamurthy, Vikram},
  journal={IEEE Transactions on Information Theory},
  volume={51},
  number={7},
  pages={2475--2490},
  year={2005},
  publisher={IEEE}
}

@article{bordignon2021adaptive,
  title={Adaptive social learning},
  author={Bordignon, Virginia and Matta, Vincenzo and Sayed, Ali H},
  journal={IEEE Transactions on Information Theory},
  volume={67},
  number={9},
  pages={6053--6081},
  year={2021},
  publisher={IEEE}
}

@article{lu2024rethinking,
  title={Rethinking transformers in solving {POMDP}s},
  author={Lu, Chenhao and Shi, Ruizhe and Liu, Yuyao and Hu, Kaizhe and Du, Simon S and Xu, Huazhe},
  journal={International Conference on Machine
Learning},
volume={235},
pages={33089--33112},
  year={2024}
}

@article{smith2022simplified,
  title={Simplified state space layers for sequence modeling},
  author={Smith, Jimmy TH and Warrington, Andrew and Linderman, Scott W},
  journal={arXiv:2208.04933},
  year={2022}
}

@article{orvieto2023resurrecting,
  title={Resurrecting recurrent neural networks for long sequences},
  author={Orvieto, Antonio and Smith, Samuel L and Gu, Albert and Fernando, Anushan and Gulcehre, Caglar and Pascanu, Razvan and De, Soham},
  journal={International Conference on Machine Learning},
volume={202},
  pages={26670--26698},
  year={2023},
  organization={PMLR}
}

@article{kanai2017preventing,
  title={Preventing gradient explosions in gated recurrent units},
  author={Kanai, Sekitoshi and Fujiwara, Yasuhiro and Iwamura, Sotetsu},
  journal={Advances in Neural Information Processing Systems},
  volume={30},
pages={435--444},
  year={2017}
}

@article{pascanu2013difficulty,
  title={On the difficulty of training recurrent neural networks},
  author={Pascanu, Razvan and Mikolov, Tomas and Bengio, Yoshua},
  journal={International Conference on Machine Learning},
volume={28},
  pages={1310--1318},
  year={2013},
  organization={Pmlr}
}

@article{vaswani2017attention,
  title={Attention is all you need},
  author={Vaswani, Ashish and Shazeer, Noam and Parmar, Niki and Uszkoreit, Jakob and Jones, Llion and Gomez, Aidan N and Kaiser, {\L}ukasz and Polosukhin, Illia},
  journal={Advances in Neural Information Processing Systems},
  volume={30},
pages={5998--6008},
  year={2017}
}

@article{gu2021efficiently,
  title={Efficiently modeling long sequences with structured state spaces},
  author={Gu, Albert and Goel, Karan and R{\'e}, Christopher},
  journal={arXiv:2111.00396},
  year={2021}
}

@article{golowich2023planning,
  title={Planning and learning in partially observable systems via filter stability},
  author={Golowich, Noah and Moitra, Ankur and Rohatgi, Dhruv},
  journal={ACM Symposium on Theory of Computing},
  pages={349--362},
  year={2023}
}

@article{liu2022partially,
  title={When is partially observable reinforcement learning not scary?},
  author={Liu, Qinghua and Chung, Alan and Szepesv{\'a}ri, Csaba and Jin, Chi},
  journal={Conference on Learning Theory},
  pages={5175--5220},
  year={2022},
  organization={PMLR}
}

@article{efroni2022provable,
  title={Provable reinforcement learning with a short-term memory},
  author={Efroni, Yonathan and Jin, Chi and Krishnamurthy, Akshay and Miryoosefi, Sobhan},
  journal={International Conference on Machine Learning},
  pages={5832--5850},
  year={2022},
volume={162},
  organization={PMLR}
}

@article{hu2023optimal,
  title={Optimal aggregation strategies for social learning over graphs},
  author={Hu, Ping and Bordignon, Virginia and Vlaski, Stefan and Sayed, Ali H},
  journal={IEEE Transactions on Information Theory},
  volume={69},
  number={9},
  pages={6048--6070},
  year={2023},
  publisher={IEEE}
}

@book{krishnamurthy2016partially,
  title={Partially Observed Markov Decision Processes},
  author={Krishnamurthy, Vikram},
  year={2016},
  publisher={Cambridge University Press}
}

@article{shue1998exponential,
  title={Exponential stability of filters and smoothers for hidden {M}arkov models},
  author={Shue, Louis and Anderson, DO and Dey, Subhrakanti},
  journal={IEEE Transactions on Signal Processing},
  volume={46},
  number={8},
  pages={2180--2194},
  year={1998},
  publisher={IEEE}
}

@book{humphreys2004numbers,
  title={Numbers, Groups and Codes},
  author={Humphreys, John F and Prest, Mike Y},
  year={2004},
  publisher={Cambridge University Press}
}

@article{khammassi2024adaptive,
  author  = {Khammassi, Malek and Bordignon, Virginia and Matta, Vincenzo and Sayed, Ali H.},
  title   = {Adaptive Social Learning for Slow {M}arkov Chains},
  journal={IEEE Transactions on Signal Processing},
  volume={73},
  pages={3671-3687},     
  year    = {2025},
}

@article{grazzi2024unlocking,
  title={Unlocking state-tracking in linear {RNNs} through negative eigenvalues},
  author={Grazzi, Riccardo and Siems, Julien and Zela, Arber and Franke, J{\"o}rg KH and Hutter, Frank and Pontil, Massimiliano},
  journal={arXiv:2411.12537},
  year={2024}
}

@article{pmlr-v162-ni22a,
  title = 	 {Recurrent Model-Free {RL} Can Be a Strong Baseline for Many {POMDP}s},
  author =       {Ni, Tianwei and Eysenbach, Benjamin and Salakhutdinov, Ruslan},
journal = {International Conference on Machine Learning},
  pages = 	 {16691--16723},
  year = 	 {2022},
  volume = 	 {162},
  series = 	 {Proceedings of Machine Learning Research},
  publisher =    {PMLR},
}

@article{cayci2024finite,
  title={Finite-time analysis of natural actor-critic for pomdps},
  author={Cayci, Semih and He, Niao and Srikant, R},
  journal={SIAM Journal on Mathematics of Data Science},
  volume={6},
  number={4},
  pages={869--896},
  year={2024},
  publisher={SIAM}
}

@article{gray2006toeplitz,
  title={Toeplitz and circulant matrices: A review},
  author={Gray, Robert M},
  journal={Foundations and Trends{\textregistered} in Communications and Information Theory},
  volume={2},
  number={3},
  pages={155--239},
  year={2006},
  publisher={Now Publishers, Inc.}
}

@article{puterman1990markov,
  title={Markov decision processes},
  author={Puterman, Martin L},
  journal={Handbooks in operations research and management science},
  volume={2},
  pages={331--434},
  year={1990},
  publisher={Elsevier}
}

@article{gu2024mamba,
  title={Mamba: Linear-time sequence modeling with selective state spaces},
  author={Gu, Albert and Dao, Tri},
  journal={Conference on Language Modeling},
  year={2024}
}

@article{
yang2024gated,
title={Gated Linear Attention Transformers with Hardware-Efficient Training},
author={Songlin Yang and Bailin Wang and Yikang Shen and Rameswar Panda and Yoon Kim},
journal={International Conference on Machine Learning},
volume = {235},
pages = 	 {56501--56523},
year={2024}
}

@article{yang2024parallelizing,
  title={Parallelizing linear transformers with the delta rule over sequence length},
  author={Yang, Songlin and Wang, Bailin and Zhang, Yu and Shen, Yikang and Kim, Yoon},
  journal={Advances in Neural Information Processing Systems},
  volume={37},
  pages={115491--115522},
  year={2024}
}

@article{tanner2005temporal,
  title={Temporal-Difference Networks with History},
  author={Tanner, Brian and Sutton, Richard S},
  journal={International Joint Conference on Artificial Intelligence},
  pages={865--870},
  year={2005}
}

@article{
schlegel2023investigating,
title={Investigating Action Encodings in Recurrent Neural Networks in Reinforcement Learning},
author={Matthew Kyle Schlegel and Volodymyr Tkachuk and Adam M White and Martha White},
journal={Transactions on Machine Learning Research},
year={2023}
}

@article{todorov2012mujoco,
  title={Mujoco: A physics engine for model-based control},
  author={Todorov, Emanuel and Erez, Tom and Tassa, Yuval},
  journal={IEEE/RSJ International Conference on Intelligent Robots and Systems},
  pages={5026--5033},
  year={2012}
}
\bibliographystyle{icml2026}

\newpage
\appendix
\onecolumn
\section{Additional Discussion}
\subsection{Observability Assumption}
\label{sec:obs}
When $r=0$ and $P=I$, we have $\widetilde{E}=E$ and $\widetilde{E}_{s}$ reduces to $E$ since $M=1$. In this case, Assumptions~\ref{ass:pos} and~\ref{ass:E_M_2} coincide with those of \citet[Assumptions 1 and 3]{khammassi2024adaptive}.
For $r \geq 1$ or $M>1$, however, they are weaker: if $r \geq 1$, the columns $E_{N-r+1},\ldots,E_N$ may be duplicates of others or contain zero components; if $M>1$, $\widetilde{E}$ may contain identical columns. For example:
\begin{example}
    Let 
    \begin{align}
        \!\!\!P \!=\!\! \left[\!\!\begin{array}{cccc}
            0 & 1 & 0 &0\\
            1 & 0 & 0 &0 \\
            0 & 0 & 0 &1 \\
            0 & 0 & 1 &0
        \end{array}\!\!\right]\!, \quad
        \widetilde{E} \!=\! \left[\!\!\begin{array}{cccc}
            0.5 & 0.8 & 0.5 &0.2\\
            0.5 & 0.2 & 0.5 &0.8 \\
        \end{array}\!\!\right]\!
        .\!\!
    \end{align}
    Then,
    \begin{align}
        \widetilde{E}_{s} = \left[\begin{array}{cccc}
            0.5 & 0.8 & 0.5 &0.2\\
            0.5 & 0.2 & 0.5 &0.8\\
            0.8 & 0.5 & 0.2 & 0.5\\
             0.2& 0.5 & 0.8 & 0.5
        \end{array}\right],
    \end{align}
    where $\widetilde{E}$ assigns identical observation distributions to $x_1$ and $x_3$, yet no two columns of $\widetilde{E}_{s}$ are identical.
\end{example}

\subsection{Beyond Permutation-Based POMDPs}
    For general nearly-deterministic transition matrices (instead of nearly-permutation), there's no easy extension of ALF. Our approach relies on setting the logit entries corresponding to transient states to $-\infty$.
    Consider a simple deterministic experiment:
    \begin{align}
        T(a_1) = \left[\begin{array}{cc}
            1 & 1 \\
            0 & 0
        \end{array}\right],\quad  T(a_2) = \left[\begin{array}{cc}
            0 & 0 \\
            1 & 1
        \end{array}\right],
    \end{align}
    In this example, action $a_1$ makes $x_2$ transient, while action $a_2$ makes $x_1$ transient. 
If one action is applied consistently, the model reduces to an HMM, where setting the corresponding transient state to $-\infty$ is valid. 
However, if actions alternate, no true transient states exist (since every state can be visited with non-negligible probability), and there is no natural way to extend ALF to this case. 

Even when transient states coincide across transition matrices, generalization is still difficult. 
Consider Figure \ref{fig:hard}: the transient states are always $x_1$ and $x_2$, but under alternating actions $a_1,a_2,a_1,\ldots$, a nonzero initial probability for $x_1$ implies that the probability for $x_1$ and $x_2$ never vanishes. 

\begin{figure}[htbp]
    \centering
    \includegraphics[width=0.3\linewidth]{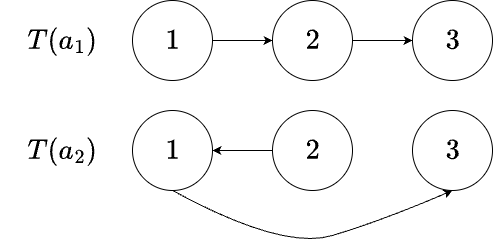}
    \caption{Illustration of a difficult case for generalization.}
    \label{fig:hard}
\end{figure}

A possible generalization to nearly-deterministic POMDPs is to assume backbone deterministic matrices
\begin{align}
      D(a) = \left[
\begin{array}{c|c}
    P(a) & D_{12}(a) \\
    \hline
    0 & D_{22}(a)
\end{array}
\right],  
\end{align}
with strictly upper-triangular $D_{22}(a)$ for all $a \in \mathcal{A}$, so that transient-state probabilities go to zero after finitely many steps.

\section{Visualization of ALF's State Tracking}
In this section, we provide additional visualizations to further illustrate how ALF recovers the underlying states.

\subsection{Time-Invariant ALF}
In this section, we consider a two-state HMM: 
\begin{align}
    T = \begin{bmatrix}
1-\varepsilon & \varepsilon \\
\varepsilon & 1-\varepsilon
\end{bmatrix}, \quad 
E = \begin{bmatrix}
0.8 & 0.2 \\
0.2 & 0.8
\end{bmatrix},
\end{align}
with $\varepsilon = 0.005$. In this setting, the sign of the difference
$\boldsymbol{w}_k(x_2)-\boldsymbol{w}_k(x_1)$ fully determines the decoded state (positive implies
$x_2$, and vice versa). Consider $\delta_1 = 0.07$ and $\delta_2 = 0.19$.

\begin{figure}
    \centering
    \includegraphics[width=0.7\linewidth]{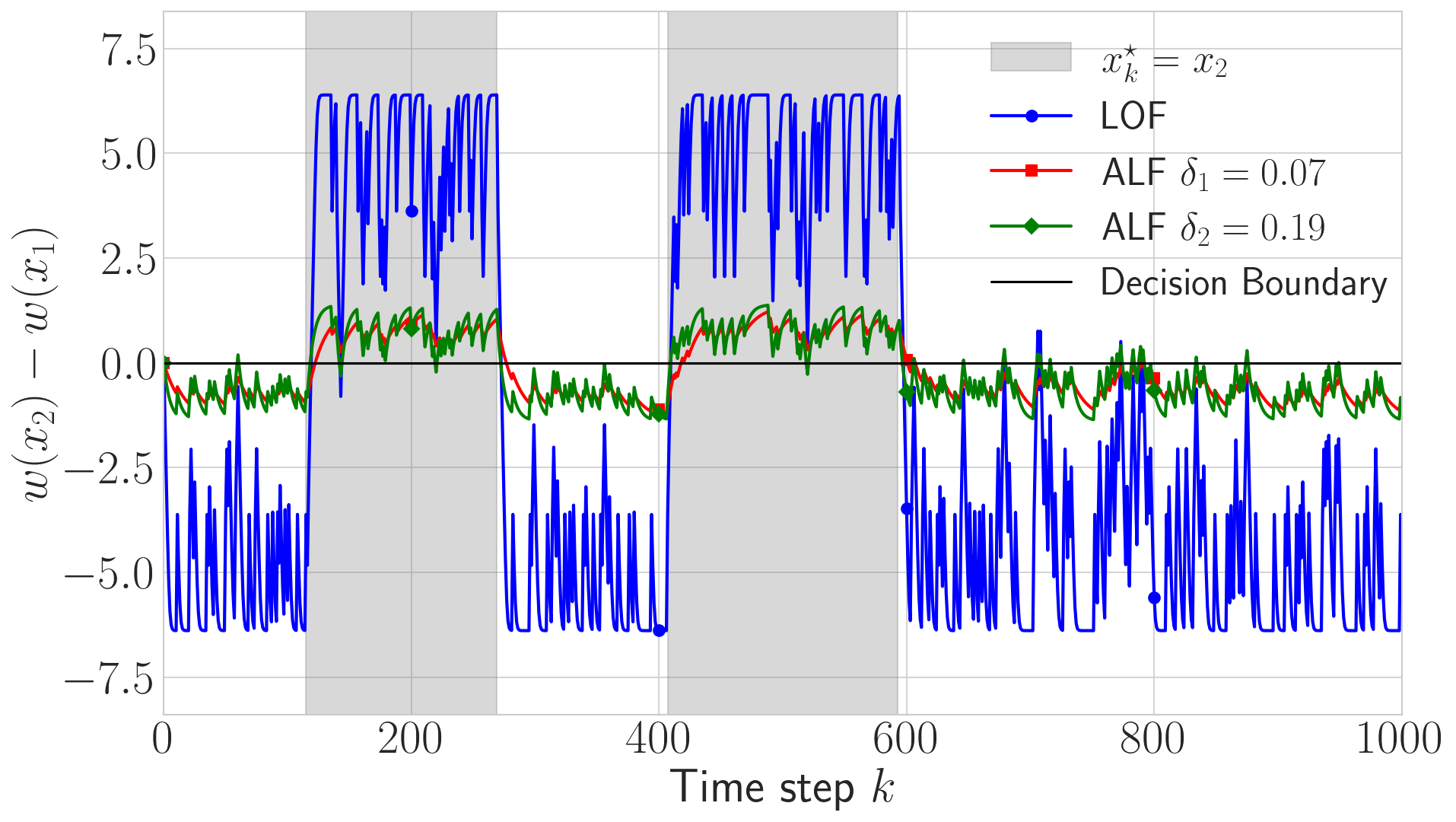}
    \caption{Sample evolution of the logit difference $\boldsymbol{w}_k(x_2)-\boldsymbol{w}_k(x_1)$ for \textbf{LOF} and \textbf{ALF} under $\varepsilon = 0.005$.}
    \label{fig:hmm_states}
\end{figure}

Two observations can be drawn from Figure~\ref{fig:hmm_states}. First,
although the overall trends are similar, there is a clear discrepancy
between the dynamics of \textbf{ALF} and \textbf{LOF}. Nevertheless,
\textbf{ALF} still achieves good state decoding. Second, since $\delta_1 < \delta_2$, the dynamics
under $\delta_1$ appear smoother than those
under $\delta_2$, as the smaller value reduces sensitivity to instantaneous (and noisy)
observations. However, this smoother behavior comes at the cost of slower
adaptation to irregular state transitions, as reflected by the lag after
switching points (e.g., $k=115,270,406,593$). This reflects the intrinsic
tradeoff between stability and responsiveness in \textbf{ALF}. Similar plots can be generated for the model in Section \ref{sec:two_state_exp}; however, due to the frequent state transitions, these are less interpretable.

\subsection{Time-Varying ALF}
\begin{figure}
    \centering
    \includegraphics[width=0.7\linewidth]{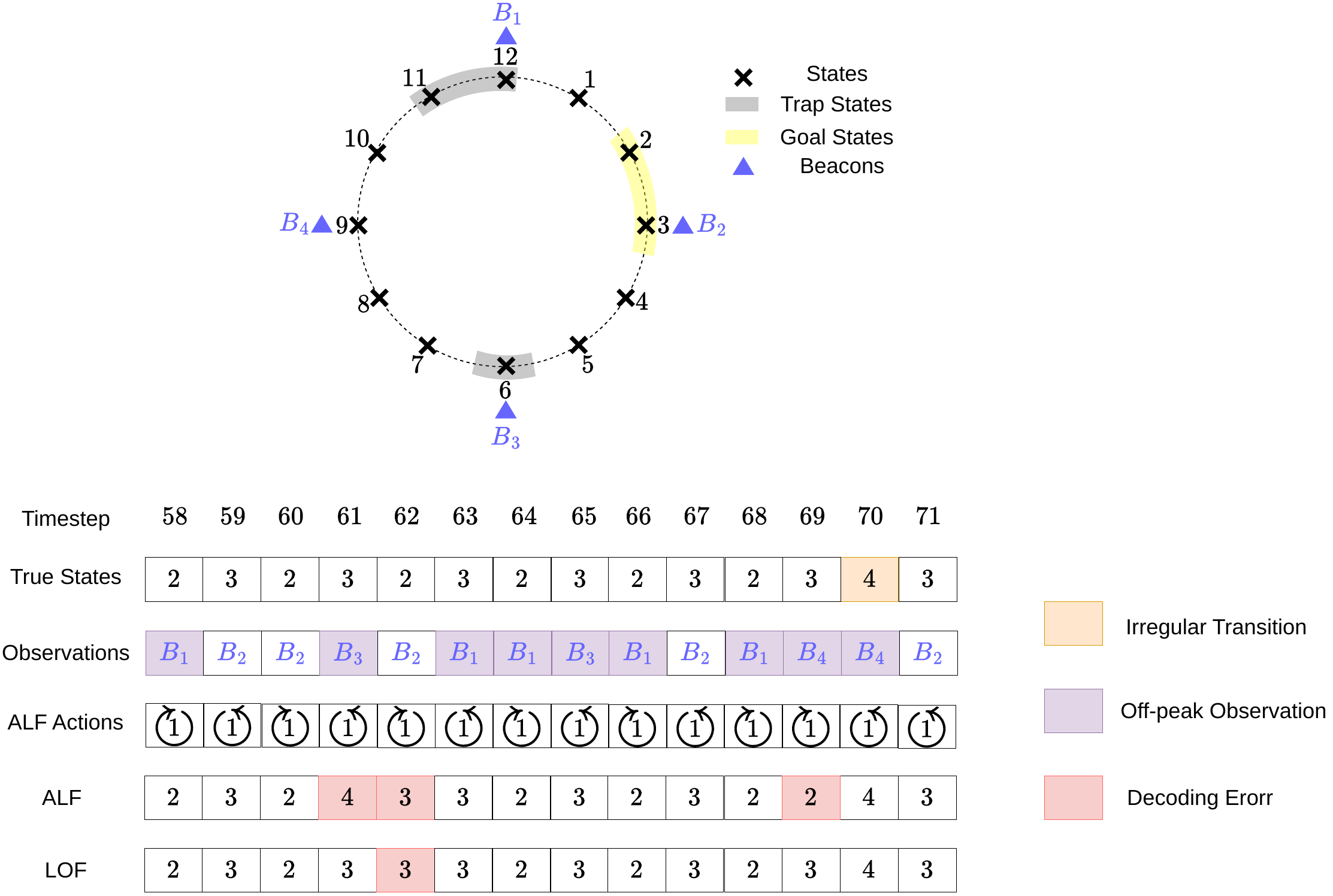}
    \caption{Sample trajectory of true states, estimated states, actions, and observations in the RingWorld environment.}
    \label{fig:ringworld_states}
\end{figure}
We next consider the RingWorld environment in Section~\ref{sec:ringworld}.
Figure~\ref{fig:ringworld_states} shows a sample trajectory of true states,
estimated states, actions, and observations. \textbf{ALF} is constructed using
environment knowledge with $\delta=0.1$, and the corresponding trained
actor is used to select actions. In the legend,
\emph{Irregular Transition} refers to state transitions that deviate from
the permutation backbone;
\emph{Off-peak Observation} indicates observations that do not match the
most likely ones given the current state; and
\emph{Decoding Error} denotes mismatches between the decoded and true
states.

Two observations follow. First, \textbf{ALF} is more sensitive to noisy
observations than \textbf{LOF}. For example, at timesteps 61 and 69,
observations $B_3$ and $B_4$ occur at state 3, which are unlikely; these
lead to decoding errors for \textbf{ALF} but not for \textbf{LOF}.
Second, the learned policy is effective: the agent takes $\mathrm{CW1}$
at state 2 and $\mathrm{CCW1}$ at state 3 for most of the time, which
leads to high return.

\section{Proofs for Section \ref{sec:dete}}

\subsection{Proof of Lemma \ref{lemma:determinsitic}}
\label{proof:lemma 1}
If \( T \) is a permutation matrix, there are no transient states (i.e., \( r = 0 \)), and Lemma~\ref{lemma:determinsitic} holds trivially as \( T_{11} \) and \( T_{21} \) are empty matrices. Otherwise, \( T \) must have at least two identical columns, which implies the existence of at least one transient state: there must be a basis vector \( u_i \in \{u_1, \dots, u_N\} \) that does not appear in \( T \). Consequently, no state can reach \( x_i \), so \( r > 0 \), and the matrix \( T \) is reducible. 

According to the canonical form of a reducible matrix, \( T \) can be expressed as (\citealp[(8.4.6)]{meyer2023matrix}; \citealp[(A.1)]{puterman1990markov}):

\begin{equation}
    T = \left[
    \begin{array}{c|c}
        T_{11} & 0 \\
        \hline
        T_{21} & T_{22}
    \end{array}
    \right],
\end{equation}
where the first \( r \) states are transient, and the remaining \( N - r \) are recurrent states. This matrix \( T_{11} \in \mathbb{R}^{r \times r} \) is a lower-triangular block matrix whose diagonal blocks are either irreducible or \( [0]_{1 \times 1} \). Additionally, \( T_{21} \in \mathbb{R}^{(N-r) \times r} \) is non-zero, and \( T_{22} \in \mathbb{R}^{(N-r) \times (N-r)} \) is a block diagonal matrix where each diagonal block is irreducible. This is a classical result for absorbing Markov chains. We now focus on its properties in the case of deterministic transitions.

We now prove, by contradiction, that all diagonal blocks in \( T_{11} \) must be \( [0]_{1 \times 1} \). If a diagonal block were irreducible, it would form an irreducible Markov chain isolated from other states, meaning no external state could enter it. Therefore, such states would not be transient, and they should appear in the bottom-right corner of \( T \), which contradicts their placement in \( T_{11} \).

Thus, \( T \) can be written as:
\begin{align}
    \left[
    \begin{array}{c|c}
        \begin{array}{ccc}
            t_{11} & \cdots & 0 \\
            \vdots & \ddots & \vdots \\
            t_{r1} & \cdots & t_{rr}
        \end{array}
        &
        0 \\
        \hline
        T_{21} & 
        \begin{array}{ccc}
            P_{11} & \cdots & 0 \\
            \vdots & \ddots & \vdots \\
            0 & \cdots & P_{mm}
        \end{array}
    \end{array}
    \right],
\end{align}
where \( T_{21} \neq 0 \), and the states \( \{x_1, \dots, x_r\} \) are transient. Moreover, all diagonal entries \( t_{11}, \dots, t_{rr} \) in \( T_{11} \) are zero. The blocks \( P_{11}, \dots, P_{mm} \) are irreducible, and since \( T \) is deterministic, each \( P_{ii} \) must be a permutation matrix. Consequently, the entire block \( P \) is itself a permutation matrix.

\subsection{Proof of Theorem \ref{thm:deter}}
\label{proof:thm_1}
We now consider the more general case where the number of transient states $r$ does not equal to 0. After removing the normalization in the action-independent version of \eqref{eq:opt} (we focus on a fixed-transition HMM), we obtain the \textbf{unnormalized optimal filter}:
\begin{align}
\label{eq:unnormalized}
    \boldsymbol{\beta}_{k} &= \text{diag}\{E^\top \boldsymbol{y}_k\} T\boldsymbol{\beta}_{k-1} \notag\\
    &= \text{diag}\{E^\top \boldsymbol{y}_k\} T \cdots \text{diag}\{E^\top \boldsymbol{y}_1\} T \pi_{0}.
\end{align}
We first show that 
    \begin{align}
    \label{eq:zero_beta}
        \boldsymbol{\beta}_k = \left[ \begin{array}{c}
           0_r  \\
           \boldsymbol{\beta}_k(r+1)\\
           \vdots\\
           \boldsymbol{\beta}_k(N)
        \end{array}\right] \triangleq \left[ \begin{array}{c}
           0_r  \\
           \widetilde{\boldsymbol{\beta}}_k
        \end{array}\right] ,
    \end{align}
    for $k \geq \ell$, we use $0_r$ to denote zero vector with dimension $r$ to avoid confusion. We prove this by induction, if $\boldsymbol{\beta}_k$ admits the form \eqref{eq:zero_beta}, then 
    \begin{align}
    \label{eq:update_l}
        \boldsymbol{\beta}_{k+1} &= \text{diag}\{E^\top \boldsymbol{y}_{k+1}\} T \boldsymbol{\beta}_{k} \notag\\
        &= \text{diag}\{E^\top \boldsymbol{y}_{k+1}\} \left[ \begin{array}{c}
           0_r  \\
           P \widetilde{\boldsymbol{\beta}}_k
        \end{array}\right]\notag\\
        &= \left[ \begin{array}{c}
           0_r  \\
          \widetilde{\boldsymbol{\beta}}_{k+1}
        \end{array}\right].
    \end{align}
    Therefore, what is left is to prove the base case where $\boldsymbol{\beta}_\ell$ admits the form \eqref{eq:zero_beta}, which is
    \begin{align}
        \left[ \begin{array}{c}
           0_r  \\
          \widetilde{\boldsymbol{\beta}}_{\ell}
        \end{array}\right] = \underbrace{\text{diag}\{E^\top \boldsymbol{y}_\ell\} T \cdots \text{diag}\{E^\top \boldsymbol{y}_1\} T}_{\boldsymbol{U}_{1:\ell}} \pi_{0}.
    \end{align}
    Note that $T^\ell$ has:
    \begin{align}
    \label{eq:det_T_l}
        T^\ell &= \left[ \begin{array}{c|c}
           T_{11}^\ell & 0 \\\hline
          \cdot & P^\ell
        \end{array}\right] \stackrel{(a)}{=} \left[ \begin{array}{c|c}
           0 & 0 \\\hline
          \cdot & P^\ell
        \end{array}\right],
    \end{align}
    where $(a)$ is due to Lemma \ref{lemma:determinsitic} and the property of the nilpotent matrix $T_{11}^\ell =0$. Next, we show that $\boldsymbol{U}_{1:\ell}$ maintains the zeros in $T^\ell$. The matrix $\boldsymbol{U}_{1:\ell}$ can be expressed as:
    \begin{align}
        \boldsymbol{U}_{1:\ell} = \text{diag}\{E^\top \boldsymbol{y}_\ell\} T \cdots \text{diag}\{E^\top \boldsymbol{y}_1\} T.
    \end{align}
    Then we apply Lemma \ref{lem:matrix_mul} iteratively to prove that $\boldsymbol{U}_{1:\ell}$ maintains the zeros in $T^\ell$. This finishes the proof of  \eqref{eq:zero_beta}. 

    It should be noted that the optimal logit can be obtained from $\boldsymbol{\beta}_k$ by:
    \begin{align}
\label{eq:opt_logit}
    \boldsymbol{w}^o_k = \log \boldsymbol{\beta}_k.
\end{align}
    Therefore, from \eqref{eq:update_l}, we have for the optimal logits $\boldsymbol{w}^o_{k+1}$ with $k \geq \ell$:
    \begin{align}
    \label{eq:w_k_asymp}
        \boldsymbol{w}^o_{k+1} &= \log(\boldsymbol{\beta}_{k+1}) = \left[\begin{array}{c}
            -\infty \mathbbm{1}_r\\
             \log(\widetilde{\boldsymbol{\beta}}_{k+1})
        \end{array}\right] \notag\\&=  \left[
  \begin{array}{c|c}
     I_r &0 \\
    \hline
    0 & P
  \end{array}
\right] \left[\begin{array}{c}
            -\infty \mathbbm{1}_r\\
             \log(\widetilde{\boldsymbol{\beta}}_k)
        \end{array}\right] + \log(E^\top \boldsymbol{y}_{k+1}) \notag\\
        &=  \left[
  \begin{array}{c|c}
     I_r &0 \\
    \hline
    0 & P
  \end{array}
\right] \boldsymbol{w}^o_k + \log(E^\top \boldsymbol{y}_{k+1}),
    \end{align}
    where we assume $-\infty \cdot 0 = 0$. Thus, to reproduce the optimal logits, we should have $\psi(k, \boldsymbol{y}_k, \cdots, \boldsymbol{y}_{k-\ell+1}) = \log(E^\top \boldsymbol{y}_{k})$ when $k > \ell$. 
    Also, since the evolution of the optimal logits $\boldsymbol{w}^o_k$ for $k > \ell$ relies on obtaining the exact $\log(\boldsymbol{\beta}_\ell)$ (i.e. $\boldsymbol{w}^o_\ell$), we need to reproduce  $\boldsymbol{w}^o_\ell$ as well as $\boldsymbol{w}^o_k, 1\leq k < \ell$ precisely. Therefore, we need to design our $\psi(k, \boldsymbol{y}_k, \cdots, \boldsymbol{y}_{k-\ell+1})$ for $k \leq \ell$. This is due to the fact that the evolution of $\boldsymbol{w}^o_k, 1\leq k \leq \ell$ generally does not satisfy \eqref{eq:w_k_asymp}, and we need to design a $\psi$ to account for the errors induced by using the matrix $T^\prime$.
    
    The detailed design of $\psi$ relies on $\ell$. Instead of describing $\psi$ explicitly for all $\ell$, we only give a sample implementation of  $\psi(k, y_2, y_1)$ for $\ell=2$:
    \begin{align}
    \label{eq:psi_l2}
        \psi(k, y_2, y_1) = \begin{cases}
             \log(T \pi_0) - T^\prime w_0 + \log(E^\top y_2), & k=1\\
            \log(T \text{diag}\{E^\top y_1\} T \pi_0) - T^\prime \log(\text{diag}\{E^\top y_1\} T \pi_0) + \log(E^\top y_2), & k=2\\
            \log(E^\top y_2), & k\ge 3
        \end{cases}.
    \end{align}
    We can check that when $k > \ell = 2$, $\psi(k, y_2, y_1) = \log (E^\top y_2)$ which is the same as what we found in \eqref{eq:w_k_asymp}. Using this $\psi(k, y_2, y_1)$, our dynamics produces:
    \begin{align}
    \boldsymbol{w}_1
    &= T^\prime \boldsymbol{w}_0 + \psi(1,\boldsymbol{y}_1,0) \notag\\
    &= T^\prime \boldsymbol{w}_0 + \log(T\pi_0) - T^\prime \boldsymbol{w}_0 + \log(E^\top \boldsymbol{y}_1) \notag\\
    &= \log\!\big(\text{diag}\{E^\top \boldsymbol{y}_1\}\, T \pi_0\big) \notag\\
    &= \boldsymbol{w}^o_1, \notag\\[4pt]
    \boldsymbol{w}_2
    &= T^\prime \boldsymbol{w}_1 + \psi(2,\boldsymbol{y}_2,\boldsymbol{y}_1) \notag\\
    &= T^\prime \log\!\big(\text{diag}\{E^\top \boldsymbol{y}_1\}\, T \pi_0\big)
       + \log\!\big(T\,\text{diag}\{E^\top \boldsymbol{y}_1\}\,T \pi_0\big)
       - T^\prime \log\!\big(\text{diag}\{E^\top \boldsymbol{y}_1\}\, T \pi_0\big)
       + \log(E^\top \boldsymbol{y}_2) \notag\\
    &= \log\!\big(\text{diag}\{E^\top \boldsymbol{y}_2\}\, T\, \text{diag}\{E^\top \boldsymbol{y}_1\}\, T \pi_0\big) \notag\\
    &= \boldsymbol{w}^o_2, \notag\\
    \boldsymbol{w}_3 
    & = T^\prime \boldsymbol{w}_2 + \log(E^\top \boldsymbol{y}_3) \notag\\
    & = T^\prime \boldsymbol{w}^o_2 + \log(E^\top \boldsymbol{y}_3) \notag\\
    &= \boldsymbol{w}^o_3, \notag\\
    & \vdots
    \end{align}
    Note that $\psi$ for other values of $\ell$ can be constructed in similar ways.

   


\subsection{Supporting Lemma}
\begin{lemma}
\label{lem:matrix_mul}
    Assume all columns of matrix $B$ are standard basis vectors:
\begin{align}
     B = \left[\begin{array}{ccc}
       u_{n_1} & \cdots &  u_{n_N}
    \end{array}\right],
\end{align}
we have:
    \begin{align}
      \operatorname{diag}\{c_1,\cdots,c_N\} B  = B \operatorname{diag}\{c_{n_1}, \cdots, c_{n_N}\}.
    \end{align}
\end{lemma}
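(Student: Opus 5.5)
The plan is to verify the identity entrywise, by checking that both sides act the same way on each canonical basis vector $u_j$. Equivalently, I compare the $j$-th columns of the two matrices.

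For the left-hand side, the $j$-th column of $B$ is $u_{n_j}$. Left-multiplying by $\operatorname{diag}\{c_1,\dots,c_N\}$ scales each row $i$ by $c_i$. The only nonzero entry of $u_{n_j}$ is in row $n_j$, so the $j$-th column of $\operatorname{diag}\{c\}B$ is $c_{n_j} u_{n_j}$.

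For the right-hand side, right-multiplying $B$ by $\operatorname{diag}\{c_{n_1},\dots,c_{n_N}\}$ scales the $j$-th column of $B$ by the $j$-th diagonal entry $c_{n_j}$. This gives $c_{n_j} u_{n_j}$ again.

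Since all $N$ columns agree, the two matrices are equal. There is no real obstacle; the one point to keep straight is the index bookkeeping. Left multiplication by a diagonal matrix scales rows, right multiplication scales columns, and in $B$ row $n_j$ and column $j$ meet at the single nonzero entry of that column.

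For completeness I would also write the argument in index notation. Here $(\operatorname{diag}\{c\}B)_{ij}=c_i\,\mathbb{I}\{i=n_j\}=c_{n_j}\,\mathbb{I}\{i=n_j\}=(B\operatorname{diag}\{c_{n_1},\dots,c_{n_N}\})_{ij}$.

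This is the form in which the lemma is used in the proof of Theorem~\ref{thm:deter}. There, diagonal factors $\operatorname{diag}\{E^\top \boldsymbol{y}_i\}$ are repeatedly pushed to the right past copies of $T$, so that $\boldsymbol{U}_{1:\ell}=T^\ell \Lambda$ for some diagonal $\Lambda$. As a result, $\boldsymbol{U}_{1:\ell}$ inherits the zero pattern of $T^\ell$.
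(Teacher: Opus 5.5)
Your proposal is correct and follows essentially the same route as the paper: compare the two sides column by column, using $\operatorname{diag}\{c_1,\dots,c_N\}\,u_{n_j} = c_{n_j} u_{n_j}$. Your index-notation version and your remark on how the lemma yields $\boldsymbol{U}_{1:\ell} = T^\ell \Lambda$ for a diagonal $\Lambda$ are both accurate; the latter is how the paper uses it to show that $\boldsymbol{U}_{1:\ell}$ keeps the zero rows of $T^\ell$.
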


\begin{proof}
The result follows from
    \begin{align}
        &  \operatorname{diag}\{c_1,\cdots,c_N\} B  \notag\\
        &= \left[\begin{array}{ccc}
       \operatorname{diag}\{c_1,\cdots,c_N\} u_{n_1} & \cdots &  \operatorname{diag}\{c_1,\cdots,c_N\} u_{n_N}
    \end{array}\right] \notag\\
    &= \left[\begin{array}{ccc}
       c_{n_1} u_{n_1} & \cdots &  c_{n_N} u_{n_N}
    \end{array}\right] \notag\\
    &= \left[\begin{array}{ccc}
       u_{n_1} & \cdots &  u_{n_N}
    \end{array}\right] \operatorname{diag}\{c_{n_1},\cdots,c_{n_N}\}.
    \end{align}
\end{proof}

\section{Proof for Section \ref{sec:near_deter}}
\subsection{Main Lemmas}
\label{proof:thm1}
\begin{lemma}
\label{lemma:lim_other}
    Under a nearly-deterministic transition matrix as defined in Definition \ref{def:near_det}, let $T_\varepsilon$ be defined as in \eqref{eq:T_epsilon}.  
    Then, under conditions \eqref{eq:delta_1}--\eqref{eq:delta_2}, we have:
    \begin{align}
        &\lim_{\varepsilon \to 0} \limsup_{k\rightarrow\infty} \sum_{x^\prime \in \mathcal{X} \setminus \widetilde{\mathcal{X}} }  \mathbb{P} \left[ \boldsymbol{x}^\star_{k}  = x^\prime\right] = 0, \notag\\
        &\lim_{\varepsilon \to 0} \limsup_{k \rightarrow \infty} \mathbb{P} \left[ \text{IrJ in } [k - T_\varepsilon, k] \right] = 0.
    \end{align}
\end{lemma}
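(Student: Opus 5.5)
We prove the two limits separately. Both follow from the fact that, for every state, the probability of deviating from the backbone $D$ in one step is $O(\varepsilon)$.

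\emph{Second limit (irregular jumps).} An irregular jump at time $j$ means that $\boldsymbol{x}^\star_j \neq n_{\boldsymbol{x}^\star_{j-1}}$, where $n_{x}$ denotes the row index of the unique $1$ in the column of $D$ for state $x$. By Definition~\ref{def:near_det}, conditionally on $\boldsymbol{x}^\star_{j-1}=x_i$ this event has probability
\begin{equation*}
1-t_{n_i i}=-\varepsilon q_{n_i i}=\varepsilon\sum_{m\neq n_i} q_{m i}\le \varepsilon \bar q,
\end{equation*}
with $\bar q \triangleq \max_i \sum_{m\neq n_i} q_{mi}$. This constant is independent of $\varepsilon$ and $k$. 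The window $[k-T_\varepsilon,k]$ contains at most $T_\varepsilon+2$ transition instants. A union bound, using the Markov property at each instant, therefore gives
\begin{equation*}
\mathbb{P}[\text{IrJ in }[k-T_\varepsilon,k]]\le (T_\varepsilon+2)\,\varepsilon\bar q=\Big(\frac{\alpha}{\delta_\varepsilon}+2\Big)\varepsilon\bar q
\end{equation*}
uniformly in $k$. Taking $\limsup_k$ and then $\varepsilon\to 0$, this tends to zero by \eqref{eq:delta_2}; condition \eqref{eq:delta_1} is not even needed for this term.

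\emph{First limit (transient states).} Write $\pi_k$ for the law of $\boldsymbol{x}^\star_k$ and let $\tau_k\in\mathbb{R}^r$ be its block on the transient states. With the labeling in \eqref{eq:near_D}, the transient-to-transient block of $T$ is $D_{22}+\varepsilon Q_{22}$, and the recurrent-to-transient block is $\varepsilon Q_{21}$, since $D$ has a zero block there. Hence
\begin{equation*}
\tau_{k}=(D_{22}+\varepsilon Q_{22})\tau_{k-1}+\varepsilon Q_{21}\widetilde{\pi}_{k-1}.
\end{equation*}
All entries of $Q_{21}$ are nonnegative off-diagonal entries of $Q$, so $\|\varepsilon Q_{21}\widetilde\pi_{k-1}\|_1\le \varepsilon\bar q$. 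Set $G_\varepsilon\triangleq D_{22}+\varepsilon Q_{22}$, which is entrywise nonnegative. The key step is to bound its powers. Since $D_{22}$ is nilpotent with index $\ell$, the power $G_\varepsilon^\ell$ is a sum of products of $\ell$ factors, and every product containing a $\varepsilon Q_{22}$ factor is $O(\varepsilon)$ while $D_{22}^\ell=0$. Thus $\|G_\varepsilon^\ell\|_1\le c\,\varepsilon$ for small $\varepsilon$, and in particular $\rho(G_\varepsilon)<1$. Also $\|G_\varepsilon\|_1\le 1$ by substochasticity, so
\begin{equation*}
\sum_{m\ge0}\|G_\varepsilon^m\|_1\le \frac{\ell}{1-c\varepsilon}.
\end{equation*}
Unrolling the recursion then gives
\begin{equation*}
\limsup_k\|\tau_k\|_1\le \frac{\ell\,\varepsilon\bar q}{1-c\varepsilon}\xrightarrow[\varepsilon\to0]{}0,
\end{equation*}
because the initial contribution $G_\varepsilon^k\tau_0$ vanishes as $k\to\infty$.

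The main obstacle is this power bound on $G_\varepsilon$, i.e.\ showing that the $O(\varepsilon)$ leakage from the recurrent class into the transient states is flushed within about $\ell$ steps instead of accumulating. Nilpotency of $D_{22}$, expanded in blocks of $\ell$ steps as above, is what resolves it. The statement is applied in the proof of Theorem~\ref{thm:1} at time $k-h_\varepsilon M$; since that shift is fixed as $k\to\infty$, the $\limsup$ is unaffected.
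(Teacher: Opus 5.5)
Your proof is correct, but for the transient-state term it takes a different route from the paper. For the irregular-jump term you do essentially what the paper does in Lemma~\ref{lemma:p_irreg}. Your $\bar q=\max_i\sum_{m\neq n_i}q_{mi}$ equals the paper's $-q_{\min}$. Your union bound over the transition instants in the window replaces the paper's lower bound $(1+\varepsilon q_{\min})^{\lceil T_\varepsilon\rceil}$ on the no-jump probability followed by Bernoulli's inequality, and gives the same $O(\varepsilon T_\varepsilon)$ estimate. You are also right that only \eqref{eq:delta_2} is needed there.

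For the transient-state term, the paper (Lemma~\ref{lemma:p_epsilon}) works with the $\ell$-step matrix. Because the recurrent-to-transient block of $D$ is zero and $D_{22}^\ell=0$, the transient rows of $D^\ell$ vanish, so the transient rows of $T^\ell$ are entrywise $O(\varepsilon)$. Then $\pi_k(x)=\sum_j[T^\ell]_{xj}\pi_{k-\ell}(j)\le\max_j[T^\ell]_{xj}$ gives an $O(\varepsilon)$ bound for every $k\ge\ell$ in one step. The paper phrases this as ``$T^k$ has $O(\varepsilon)$ transient rows for all $k\ge\ell$''; it leaves implicit that uniformity in $k$ comes from $T^k=T^\ell T^{k-\ell}$ and column-stochasticity of $T^{k-\ell}$.

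You instead write a one-step inhomogeneous recursion for the transient block. You then control the accumulated $O(\varepsilon)$ leakage through $\sum_m\|G_\varepsilon^m\|_1\le\ell/(1-c\varepsilon)$, again using nilpotency of $D_{22}$ in blocks of $\ell$ steps.

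The paper's argument is shorter and needs no series or smallness condition $c\varepsilon<1$. It also gives a bound pointwise for every $k\ge\ell$ rather than only in the $\limsup$. Your argument makes the uniformity in $k$ fully explicit and yields a concrete constant, $\limsup_k\sum_{x\notin\widetilde{\mathcal{X}}}\mathbb{P}[\boldsymbol{x}^\star_k=x]\le\ell\,\bar q\,\varepsilon\,(1+O(\varepsilon))$. Either suffices for the lemma and for its use at the shifted time $k-h_\varepsilon M$.
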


\begin{proof}
    From Lemma \ref{lemma:p_epsilon}, we have:
    \begin{align}
        \limsup_{k\rightarrow\infty} \sum_{x^\prime \in \mathcal{X} \setminus \widetilde{\mathcal{X}} }  \mathbb{P} \left[ \boldsymbol{x}^\star_{k}  = x^\prime\right] = O(\varepsilon),
    \end{align}
    which implies
    \begin{align}
        \lim_{\varepsilon \to 0} \limsup_{k\rightarrow\infty} \sum_{x^\prime \in \mathcal{X} \setminus \widetilde{\mathcal{X}} }  \mathbb{P} \left[ \boldsymbol{x}^\star_{k}  = x^\prime\right] = 0.
    \end{align}
    Similarly, from Lemma \ref{lemma:p_irreg}, we have:
    \begin{align}
    \label{eq:error_x_limsup}
        \limsup_{k \rightarrow \infty} \mathbb{P} \left[ \text{IrJ in} [k - T_\varepsilon, k] \right] = O(\varepsilon T_\varepsilon)
    \end{align}
    As we can verify that our $T_\varepsilon$ in \eqref{eq:T_epsilon} satisfies the following two conditions according to \eqref{eq:delta_1} and \eqref{eq:delta_2}:
    \begin{align}
    \label{eq:epsilonT}
        &\lim_{\varepsilon \to 0} \varepsilon T_\varepsilon = 0, \\
&\lim_{\varepsilon \to 0} T_\varepsilon = +\infty,
    \end{align}
    we have:
    \begin{align}
        \lim_{\varepsilon \to 0} \limsup_{k \rightarrow \infty} \mathbb{P} \left[ \text{IrJ in } [k - T_\varepsilon, k] \right] = 0.
    \end{align}
\end{proof}

\begin{lemma}
\label{lemma:lemma_main}
    Under a nearly-deterministic transition matrix as defined in Definition \ref{def:near_det}, let $T_\varepsilon$ be defined as in \eqref{eq:T_epsilon}.  
    Then, under conditions \eqref{eq:delta_1}--\eqref{eq:delta_2}, we have:
    \begin{align}
    \label{eq:lemma_main_eq}
        \lim_{\varepsilon \rightarrow0} \limsup_{k\rightarrow \infty}\ \mathbb{P}& \left[ \arg\max_{x \in \widetilde{\mathcal{X}}} \widetilde{\boldsymbol{w}}_{k}(x) \!\neq\! \boldsymbol{x}^\star_k \middle\vert \text{ no IrJ in } I_k, \boldsymbol{x}^\star_{k-h_\varepsilon M} \!=\! x^\prime\right] = 0, \ \forall x^\prime \in \widetilde{\mathcal{X}}
    \end{align}
\end{lemma}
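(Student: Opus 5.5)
Conditioned on no irregular jump in $I_k$ and $\boldsymbol{x}^\star_{k-h_\varepsilon M}=x'\in\widetilde{\mathcal{X}}$, the true states in the window follow the permutation exactly: $\boldsymbol{x}^\star_{k-h_\varepsilon M+j}=\sigma^j(x')$ for $j=0,\dots,h_\varepsilon M$. In particular $\boldsymbol{x}^\star_k=\sigma^{h_\varepsilon M}(x')=x'$ by Lemma~\ref{lemma:P_order}. The observations in the window are then conditionally independent with laws $E_{\sigma^j(x')}$.

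I will unroll the ALF recursion \eqref{eq:near_deter_dynamic} on the recurrent block over the window. Writing $n=h_\varepsilon M$ and $\widetilde{\boldsymbol{\ell}}_i\triangleq\log(\widetilde{E}^\top\boldsymbol{y}_i)$, I get
\begin{align}
\widetilde{\boldsymbol{w}}_k=(1-\delta)^{n}P^{n}\widetilde{\boldsymbol{w}}_{k-n}+\delta\sum_{j=0}^{n-1}(1-\delta)^{j}P^{j}\widetilde{\boldsymbol{\ell}}_{k-j}.
\end{align}
Since $P^n=I$, the first term is bounded by $(1-\delta)^n\|\widetilde{\boldsymbol{w}}_{k-n}\|_\infty$. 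Here $\|\widetilde{\boldsymbol{w}}_{k-n}\|_\infty$ is uniformly bounded, because $\widetilde{w}_0$ is finite and every $\widetilde{\boldsymbol{\ell}}_i$ is bounded by Assumption~\ref{ass:pos}; this is the content of Lemma~\ref{lemma:bound_w} cited in the experiments. Moreover $(1-\delta)^n\approx e^{-\alpha}$, so I choose $\alpha$ large (or let it grow slowly) so that this term is negligible. For the state $x$, the $x$-component of $P^j\widetilde{\boldsymbol{\ell}}_{k-j}$ is $\log E_{\boldsymbol{y}_{k-j}}$ evaluated at the state $\sigma^{-j}(x)$, up to the orientation convention of $P$. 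Hence $\widetilde{\boldsymbol{w}}_k(x)-\widetilde{\boldsymbol{w}}_k(x')$ is a discounted sum of log-likelihood ratios comparing the hypothesized backward trajectory through $x$ with the true trajectory through $x'$.

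Next I group the sum into $h_\varepsilon$ blocks of length $M$. The blocks are i.i.d. up to the geometric weight $(1-\delta)^{Mb}$, and each block contributes a term with mean $-\delta\,c(x,x')$, where
\begin{align}
c(x,x')=\sum_{i=0}^{M-1}D_{\mathrm{KL}}\big(E_{\sigma^i(x')}\,\big\|\,E_{\sigma^i(x)}\big)>0 .
\end{align}
This divergence is finite by Assumption~\ref{ass:pos}. It is strictly positive for $x\neq x'$ because the columns of $\widetilde{E}_s$ for $x$ and $x'$ differ (Assumption~\ref{ass:E_M_2}), so some summand is nonzero. The drift of the difference is therefore about $-c(x,x')(1-e^{-\alpha})$, a negative constant of order one. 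The fluctuations are a weighted sum of bounded independent terms with variance $O(\delta^2\cdot 1/\delta)=O(\delta)$.

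By Chebyshev, or more sharply a Chernoff/Hoeffding bound, $\mathbb{P}[\widetilde{\boldsymbol{w}}_k(x)\ge\widetilde{\boldsymbol{w}}_k(x')]\to0$ as $\delta\to0$, uniformly in $k$. A union bound over the finitely many $x\neq x'$ then gives \eqref{eq:lemma_main_eq}. The Chernoff version gives error $e^{-\xi'/\delta}$, which is how the constant $\xi$ and the $\varepsilon\log(1/\varepsilon)$ rate arise under \eqref{eq:rule_3}.

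The main obstacle is the bookkeeping of the initial term. $\widetilde{\boldsymbol{w}}_{k-n}$ depends on the past and is not independent of the conditioning event, so it cannot be averaged; I handle it purely through the deterministic bound $(1-\delta)^n\cdot\mathrm{const}$. Making that bound small while keeping $\varepsilon T_\varepsilon\to0$ requires taking $\alpha$ large, or letting $\alpha=\alpha_\varepsilon\to\infty$ slowly, which \eqref{eq:delta_2} permits. A secondary care point is the direction convention of $P^j$ versus $\sigma^j$, so that the hypothesized trajectory aligns with the stacked columns of $\widetilde{E}_s$.
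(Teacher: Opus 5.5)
Your argument is correct, but it takes a more elementary route than the paper.

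\textbf{Your route.} You unroll the recursion in the original coordinates. You control the initial term deterministically by $2W(1-\delta_\varepsilon)^{h_\varepsilon M}\to 2We^{-\alpha}$; the paper does the same via $\boldsymbol{o}_0(x)\ge -2W$. After that you use only first and second moments. The discounted log-likelihood-ratio sum has a strictly negative order-one mean, and the positivity of $c(x,x')$ is exactly the content of Lemmas~\ref{lemma:gamma_x} and~\ref{lemma:d_nx}. Its variance is $O(\delta_\varepsilon)$. So Chebyshev plus a union bound finishes once $\alpha$ is large.

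\textbf{The paper's route.} The paper passes to the rotated frame $\widehat{\boldsymbol{w}}_i=(P^\top)^i\widetilde{\boldsymbol{w}}_i$ and applies a Chernoff bound at $t=-1/\gamma_\varepsilon$. It splits the log-MGF over the phases $n\in\Gamma(x)$ and converts the geometric sums of $\Lambda_n$ into integrals via Lemma~\ref{lemma:lambda_sum}. It then uses the convexity facts of Lemma~\ref{lemma:Lambda} to show that the exponent $A_\varepsilon(x)$ has a negative limit. The result is a bound $\exp\{A_\varepsilon(x)/\gamma_\varepsilon\}$ that is exponentially small in $1/\delta_\varepsilon$.

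\textbf{What each buys.} For the qualitative statement of this lemma, your second-moment bound is enough and avoids all of that machinery. What it does not deliver is the rate. Under \eqref{eq:rule_3}, an $O(\delta_\varepsilon)$ bound is only $O(1/\log(1/\varepsilon))$, whereas Lemma~\ref{lemma:xi} needs this conditional error to be $o(\varepsilon\log(1/\varepsilon))$. A Hoeffding bound, as you suggest, would restore exponential decay in $1/\delta_\varepsilon$ and hence the rate for $\lambda$ below some sub-Gaussian constant. It would not, however, give the log-MGF constant $\xi$ that the paper identifies.

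\textbf{Two small corrections.} First, summing the block means $-\delta_\varepsilon c(x,x')(1-\delta_\varepsilon)^{Mb}$ over $b$ gives the drift $-\tfrac{1}{M}c(x,x')(1-e^{-\alpha})$, not $-c(x,x')(1-e^{-\alpha})$. The condition on $\alpha$ is therefore $\tfrac{1}{M}c(x,x')(1-e^{-\alpha})>2We^{-\alpha}$. Second, the lemma fixes $T_\varepsilon=\alpha/\delta_\varepsilon$, so the fixed-large-$\alpha$ option (the paper's choice) is the one that matches the statement. Letting $\alpha_\varepsilon\to\infty$ would alter $T_\varepsilon$ and, under \eqref{eq:rule_3}, spoil the $\kappa\,\varepsilon\log(1/\varepsilon)$ bound.
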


\begin{proof}
We let $x^\prime= x_1$ and prove 
\begin{align}
\label{eq:original_def_p}
        \lim_{\varepsilon \rightarrow0} \limsup_{k\rightarrow \infty}\ \mathbb{P}& \left[ \arg\max_{x \in \widetilde{\mathcal{X}}} \widetilde{\boldsymbol{w}}_{k}(x) \!\neq\! \boldsymbol{x}^\star_k \middle\vert \text{ no IrJ in } I_k, \boldsymbol{x}^\star_{k-h_\varepsilon M} \!=\! x_1\right] = 0,
    \end{align}
the argument also holds for all $x^\prime \in \widetilde{\mathcal{X}}$. 
Now we relabel the timestep $k - h_\varepsilon M$ as $0$, then \eqref{eq:lemma_main_eq} can be rewritten as:
    \begin{align}
    \label{eq:prob_to_bound}
        \!\!\mathbb{P}  &\!\left[  \arg\max_{x \in \widetilde{\mathcal{X}}} \widetilde{\boldsymbol{w}}_{h_\varepsilon M}(x)\!\neq\! \boldsymbol{x}^\star_{h_\varepsilon M} \middle\vert \boldsymbol{x}^\star_0 = x_1, \cdots, \boldsymbol{x}^\star_{M-1} \!=\! \sigma^{M-1}(x_1), \boldsymbol{x}^\star_{M} \!=\! x_1, \cdots, \boldsymbol{x}^\star_{h_\varepsilon M} \!=\! x_1\right].
    \end{align}
    From the ALF dynamics \eqref{eq:near_deter_dynamic}, we have:
\begin{align}
    \widetilde{\boldsymbol{w}}_i = P(1-\delta_\varepsilon) \widetilde{\boldsymbol{w}}_{i-1} + \delta_\varepsilon \log({\widetilde{E}}^\top \boldsymbol{y}_{i}).
\end{align}
    Now we define for $ 0\leq i \leq h_\varepsilon M$, 
\begin{align}
    \widehat{\boldsymbol{w}}_i &\triangleq (P^\top)^{i} \widetilde{\boldsymbol{w}}_i =  (P^\top)^{i \bmod M} \widetilde{\boldsymbol{w}}_i.
\end{align}
Therefore, 
\begin{align}
    \!\!\!\!\!\underbrace{P^\top \widetilde{\boldsymbol{w}}_1}_{\widehat{\boldsymbol{w}}_1} &= (1-\delta_\varepsilon) \underbrace{\widetilde{\boldsymbol{w}}_0}_{\widehat{\boldsymbol{w}}_0} + \delta_\varepsilon P^\top \log(\widetilde{E}^\top \boldsymbol{y}_{1}), \notag \\
    \!\!\!\!\!\underbrace{(P^\top)^2 \widetilde{\boldsymbol{w}}_2}_{\widehat{\boldsymbol{w}}_2} &= (1-\delta_\varepsilon) \underbrace{P^\top \widetilde{\boldsymbol{w}}_1}_{\widehat{\boldsymbol{w}}_1} + \delta_\varepsilon (P^\top)^2 \log(\widetilde{E}^\top \boldsymbol{y}_{2}),
\end{align}
so on. Then, we have:
\begin{equation}
    \widehat{\boldsymbol{w}}_i = (1-\delta_\varepsilon) \widehat{\boldsymbol{w}}_{i-1} + \delta_\varepsilon (P^\top)^{i \bmod M} \log(\widetilde{E}^\top \boldsymbol{y}_{i}).
\end{equation}
Now we define $\boldsymbol{o}_i\in \mathbb{R}^{N-r-1}$:
\begin{align}
\!\!\!\boldsymbol{o}_i \!\triangleq\! \left[\!\begin{array}{c}
     \widehat{\boldsymbol{w}}_i(x_1)- \widehat{\boldsymbol{w}}_i(x_2) \\
     \widehat{\boldsymbol{w}}_i(x_1)- \widehat{\boldsymbol{w}}_i(x_3) \\
     \vdots \\
     \widehat{\boldsymbol{w}}_i(x_1)- \widehat{\boldsymbol{w}}_i(x_{N-r})
\end{array}\!\right] \!=\! \left[\!\begin{array}{c}
     \boldsymbol{o}_i(x_2) \\
     \boldsymbol{o}_i(x_3) \\
     \vdots \\
     \boldsymbol{o}_i(x_{N-r})
\end{array}\!\right].
\end{align}
Here, $\widehat{\boldsymbol{w}}_i(x_1) = \widetilde{\boldsymbol{w}}_i(\sigma^{i \bmod M}(x_1))$ tracks the $\widetilde{\boldsymbol{w}}_i(\boldsymbol{x}^\star_{i})$ that corresponds to the true state $\boldsymbol{x}^\star_{i}$ (see \eqref{eq:prob_to_bound}). This can be verified by the following:
\begin{align}
    \widehat{\boldsymbol{w}}_i(x_1) &= u_1^\top \widehat{\boldsymbol{w}}_i = u_1^\top (P^\top)^{i \bmod M} \widetilde{\boldsymbol{w}}_i \notag\\
    &\stackrel{(a)}{=} u_{\sigma^{i \bmod M}(x_1)}^\top \widetilde{\boldsymbol{w}}_i = \widetilde{\boldsymbol{w}}_i(\sigma^{i \bmod M}(x_1)),
\end{align}
where $(a)$ is due to $u_1^\top (P^\top)^{i \bmod M} (P)^{i \bmod M} u_1 =1$ and $(P)^{i \bmod M} u_1 = u_{\sigma^{i \bmod M}(x_1)}$. 
We can show that \eqref{eq:prob_to_bound} becomes:
\begin{align}
\label{eq:prob_bound_o}
    &\mathbb{P} \left[ \arg\max_{x \in \widetilde{\mathcal{X}}} \widetilde{\boldsymbol{w}}_{h_\varepsilon M}(x) \neq \boldsymbol{x}^\star_{h_\varepsilon M} \middle\vert \boldsymbol{x}^\star_0 = x_1, \boldsymbol{x}^\star_{1} \!=\! \sigma(x_1), \cdots, \boldsymbol{x}^\star_{h_\varepsilon M} \!=\! x_1\right]\notag\\
    &\stackrel{(a)}{\leq} \mathbb{P}\left[ \exists x \in \widetilde{\mathcal{X}} \setminus \{x_1\}, \text{s.t. } \boldsymbol{o}_{h_\varepsilon M}(x) \leq 0 \ \middle\vert \boldsymbol{x}^\star_0 = x_1, \boldsymbol{x}^\star_{1} \!=\! \sigma(x_1), \cdots, \boldsymbol{x}^\star_{h_\varepsilon M} \!=\! x_1\right]\notag\\
    &\leq \sum_{x \in \widetilde{\mathcal{X}} \setminus \{x_1\}} \mathbb{P}\left[  \boldsymbol{o}_{h_\varepsilon M}(x) \leq 0 \ \middle\vert \boldsymbol{x}^\star_0 = x_1, \boldsymbol{x}^\star_{1} \!=\! \sigma(x_1), \cdots, \boldsymbol{x}^\star_{h_\varepsilon M} \!=\! x_1\right],
\end{align}
where $(a)$ is due to the fact that for any $x \in \widetilde{\mathcal{X}} \setminus \{x_1\}$,
\begin{align}
    &\boldsymbol{o}_{i}(x) \leq 0 \Leftrightarrow  \widehat{\boldsymbol{w}}_{i}(x_1) \leq \widehat{\boldsymbol{w}}_{i}(x) \Leftrightarrow \!\!\left[(P^{i \bmod M})^\top \widetilde{\boldsymbol{w}}_{i}(x)\right]_{x_1} \!\!\leq \!\!\left[(P^{i \bmod M})^\top \widetilde{\boldsymbol{w}}_{i}(x)\right]_{x}
\end{align}
and by substituting $i = h_\varepsilon M$, we have $P^{i \bmod M} = I$, which implies that
\begin{align}
    \boldsymbol{o}_{h_\varepsilon M}(x) \leq 0 \Leftrightarrow \widetilde{\boldsymbol{w}}_{h_\varepsilon M}(x_1) \leq \widetilde{\boldsymbol{w}}_{h_\varepsilon M}(x).
\end{align}
Now, for any $x \in \widetilde{\mathcal{X}} \setminus \{x_1\}$, we have:
\begin{align}
\label{eq:o_dynamics_1}
    \!\!\boldsymbol{o}_{i}(x) \!&=\! (1- \delta_\varepsilon)\boldsymbol{o}_{i-1}(x) \!+\! \delta_\varepsilon \!\left(\!\left[(P^\top)^{i \bmod M} \!\log(\widetilde{E}^\top \boldsymbol{y}_{i})\right]_{x_1}  - \left[(P^\top)^{i \bmod M} \log(\widetilde{E}^\top \boldsymbol{y}_{i})\right]_{x}\right).
\end{align}
By recalling the definition of $E$ in \eqref{eq:def_e_ij}, we can verify that:
\begin{align}
     \left[(P^\top)^{i \bmod M} \log(\widetilde{E}^\top \boldsymbol{y}_{i})\right]_{x} = u_{x}^\top (P^\top)^{i \bmod M} \log(\widetilde{E}^\top \boldsymbol{y}_{i}) = \log(\mathbb{P}(\boldsymbol{y}_i\mid \sigma^{i \bmod M}(x))).
\end{align}
Therefore, \eqref{eq:o_dynamics_1} becomes:
\begin{align}
\label{eq:o_dynamics_2}
    \boldsymbol{o}_{i}(x) \!=\! (1- \delta_\varepsilon)\boldsymbol{o}_{i-1}(x) \!+\! \delta_\varepsilon \log\!\left(\!\frac{\mathbb{P}(\boldsymbol{y}_i\mid \sigma^{i \bmod M}(x_1))}{\mathbb{P}(\boldsymbol{y}_i\mid \sigma^{i \bmod M}(x))}\!\right)\!.
\end{align}

Here, we define:
\begin{align}
\label{eq:gi_def}
    \boldsymbol{g}_i(x) \triangleq \log\left(\frac{\mathbb{P}(\boldsymbol{y}_i\mid \sigma^{i \bmod M}(x_1))}{\mathbb{P}(\boldsymbol{y}_i\mid \sigma^{i \bmod M}(x))}\right),
\end{align}
then we have:
\begin{align}
    \boldsymbol{o}_{h_\varepsilon M}(x) = \underbrace{\delta_\varepsilon \sum_{m=0}^{h_\varepsilon M -1} (1-\delta_\varepsilon)^m \boldsymbol{g}_{h_\varepsilon M - m}(x)}_{\boldsymbol{o}^\prime_{h_\varepsilon M}(x)} + (1-\delta_\varepsilon)^{h_\varepsilon M} \boldsymbol{o}_{0}(x).
\end{align}
Now we continue from \eqref{eq:prob_bound_o}. In the following, since all probabilities and expectations are conditioned on the same event as in \eqref{eq:prob_bound_o}, we omit the conditional notation for clarity. We set 
\begin{align}
\label{eq:gamma_epsilon}
    \gamma_\varepsilon \triangleq 1- (1-\delta_\varepsilon)^M,
\end{align}
and conclude for $x \in \widetilde{\mathcal{X}} \setminus \{x_1\}$:
\begin{align}
\label{eq:ohM}
    &\mathbb{P} \left[ \boldsymbol{o}_{h_\varepsilon M}(x) \leq 0 \right] \notag\\
    &= \mathbb{P} \left[ \boldsymbol{o}^\prime_{h_\varepsilon M}(x) \leq  -(1-\delta_\varepsilon)^{h_\varepsilon M} \boldsymbol{o}_{0}(x)\right] \notag\\
    &\stackrel{(a)}{=} \mathbb{P} \left[ -\frac{1}{\gamma_\varepsilon}\boldsymbol{o}^\prime_{h_\varepsilon M}(x) \geq \frac{1}{\gamma_\varepsilon} (1-\delta_\varepsilon)^{h_\varepsilon M} \boldsymbol{o}_{0}(x)\right] \notag \\
    &\stackrel{(b)}{\leq} \mathbb{P} \left[ -\frac{1}{\gamma_\varepsilon}\boldsymbol{o}^\prime_{h_\varepsilon M}(x) \geq - \frac{1}{\gamma_\varepsilon} (1-\delta_\varepsilon)^{h_\varepsilon M} 2W\right] \notag\\
    &\stackrel{(c)}{\leq} \frac{\mathbb{E} \left[ \exp \left\{-\frac{1}{\gamma_\varepsilon}\boldsymbol{o}^\prime_{h_\varepsilon M}(x)\right\}\right]}{\exp\left\{-\frac{1}{\gamma_\varepsilon} (1-\delta_\varepsilon)^{h_\varepsilon M} 2W\right\}} \notag\\
    &= \exp \left\{\frac{1}{\gamma_\varepsilon} \left[ \gamma_\varepsilon \Lambda \left(-\frac{1}{\gamma_\varepsilon}; x\right) + 2(1-\delta_\varepsilon)^{h_\varepsilon M} W\right]\right\},
\end{align}
where $(a)$ is due to multiplying both sides by $-\frac{1}{\gamma_\varepsilon} <0$. For $(b)$, we took advantage of the following:
\begin{align}
    \boldsymbol{o}_{0}(x) &= \widehat{\boldsymbol{w}}_0(x_1)- \widehat{\boldsymbol{w}}_0(x) \notag\\
    &=\widetilde{\boldsymbol{w}}_0(x_1)- \widetilde{\boldsymbol{w}}_0(x).
\end{align}
Since $\|\widetilde{\boldsymbol{w}}_k\|_\infty \leq W, \forall k$ according to Lemma \ref{lemma:bound_w} (recall that we labeled $k -h_\varepsilon M$ by $0$ in \eqref{eq:prob_to_bound}, so our $\widetilde{\boldsymbol{w}}_0(x_1)$ is actually $\widetilde{\boldsymbol{w}}_{k -h_\varepsilon M}(x_1)$), we have
\begin{align}
    \boldsymbol{o}_{0}(x) \geq -2W,
\end{align}
which implies $(b)$. For $(c)$, we apply the Chernoff’s bound. Here, we introduced the logarithmic moment generating function of $\boldsymbol{o}^\prime_{h_\varepsilon M}(x)$, which can be defined for $t \in \mathbb{R}$ as:
\begin{align}
\label{eq:lambda_hM}
    \Lambda \left(t; x\right) &\triangleq \log \mathbb{E} \left[e^{t  \boldsymbol{o}^\prime_{h_\varepsilon M}(x)}\right] \notag\\
    &= \log \mathbb{E} \left[\exp\left\{ \sum_{m=0}^{h_\varepsilon M -1} \delta_\varepsilon(1-\delta_\varepsilon)^m t \boldsymbol{g}_{h_\varepsilon M - m}(x)  \right\}\right] \notag\\
    &= \log \mathbb{E} \left[\exp\left\{ \sum_{h=0}^{h_\varepsilon-1}\sum_{n=0}^{M -1} \delta_\varepsilon(1-\delta_\varepsilon)^{hM+n} t \boldsymbol{g}_{h_\varepsilon M - hM -n}(x)  \right\}\right] \notag\\
    &\stackrel{(a)}{=} \sum_{h=0}^{h_\varepsilon-1}\sum_{n=0}^{M -1} \log \mathbb{E} \left[ \exp \{\delta_\varepsilon(1-\delta_\varepsilon)^{n} (1-\delta_\varepsilon)^{hM} t \boldsymbol{g}_{h_\varepsilon M - hM -n}(x)\}\right]
    \notag\\
    &\stackrel{(b)}{=} \sum_{n=0}^{M -1} \sum_{h=0}^{h_\varepsilon-1}  \Lambda_n \left(\delta_\varepsilon(1-\delta_\varepsilon)^{n} (1-\delta_\varepsilon)^{hM} t; x\right),
\end{align}
where $(a)$ follows from the fact that, under the condition that the states of the Markov chain $\boldsymbol{x}^\star_{i}, 0\leq i\leq h_\varepsilon M$ are fixed (see \eqref{eq:prob_bound_o}), the observations  $\boldsymbol{y}_1, \cdots, \boldsymbol{y}_{h_\varepsilon M}$ are independent and so are the $\boldsymbol{g}_i(x)$, since from \eqref{eq:gi_def}, it can be seen that $\boldsymbol{g}_i(x)$ are functions of $\boldsymbol{y}_i$; $(b)$ introduced the following notation:
\begin{align}
\label{eq:lambda_n}
     \Lambda_n(t;x) \triangleq \log \mathbb{E} \left[e^{t \boldsymbol{g}_{h_\varepsilon M -n}(x)}\right].
\end{align}
Since $\boldsymbol{y}_{h_\varepsilon M -n}$ are discrete random variables, so are the $\boldsymbol{g}_{h_\varepsilon M -n}(x)$, therefore \eqref{eq:lambda_n} and \eqref{eq:lambda_hM} are well defined for $t \in \mathbb{R}$.
We also utilize the fact that every $M$ timesteps, $\boldsymbol{x}^\star_i$ are the same, as are the distributions of $\boldsymbol{y}_i$ and, correspondingly, $\boldsymbol{g}_i(x)$ (i.e., the distribution of $\boldsymbol{g}_{h_\varepsilon M -n}(x), \boldsymbol{g}_{h_\varepsilon M -M -n}(x), \cdots, \boldsymbol{g}_{M -n}(x)$ are identical). 

We define:
\begin{align}
\label{eq:def_gamma_x}
    \Gamma(x) &\triangleq \left\{ n \in \{0, \cdots, M-1\} \middle\vert E \left(u_{\sigma^{ (h_\varepsilon M - n) \bmod M}(x)} - u_{\sigma^{ (h_\varepsilon M - n) \bmod M}(x_1)}\right) \neq 0\right\} \notag \\
    &\stackrel{(a)}{=} \left\{ n \in \{0, \cdots, M-1\} \middle\vert E \left(u_{\sigma^{ (- n) \bmod M}(x)} - u_{\sigma^{ (- n) \bmod M}(x_1)}\right) \neq 0\right\}\notag \\
    &\stackrel{(b)}{=} \left\{ n \in \{0, \cdots, M-1\} \middle\vert \exists y \in \mathcal{Y}, \text{s.t. } \mathbb{P} \left(y \middle\vert \sigma^{ (- n) \bmod M}(x)\right) \neq \mathbb{P} \left(y \middle\vert \sigma^{ (- n) \bmod M}(x_1)\right) \right\},
\end{align}
where $(a)$ is due to the cyclic property of the permutation $\sigma: \widetilde{\mathcal{X}} \rightarrow \widetilde{\mathcal{X}}$; $(b)$ is due to the definition of the emission matrix $E$. It can be proved that for all $x \in \widetilde{\mathcal{X}} \setminus {x_1}$, we have $|\Gamma(x)| \geq 1$ according to Lemma \ref{lemma:gamma_x}. Furthermore, for those $n \in  \{0, \cdots, M-1\} \setminus \Gamma(x)$, we have the following:
\begin{align}
    &\forall y \in \mathcal{Y}, \mathbb{P} \left(y \middle\vert \sigma^{ (- n) \bmod M}(x)\right) = \mathbb{P} \left(y \middle\vert \sigma^{ (- n) \bmod M}(x_1)\right)\notag\\
    \Rightarrow& \forall y \in \mathcal{Y},  \log\left(\frac{\mathbb{P}(y\mid \sigma^{ (- n) \bmod M}(x_1))}{\mathbb{P}(y\mid \sigma^{ (- n) \bmod M}(x))}\right) = 0\notag\\
    \stackrel{(a)}{\Rightarrow}& \boldsymbol{g}_{h_\varepsilon M -n}(x) = 0 \quad \text{a.s.}\notag\\
    \Rightarrow& \Lambda_n(t;x) = 0, \forall t \in \mathbb{R},
\end{align}
where $(a)$ follows from the definition of $\boldsymbol{g}_{h_\varepsilon M -n}$ in \eqref{eq:gi_def} and the cyclic property of permutation.

Therefore, $\Lambda (t; x)$ can be further simplified as follows:
\begin{align}
\label{eq:lambda_sim}
    \Lambda (t; x) = \sum_{n\in \Gamma(x)} \sum_{h=0}^{h_\varepsilon-1}  \Lambda_n \left(\delta_\varepsilon(1-\delta_\varepsilon)^{n} (1-\delta_\varepsilon)^{hM} t; x\right)
\end{align}

For $n \in \Gamma(x)$, $\Lambda_n(t;x)$ has several interesting properties which are summarized in Lemma \ref{lemma:Lambda}, now we apply Lemma \ref{lemma:lambda_sum} to prove an upper bound for the inner sum of \eqref{eq:lambda_sim}. By setting $c = \delta_\varepsilon(1-\delta_\varepsilon)^{n}$ and $\mu = 1- (1-\delta_\varepsilon)^M = \gamma_\varepsilon$, we have:
\begin{align}
    &\sum_{h=0}^{h_\varepsilon-1}  \Lambda_n \left(\delta_\varepsilon(1-\delta_\varepsilon)^{n} (1-\delta_\varepsilon)^{hM} t; x\right) \notag\\
    &\leq \frac{1}{\gamma_\varepsilon} \left[\int_{\delta_\varepsilon(1-\delta_\varepsilon)^{n} (1-\delta_\varepsilon)^{h_\varepsilon M} t}^{\delta_\varepsilon(1-\delta_\varepsilon)^{n} t} \frac{\Lambda_n(\tau;x)}{\tau} d\tau + \frac{\gamma_\varepsilon}{2-\gamma_\varepsilon} h_1(\delta_\varepsilon(1-\delta_\varepsilon)^{n}t)\right].
\end{align}

By setting $t=-\frac{1}{\gamma_\varepsilon}$ and denoting
\begin{align}
    \eta_{\varepsilon} \triangleq \frac{\delta_\varepsilon}{\gamma_\varepsilon}, 
\end{align}
we get:

\begin{align}
\label{eq:Lambda_89}
    &\sum_{h=0}^{h_\varepsilon-1}  \Lambda_n \left(-\eta_{\varepsilon} (1-\delta_\varepsilon)^{n} (1-\delta_\varepsilon)^{hM} ; x\right) \notag\\
    &\leq \frac{1}{\gamma_\varepsilon} \left[\int_{-\eta_{\varepsilon} (1-\delta_\varepsilon)^{n} (1-\delta_\varepsilon)^{h_\varepsilon M}}^{-\eta_{\varepsilon}(1-\delta_\varepsilon)^{n}} \frac{\Lambda_n(\tau;x)}{\tau} d\tau + \frac{\gamma_\varepsilon}{2-\gamma_\varepsilon} h_1(-\eta_{\varepsilon}(1-\delta_\varepsilon)^{n})\right]\notag\\
    &= \frac{1}{\gamma_\varepsilon} \left[\int_{0}^{-\eta_{\varepsilon}(1-\delta_\varepsilon)^{n}} \frac{\Lambda_n(\tau;x)}{\tau} d\tau - \int_0^{-\eta_{\varepsilon} (1-\delta_\varepsilon)^{n} (1-\delta_\varepsilon)^{h_\varepsilon M}} \frac{\Lambda_n(\tau;x)}{\tau} d\tau + \frac{\gamma_\varepsilon}{2-\gamma_\varepsilon} h_1(-\eta_{\varepsilon}(1-\delta_\varepsilon)^{n})\right]\notag\\
    &= \frac{1}{\gamma_\varepsilon} \left[\int_{0}^{-\eta_{\varepsilon}(1-\delta_\varepsilon)^{n}} \frac{\Lambda_n(\tau;x)}{\tau} d\tau + \int^0_{-\eta_{\varepsilon} (1-\delta_\varepsilon)^{n} (1-\delta_\varepsilon)^{h_\varepsilon M}} \frac{\Lambda_n(\tau;x)}{\tau} d\tau + \frac{\gamma_\varepsilon}{2-\gamma_\varepsilon} h_1(-\eta_{\varepsilon}(1-\delta_\varepsilon)^{n})\right] \notag\\
    &\stackrel{(a)}{\leq}
    \frac{1}{\gamma_\varepsilon} \left[\int_{0}^{-\eta_{\varepsilon}(1-\delta_\varepsilon)^{n}} \frac{\Lambda_n(\tau;x)}{\tau} d\tau + \eta_\varepsilon(1-\delta_\varepsilon)^{n} (1-\delta_\varepsilon)^{h_\varepsilon M} d_n(x) + \frac{\gamma_\varepsilon}{2-\gamma_\varepsilon} h_1(-\eta_{\varepsilon}(1-\delta_\varepsilon)^{n})\right],
\end{align}
where $(a)$ follows from the fact that $\Lambda_n(\tau;x) \geq d_n(x) \tau$ for all $t \in \mathbb{R}$ in (2) of Lemma \ref{lemma:Lambda}. 


Substituting the obtained results back into \eqref{eq:lambda_sim} and \eqref{eq:ohM}, we have:
\begin{align}
    &\mathbb{P} \left[ \boldsymbol{o}_{h_\varepsilon M}(x) \leq 0 \right] \notag\\
    &\leq \exp \left\{\frac{1}{\gamma_\varepsilon} \left[ \gamma_\varepsilon \Lambda \left(-\frac{1}{\gamma_\varepsilon}; x\right) + 2(1-\delta_\varepsilon)^{h_\varepsilon M} W\right]\right\} \notag \\
    &\stackrel{(a)}{\leq} \exp \left\{\frac{1}{\gamma_\varepsilon} \left[ \sum_{n \in \Gamma(x)} \gamma_\varepsilon\sum_{h=0}^{h_\varepsilon-1}  \Lambda_n \left(-\eta_{\varepsilon}(1-\delta_\varepsilon)^{n} (1-\delta_\varepsilon)^{hM}; x\right) + 2(1-\delta_\varepsilon)^{h_\varepsilon M} W\right]\right\}\notag\\
    &\stackrel{(b)}{\leq} \exp \left\{\frac{1}{\gamma_\varepsilon} \left[ \sum_{n \in \Gamma(x)} \left[\int_{0}^{-\eta_{\varepsilon}(1-\delta_\varepsilon)^{n}} \frac{\Lambda_n(\tau;x)}{\tau} d\tau +\eta_{\varepsilon}(1-\delta_\varepsilon)^{n} (1-\delta_\varepsilon)^{h_\varepsilon M} d_n(x)\notag \right.\right.\right.\\
    & \left.\left.\left. \quad + \frac{\gamma_\varepsilon}{2-\gamma_\varepsilon} h_1(-\eta_{\varepsilon}(1-\delta_\varepsilon)^{n})\right] + 2(1-\delta_\varepsilon)^{h_\varepsilon M} W\right]\right\}\notag\\
    &= \exp \left\{\frac{1}{\gamma_\varepsilon} \left[ \sum_{n \in \Gamma(x)} \int_{0}^{-\eta_{\varepsilon}(1-\delta_\varepsilon)^{n}} \frac{\Lambda_n(\tau;x)}{\tau} d\tau + (1-\delta_\varepsilon)^{h_\varepsilon M} \left(\sum_{n \in \Gamma(x)} \eta_{\varepsilon} (1-\delta_\varepsilon)^{n} d_n(x) + 2W\right) \right.\right.\notag\\ 
    &\left.\left. \quad + \frac{\gamma_\varepsilon}{2-\gamma_\varepsilon} \sum_{n \in \Gamma(x)} h_1(-\eta_{\varepsilon}(1-\delta_\varepsilon)^{n})\right] \right\},
\end{align}
where $(a)$ follows from \eqref{eq:lambda_sim}; $(b)$ follows from \eqref{eq:Lambda_89}. Now, from \eqref{eq:prob_bound_o} and the original notation in \eqref{eq:original_def_p}, we have:
\begin{align}
\label{eq:limsup_P}
    &\limsup_{k \rightarrow \infty}\mathbb{P} \left[ \arg\max_{x \in \widetilde{\mathcal{X}}} \widetilde{\boldsymbol{w}}_{k}(x) \neq \boldsymbol{x}^\star_k \middle\vert \text{ no irregular jumps in} [k - h_\varepsilon M +1, k], \boldsymbol{x}^\star_k  = x_1\right]\notag\\
    &\leq  \sum_{x \in \widetilde{\mathcal{X}} \setminus \{x_1\}} \exp \left\{\frac{1}{\gamma_\varepsilon} \left[ \sum_{n \in \Gamma(x)} \int_{0}^{-\eta_{\varepsilon}(1-\delta_\varepsilon)^{n}} \frac{\Lambda_n(\tau;x)}{\tau} d\tau + (1-\delta_\varepsilon)^{h_\varepsilon M} \left(\sum_{n \in \Gamma(x)} \eta_\varepsilon(1-\delta_\varepsilon)^{n} d_n(x) + 2W\right) \right.\right.\notag\\ 
    &\left.\left. \quad + \frac{\gamma_\varepsilon}{2-\gamma_\varepsilon} \sum_{n \in \Gamma(x)} h_1(-\eta_{\varepsilon}(1-\delta_\varepsilon)^{n})\right] \right\}.
\end{align}
For $x \in \widetilde{\mathcal{X}} \setminus \{x_1\}
$, by setting
\begin{align}
    A_{\varepsilon} (x) &\triangleq \sum_{n \in \Gamma(x)} \int_{0}^{-\eta_\varepsilon(1-\delta_\varepsilon)^{n}} \frac{\Lambda_n(\tau;x)}{\tau} d\tau + (1-\delta_\varepsilon)^{h_\varepsilon M} \left(\sum_{n \in \Gamma(x)} \eta_\varepsilon(1-\delta_\varepsilon)^{n} d_n(x) + 2W\right) \notag\\
    &\quad + \frac{\gamma_\varepsilon}{2-\gamma_\varepsilon} \sum_{n \in \Gamma(x)} h_1(-\eta_{\varepsilon}(1-\delta_\varepsilon)^{n}), \\
    B_{\varepsilon} (x) &\triangleq  \exp\left\{\frac{1}{\gamma_\varepsilon} A_{\varepsilon} (x)\right\}, \label{eq:B_def}
\end{align}
we have:
\begin{align}
\label{eq:Ax}
    &\lim_{\varepsilon \rightarrow 0}  A_{\varepsilon} (x)\notag\\
    &= \lim_{\varepsilon \rightarrow 0}\sum_{n \in \Gamma(x)} \int_{0}^{-\eta_\varepsilon(1-\delta_\varepsilon)^{n}} \frac{\Lambda_n(\tau;x)}{\tau} d\tau + \lim_{\varepsilon \rightarrow 0} (1-\delta_\varepsilon)^{h_\varepsilon M} \left(\sum_{n \in \Gamma(x)} \eta_\varepsilon(1-\delta_\varepsilon)^{n} d_n(x) + 2W\right) \notag\\
    &\quad + \lim_{\varepsilon \rightarrow 0} \frac{\gamma_\varepsilon}{2-\gamma_\varepsilon} \sum_{n \in \Gamma(x)} h_1(-\eta_{\varepsilon}(1-\delta_\varepsilon)^{n}) \notag\\
    &\stackrel{(a)}{=} \sum_{n \in \Gamma(x)} \int_{0}^{-\frac{1}{M}} \frac{\Lambda_n(\tau;x)}{\tau} d\tau + \exp(-\alpha) \left(\sum_{n \in \Gamma(x)} \frac{1}{M} d_n(x) + 2W\right),
\end{align}
where $(a)$ follows from the facts that
\begin{align}
\label{eq:lim_1}
    &\lim_{\varepsilon \rightarrow 0} \eta_\varepsilon(1-\delta_\varepsilon)^{n} = \frac{1}{M} \\
    &\lim_{\varepsilon \rightarrow 0} (1-\delta_\varepsilon)^{h_\varepsilon M} = \exp(-\alpha) \label{eq:lim_2}\\
    &\lim_{\varepsilon \rightarrow 0} \frac{\gamma_\varepsilon}{2-\gamma_\varepsilon} \sum_{n \in \Gamma(x)} h_1(-\eta_{\varepsilon}(1-\delta_\varepsilon)^{n}) = 0. \label{eq:lim_3}
\end{align}
We can prove \eqref{eq:lim_1} by noticing that
\begin{align}
    &\lim_{\varepsilon \rightarrow 0} \eta_\varepsilon(1-\delta_\varepsilon)^{n}=  \lim_{\delta_\varepsilon \rightarrow 0} \eta_\varepsilon(1-\delta_\varepsilon)^{n}  =\lim_{\delta_\varepsilon \rightarrow 0} \frac{\delta_\varepsilon}{\gamma_\varepsilon} \stackrel{(a)}{=} \frac{1}{M},
\end{align}
where $(a)$ follows from Lemma \ref{lemma:delta_gamma}.
We can prove \eqref{eq:lim_2} by:
\begin{align}
    \lim_{\varepsilon \rightarrow 0} (1-\delta_\varepsilon)^{h_\varepsilon M} &=  \lim_{\delta_\varepsilon \rightarrow 0} (1-\delta_\varepsilon)^{h_\varepsilon M} \notag \\ &\stackrel{(a)}{=} \lim_{\delta_\varepsilon \rightarrow 0} (1-\delta_\varepsilon)^{\lfloor \frac{T_\varepsilon}{M}\rfloor M} \notag \\
    &\stackrel{(b)}{=}\lim_{\delta_\varepsilon \rightarrow 0} (1-\delta_\varepsilon)^{T_\varepsilon} \notag \\&= \lim_{\delta_\varepsilon \rightarrow 0} (1-\delta_\varepsilon)^{\frac{\alpha}{\delta_\varepsilon}} \notag \\ &= \exp(-\alpha),
\end{align}
where $(a)$ follows from the definition of $h_\varepsilon$ in \eqref{eq:h_epsilon}; For $(b)$, we  have 
\begin{align}
    &(\frac{T_\varepsilon}{M} -1) M\leq\lfloor \frac{T_\varepsilon}{M}\rfloor M \leq T_\varepsilon \notag\\
    \Rightarrow & (1-\delta_\varepsilon)^{T_\varepsilon} \leq (1-\delta_\varepsilon)^{\lfloor \frac{T_\varepsilon}{M}\rfloor M}\leq (1-\delta_\varepsilon)^{T_\varepsilon -M},
\end{align} 
and since $M$ does not depend on $\delta_\varepsilon$, the limit on the right hand side can be established as:
\begin{align}
    \lim_{\delta_\varepsilon \rightarrow 0} (1-\delta_\varepsilon)^{T_\varepsilon -M} = \lim_{\delta_\varepsilon \rightarrow 0} (1-\delta_\varepsilon)^{T_\varepsilon} \lim_{\delta_\varepsilon \rightarrow 0} (1-\delta_\varepsilon)^{-M} = \lim_{\delta_\varepsilon \rightarrow 0} (1-\delta_\varepsilon)^{T_\varepsilon}.
\end{align}
Equation \eqref{eq:lim_3} can be proved by:
\begin{align}
    \lim_{\varepsilon \rightarrow 0} \frac{\gamma_\varepsilon}{2-\gamma_\varepsilon} \sum_{n \in \Gamma(x)} h_1(-\eta_{\varepsilon}(1-\delta_\varepsilon)^{n}) &= \lim_{\varepsilon \rightarrow 0} \frac{\gamma_\varepsilon}{2-\gamma_\varepsilon} \sum_{n \in \Gamma(x)} \lim_{\varepsilon \rightarrow 0} h_1(-\eta_{\varepsilon}(1-\delta_\varepsilon)^{n}) \notag\\
    &\stackrel{(a)}{=} 0 \cdot h_1(-\frac{1}{M}) \notag\\
    &= 0,
\end{align}
where $(a)$ follows from Lemma \ref{lemma:delta_gamma} and \eqref{eq:lim_1}. 

Now, we consider the following function for $t\in \mathbb{R}$:
\begin{align}
\label{eq:rt_1}
    r(t) = \int_{0}^{t} \sum_{n \in \Gamma(x)} \frac{ \Lambda_n(\tau;x)}{\tau} d\tau.
\end{align}
This $r(t)$ is strictly convex since $r^{\prime\prime}(t) > 0, \forall t \in \mathbb{R}$ according to (5) of Lemma \ref{lemma:Lambda}. We have $r(0)=0$ and 
\begin{align}
    r^\prime(0) = \sum_{n \in \Gamma(x)} \left.\frac{ \Lambda_n(\tau;x)}{\tau} \right|_{\tau = 0} \stackrel{(a)}{=} \sum_{n \in \Gamma(x)} d_n(x) >0,
\end{align}
where $(a)$ follows from (2) of Lemma \ref{lemma:Lambda}. Also, from (2) of Lemma \ref{lemma:Lambda}, $r(t)$ reaches its minimum $r(-1)<0$ at $t=-1$: 
\begin{align}
\label{eq:rt_3}
    r^\prime(-1) = \sum_{n \in \Gamma(x)} \frac{ \Lambda_n(-1;x)}{-1} = 0.
\end{align}
Therefore, $r(t)$ increases monotonically from $r(-1)<0$ to $r(0)=0$ for $t \in [-1,0]$, and it must have $r(-\frac{1}{M})<0$. Hence, the first term in \eqref{eq:Ax} is strictly smaller than 0. Now, if we set
\begin{align}
    \alpha \geq \alpha_1(x) = \log (\sum_{n \in \Gamma(x)} \frac{1}{M} d_n(x) + 2W) - \log(-\frac{1}{2} \int_{0}^{-\frac{1}{M}} \sum_{n \in \Gamma(x)} \frac{ \Lambda_n(\tau;x)}{\tau} d\tau),
\end{align}
and substitute it back into \eqref{eq:Ax}, we obtain:
\begin{align}
    \lim_{\varepsilon \rightarrow 0}  A_{\varepsilon} (x) \leq \frac{1}{2} \int_{0}^{-\frac{1}{M}} \sum_{n \in \Gamma(x)} \frac{ \Lambda_n(\tau;x)}{\tau} d\tau <0.
\end{align}
Now letting $\alpha \geq \max_{x \in \mathcal{X} \setminus \{x_1\}}\alpha_1(x)$, we conclude the proof:
\begin{align}
\label{eq:limlimsup}   &\lim_{\varepsilon\rightarrow0}\limsup_{k \rightarrow \infty}\mathbb{P} \left[ \arg\max_{x \in \widetilde{\mathcal{X}}} \widetilde{\boldsymbol{w}}_{k}(x) \neq \boldsymbol{x}^\star_k \middle\vert \text{ no irregular jumps in} [k - h_\varepsilon M +1, k], \boldsymbol{x}^\star_k  = x_1\right]\notag\\
    &\stackrel{(a)}{\leq} \sum_{x \in \widetilde{\mathcal{X}} \setminus \{x_1\}} \lim_{\varepsilon \rightarrow 0}B_\varepsilon(x)\notag\\
    &= \sum_{x \in \widetilde{\mathcal{X}} \setminus \{x_1\}} \exp \{\lim_{\varepsilon \rightarrow 0} \frac{1}{\gamma_\varepsilon} \lim_{\varepsilon \rightarrow 0} A_{\varepsilon} (x)\}\notag\\
    &= \sum_{x \in \widetilde{\mathcal{X}} \setminus \{x_1\}} \exp \{-\infty\} \notag\\
    &= 0,
\end{align}
where $(a)$ follows from \eqref{eq:limsup_P}.

\end{proof}

\begin{lemma}
\label{lemma:xi}
    Consider a nearly-deterministic transition matrix $T$ under Assumptions~\ref{ass:init_dynamics_2} and~\ref{ass:oberv_E_weak_2}. If we choose
\begin{align}
\label{eq:rule_3_1}
    \delta_\varepsilon = \frac{\lambda}{\log(1/\varepsilon)}, \quad \text{with } \lambda \in (0, \xi),
\end{align}
where $\xi>0$ is a constant defined in \eqref{eq:xi_def} to ensure $\delta_\varepsilon$ small enough, 
then it holds that
\begin{align}
    \limsup_{k \to \infty} p_{k} \leq \kappa \ \varepsilon \log \frac{1}{\varepsilon} + o(\varepsilon \log \frac{1}{\varepsilon}).
\end{align}
The resulting decay rate matches that of Bayes-optimal decoder \citep{khasminskii1996asymptotic}, in the sense that the long-term error is also dominated by an
\(\varepsilon \log(1/\varepsilon)\) term. The constant factor $\kappa>0$ is specified in \eqref{eq:kappa}.
\end{lemma}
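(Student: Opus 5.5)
We reuse the decomposition \eqref{eq:prob_upper}--\eqref{eq:prob_upper_long} from the proof of Theorem~\ref{thm:1}. Now, however, $\alpha$ in $T_\varepsilon=\alpha/\delta_\varepsilon$ is tuned as a function of $\varepsilon$, so that every term is $O(\varepsilon\log(1/\varepsilon))$. Three contributions have to be controlled.

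\emph{Transient states.} By Lemma~\ref{lemma:p_epsilon}, the transient-state mass is $O(\varepsilon)=o(\varepsilon\log\frac1\varepsilon)$.

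\emph{Irregular jumps.} By Lemma~\ref{lemma:p_irreg}, the irregular-jump probability is at most $c_J\,\varepsilon T_\varepsilon = c_J\,\alpha\,\varepsilon\log(1/\varepsilon)/\lambda$ plus lower order. Here $c_J$ is the per-step jump rate, a bound on $\max_j |q_{n_j j}|$ weighted by the stationary law.

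\emph{Conditional decoding error.} For this term we do not pass to the limit $\varepsilon\to0$ in $A_\varepsilon(x)$. Instead we track the exponent explicitly. By \eqref{eq:limsup_P} and \eqref{eq:B_def}, the conditional error is at most $\sum_{x\neq x_1}\exp\{A_\varepsilon(x)/\gamma_\varepsilon\}$, with $\gamma_\varepsilon\sim M\delta_\varepsilon$ by Lemma~\ref{lemma:delta_gamma}. Write $A_\varepsilon(x)=r_x(-\eta_\varepsilon\cdots)+e^{-\alpha}(\cdots)+o(1)$, where $r_x(-1/M)<0$ is the strictly negative quantity from \eqref{eq:rt_1}--\eqref{eq:rt_3}.

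Define
\[
\xi \triangleq \min_{x} \frac{-\,r_x(-1/M)}{M},
\]
with the minimum taken over base states $x_1\in\widetilde{\mathcal X}$ and competitors $x$; this is the candidate for \eqref{eq:xi_def}, and it depends only on $P$ and $E$. Choose $\alpha=\alpha_\varepsilon$ growing slowly, for example $\alpha_\varepsilon=\log\log(1/\varepsilon)$. This keeps $\varepsilon T_\varepsilon$ at order $\varepsilon\log(1/\varepsilon)\cdot\log\log(1/\varepsilon)$, which is too big. So we refine the choice: take $\alpha$ a large fixed constant, so that the $e^{-\alpha}$ term is at most a fraction $\theta$ of $|r_x(-1/M)|$. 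Then
\[
\exp\{A_\varepsilon/\gamma_\varepsilon\}\le \exp\Bigl\{-(1-\theta)\,|r_x|\,\frac{\log(1/\varepsilon)}{M\lambda}\Bigr\}=\varepsilon^{(1-\theta)|r_x|/(M\lambda)}.
\]
Since $\lambda<\xi$, we can pick $\theta$ small enough that the exponent exceeds $1$. The conditional-error term is then $o(\varepsilon)$.

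Summing the three contributions gives $\limsup_k p_k\le \kappa\,\varepsilon\log\frac1\varepsilon+o(\cdot)$, with
\[
\kappa = c_J\,\alpha/\lambda \;+\; N\cdot(\text{transient constant, which only enters the }o(\cdot)\text{ term}).
\]
Here $\alpha=\alpha(\theta)$ is the constant fixed above, and this expression is recorded as \eqref{eq:kappa}.

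\emph{Main obstacle.} The hardest step is making the $o(1)$ terms in $A_\varepsilon(x)$ uniform. This includes the $h_1$ remainder of Lemma~\ref{lemma:lambda_sum}, and the replacement of $-\eta_\varepsilon(1-\delta_\varepsilon)^n$ by $-1/M$ inside the integral, which needs continuity of $r_x$. These terms must be uniform enough that dividing by $\gamma_\varepsilon\asymp M\lambda/\log(1/\varepsilon)$ still yields an exponent strictly above $\log(1/\varepsilon)$. A secondary issue is the bound $W$ from Lemma~\ref{lemma:bound_w}: it must stay uniform in $\varepsilon$, and since $\widetilde w_k$ is a convex combination of log-likelihoods, $W$ is indeed independent of $\delta$.
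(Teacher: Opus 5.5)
Your proposal is correct and follows essentially the same route as the paper. It uses the same three-term decomposition, the $O(\varepsilon)$ transient mass from Lemma~\ref{lemma:p_epsilon}, and the jump bound $(T_\varepsilon+1)\varepsilon(-q_{\min})$ from Lemma~\ref{lemma:p_irreg}, which gives $\kappa=-q_{\min}\alpha/\lambda$. It then fixes a large $\alpha$ and uses $\gamma_\varepsilon\le M\delta_\varepsilon$, so that $\exp\{A_\varepsilon(x)/\gamma_\varepsilon\}$ is negligible whenever $\lambda<\xi$. The paper phrases this as $\lim_{\varepsilon\to0}C_\varepsilon(x)<0$ with $C_\varepsilon(x)=A_\varepsilon(x)+\lambda M$, which is the same as your exponent-greater-than-one condition.

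A few cosmetic fixes. Lemma~\ref{lemma:p_irreg} gives only the worst-case rate $c_J=-q_{\min}$, not a stationary-weighted one. The transient-state constant should be dropped from $\kappa$ entirely. Your ``main obstacle'' is not a real one: only the limit of $A_\varepsilon(x)$ matters, and the slack $\theta$ absorbs the $o(1)$ terms.
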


\begin{proof}
From \eqref{eq:prob_upper_long}, we obtain that
\begin{align}
\label{eq:limsup_p_1}
    \limsup_{k \to \infty} p_{k} &\leq 
    \sum_{x^\prime \in \widetilde{\mathcal{X}} } \limsup_{k \to \infty} \mathbb{P} \!\left[ \arg\max_{x \in \widetilde{\mathcal{X}}} \widetilde{\boldsymbol{w}}_{k}(x) \!\neq\! \boldsymbol{x}^\star_k \middle\vert \text{no IrJ in } I_k, \boldsymbol{x}^\star_{k-h_\varepsilon M} \!=\! x^\prime\!\right] \notag\\
    &\quad +  \limsup_{k \to \infty}\sum_{x^\prime \in \mathcal{X} \setminus \widetilde{\mathcal{X}} } \!\!\!\! \mathbb{P} \left[ \boldsymbol{x}^\star_{k-h_\varepsilon M} \!=\! x^\prime\right] + \limsup_{k \to \infty}\mathbb{P} \left[ \text{IrJ in } [k - T_\varepsilon, k] \right].
\end{align}
By Lemma \ref{lemma:p_epsilon}, we have:
\begin{align}
\label{eq:_second_o}
    \limsup_{k \to \infty}\sum_{x^\prime \in \mathcal{X} \setminus \widetilde{\mathcal{X}} } \!\!\!\! \mathbb{P} \left[ \boldsymbol{x}^\star_{k-h_\varepsilon M} \!=\! x^\prime\right] = O(\varepsilon) = o(\varepsilon \log(\frac{1}{\varepsilon})).
\end{align} 
Moreover, from Lemma \ref{lemma:p_irreg}, we have:
\begin{align}
\label{eq:third_O}
    \limsup_{k \to \infty}\mathbb{P} \left[ \text{IrJ in } [k - T_\varepsilon, k] \right] &\le (T_\varepsilon +1) \varepsilon (-q_{\min}) = -q_{\min} T_\varepsilon \varepsilon + (-q_{\min}) \varepsilon \notag\\
    &\stackrel{(a)}{=} \frac{-q_{\min} \alpha}{\lambda} \varepsilon \log(\frac{1}{\varepsilon}) + o(\varepsilon \log(\frac{1}{\varepsilon})),
\end{align}
where $(a)$ follows from the
definitions of $T_\varepsilon$ and $\delta_\varepsilon$ in \eqref{eq:T_epsilon} and \eqref{eq:rule_3_1}, which imply
\begin{align}
    T_\varepsilon = \frac{\alpha}{\delta_\varepsilon} = \frac{\alpha}{\lambda} \log(\frac{1}{\varepsilon}).
\end{align}
Here, $q_{\min}$ is related to matrix $Q$ and defined in \eqref{eq:q_def}. Now, we only need to focus on the first term. In the following, we let $x^\prime = x_1$ and prove that 
\begin{align}
\label{eq:to_be_proved}
    \limsup_{k \to \infty} \mathbb{P} \!\left[ \arg\max_{x \in \widetilde{\mathcal{X}}} \widetilde{\boldsymbol{w}}_{k}(x) \!\neq\! \boldsymbol{x}^\star_k \middle\vert \text{no IrJ in } I_k, \boldsymbol{x}^\star_{k-h_\varepsilon M} \!=\! x_1\!\right] \le o(\varepsilon \log(\frac{1}{\varepsilon})),
\end{align}
and the same arguments apply for all $x^\prime \in  \widetilde{\mathcal{X}}$ without loss of generality. 
Recall from \eqref{eq:limsup_P}, we have
\begin{align}
\label{eq:limsup_P_1}
    &\limsup_{k \rightarrow \infty}\mathbb{P} \left[ \arg\max_{x \in \widetilde{\mathcal{X}}} \widetilde{\boldsymbol{w}}_{k}(x) \neq \boldsymbol{x}^\star_k \middle\vert \text{ no irregular jumps in} [k - h_\varepsilon M +1, k], \boldsymbol{x}^\star_k  = x_1\right]\notag\\
    &\leq  \sum_{x \in \widetilde{\mathcal{X}} \setminus \{x_1\}} \exp \left\{\frac{1}{\gamma_\varepsilon} \left[ \sum_{n \in \Gamma(x)} \int_{0}^{-\eta_{\varepsilon}(1-\delta_\varepsilon)^{n}} \frac{\Lambda_n(\tau;x)}{\tau} d\tau + (1-\delta_\varepsilon)^{h_\varepsilon M} \left(\sum_{n \in \Gamma(x)} \eta_\varepsilon(1-\delta_\varepsilon)^{n} d_n(x) + 2W\right) \right.\right.\notag\\ 
    &\left.\left. \quad + \frac{\gamma_\varepsilon}{2-\gamma_\varepsilon} \sum_{n \in \Gamma(x)} h_1(-\eta_{\varepsilon}(1-\delta_\varepsilon)^{n})\right] \right\}.
\end{align}
We define the function
\begin{align}
    f(\varepsilon;x) &\triangleq \exp \left\{\frac{1}{\gamma_\varepsilon} \left[ \sum_{n \in \Gamma(x)} \int_{0}^{-\eta_{\varepsilon}(1-\delta_\varepsilon)^{n}} \frac{\Lambda_n(\tau;x)}{\tau} d\tau + (1-\delta_\varepsilon)^{h_\varepsilon M} \left(\sum_{n \in \Gamma(x)} \eta_\varepsilon(1-\delta_\varepsilon)^{n} d_n(x) + 2W\right) \right.\right.\notag\\ 
    &\left.\left. \quad + \frac{\gamma_\varepsilon}{2-\gamma_\varepsilon} \sum_{n \in \Gamma(x)} h_1(-\eta_{\varepsilon}(1-\delta_\varepsilon)^{n})\right] \right\},
\end{align}
and consider the ratio
\begin{align}
\label{eq:ratio_1}
    \frac{f(\varepsilon;x)}{\varepsilon \log(\frac{1}{\varepsilon})} &= \frac{1}{\log(\frac{1}{\varepsilon})} \exp \left\{\frac{1}{\gamma_\varepsilon} \left[ \sum_{n \in \Gamma(x)} \int_{0}^{-\eta_{\varepsilon}(1-\delta_\varepsilon)^{n}} \frac{\Lambda_n(\tau;x)}{\tau} d\tau + (1-\delta_\varepsilon)^{h_\varepsilon M} \left(\sum_{n \in \Gamma(x)} \eta_\varepsilon(1-\delta_\varepsilon)^{n} d_n(x) + 2W\right) \right.\right.\notag\\ 
    &\left.\left. \quad + \frac{\gamma_\varepsilon}{2-\gamma_\varepsilon} \sum_{n \in \Gamma(x)} h_1(-\eta_{\varepsilon}(1-\delta_\varepsilon)^{n})\right]  + \log(\frac{1}{\varepsilon})\right\}.
\end{align}
Based on $\delta_\varepsilon = \lambda/ \log(1/\varepsilon)$ in \eqref{eq:rule_3_1}, we have
\begin{align}
\label{eq:log_epsilon}
    \log(\frac{1}{\varepsilon}) = \lambda \frac{1}{\delta_\varepsilon} = \lambda M \frac{1}{M \delta_\varepsilon} \stackrel{(a)}{\le} \lambda M \frac{1}{1 - (1-\delta_\varepsilon)^M} \stackrel{(b)}{=} \lambda M \frac{1}{\gamma_\varepsilon},
\end{align}
where $(a)$ follows from the inequality $(1+x)^M \geq 1+ Mx, \forall x\ge -1$ and by setting $x= -\delta_\varepsilon$; $(b)$ follows from the definition of $\gamma_\varepsilon$ in \eqref{eq:gamma_epsilon}. 

Taking \eqref{eq:log_epsilon} into \eqref{eq:ratio_1}, we have:
\begin{align}
    \frac{f(\varepsilon;x)}{\varepsilon \log(\frac{1}{\varepsilon})} &\leq \frac{1}{\log(\frac{1}{\varepsilon})} \exp \left\{\frac{1}{\gamma_\varepsilon} \left[ \sum_{n \in \Gamma(x)} \int_{0}^{-\eta_{\varepsilon}(1-\delta_\varepsilon)^{n}} \frac{\Lambda_n(\tau;x)}{\tau} d\tau + (1-\delta_\varepsilon)^{h_\varepsilon M} \left(\sum_{n \in \Gamma(x)} \eta_\varepsilon(1-\delta_\varepsilon)^{n} d_n(x) + 2W\right) \right.\right.\notag\\ 
    &\left.\left. \quad + \frac{\gamma_\varepsilon}{2-\gamma_\varepsilon} \sum_{n \in \Gamma(x)} h_1(-\eta_{\varepsilon}(1-\delta_\varepsilon)^{n}) + \lambda M \right] \right\}.
\end{align}
We define
\begin{align}
    C_\varepsilon(x) &= \sum_{n \in \Gamma(x)} \int_{0}^{-\eta_{\varepsilon}(1-\delta_\varepsilon)^{n}} \frac{\Lambda_n(\tau;x)}{\tau} d\tau + (1-\delta_\varepsilon)^{h_\varepsilon M} \left(\sum_{n \in \Gamma(x)} \eta_\varepsilon(1-\delta_\varepsilon)^{n} d_n(x) + 2W\right)\notag\\ 
    & \quad + \frac{\gamma_\varepsilon}{2-\gamma_\varepsilon} \sum_{n \in \Gamma(x)} h_1(-\eta_{\varepsilon}(1-\delta_\varepsilon)^{n}) + \lambda M.
\end{align}
Then we have:
\begin{align}
    \lim_{\varepsilon \rightarrow 0} C_\varepsilon(x) &= \lim_{\varepsilon \rightarrow 0} \sum_{n \in \Gamma(x)} \int_{0}^{-\eta_{\varepsilon}(1-\delta_\varepsilon)^{n}} \frac{\Lambda_n(\tau;x)}{\tau} d\tau + \lim_{\varepsilon \rightarrow 0} (1-\delta_\varepsilon)^{h_\varepsilon M} \left(\sum_{n \in \Gamma(x)} \eta_\varepsilon(1-\delta_\varepsilon)^{n} d_n(x) + 2W\right) \notag\\
    &\quad + \lim_{\varepsilon \rightarrow 0} \frac{\gamma_\varepsilon}{2-\gamma_\varepsilon} \sum_{n \in \Gamma(x)} h_1(-\eta_{\varepsilon}(1-\delta_\varepsilon)^{n}) + \lambda M\notag\\
    & \stackrel{(a)}{=} \sum_{n \in \Gamma(x)} \int_{0}^{-\frac{1}{M}} \frac{\Lambda_n(\tau;x)}{\tau} d\tau + \lambda M + \exp(-\alpha) \left(\sum_{n \in \Gamma(x)} \frac{1}{M} d_n(x) + 2W\right),
\end{align}
where $(a)$ is due to \eqref{eq:lim_1}, \eqref{eq:lim_2}, and \eqref{eq:lim_3} proved previously. Given that the first term is smaller than 0 (see \eqref{eq:rt_1}-\eqref{eq:rt_3}), if $\lambda$ satisfies 
\begin{align}
    \lambda< - \frac{1}{M}  \sum_{n \in \Gamma(x)} \int_{0}^{-\frac{1}{M}} \frac{\Lambda_n(\tau;x)}{\tau} d\tau,
\end{align}
then there always exists a large enough $\alpha$ such that 
\begin{align}
    \lim_{\varepsilon \rightarrow 0} C_\varepsilon(x)<0.
\end{align}
Now we have:
\begin{align}
    \lim_{\varepsilon \rightarrow 0} \frac{f(\varepsilon;x)}{\varepsilon \log(\frac{1}{\varepsilon})} \leq \lim_{\varepsilon \rightarrow 0} \frac{1}{\log(\frac{1}{\varepsilon})} \exp \left\{\lim_{\varepsilon \rightarrow 0} \frac{1}{\gamma_\varepsilon} \lim_{\varepsilon \rightarrow 0} C_\varepsilon(x)\right\}= 0 \cdot \exp(-\infty) = 0,
\end{align}
and we can conclude that 
    \begin{align}
        \lim_{\varepsilon \rightarrow 0} \frac{f(\varepsilon;x)}{\varepsilon \log(\frac{1}{\varepsilon})} = 0.
    \end{align}
From \eqref{eq:limsup_P_1}, we have:
\begin{align}
\label{eq:lumsup_x_1_o}
    &\limsup_{k \rightarrow \infty}\mathbb{P} \left[ \arg\max_{x \in \widetilde{\mathcal{X}}} \widetilde{\boldsymbol{w}}_{k}(x) \neq \boldsymbol{x}^\star_k \middle\vert \text{ no irregular jumps in} [k - h_\varepsilon M +1, k], \boldsymbol{x}^\star_k  = x_1\right] \notag\\
    &\leq \sum_{x \in \widetilde{\mathcal{X}} \setminus \{x_1\}} f(\varepsilon;x) \notag\\
    & \stackrel{(a)}{=} o(\varepsilon \log(\frac{1}{\varepsilon})).
\end{align}
where $(a)$ follows from the choice of 
\begin{align}
\label{eq:xi_def_1}
    \lambda < \xi(x_1) \triangleq \min_{x \in \widetilde{\mathcal{X}} \setminus \{x_1\}} - \frac{1}{M}  \sum_{n \in \Gamma(x)} \int_{0}^{-\frac{1}{M}} \frac{\Lambda_n(\tau;x)}{\tau} d\tau.
\end{align}
Now, going back to \eqref{eq:limsup_p_1}, the above derivation considers the case $x^\prime = x_1$ without loss of generality. Similar steps can be carried out for all $x^\prime \in \widetilde{\mathcal{X}}$, yielding an overall choice of 
\begin{align}
\label{eq:xi_def}
    \lambda < \xi \triangleq \min_{x^\prime \in \widetilde{\mathcal{X}}} \{\xi(x^\prime)\}
\end{align}
that ensures
\begin{align}
      \sum_{x^\prime \in \widetilde{\mathcal{X}} } \limsup_{k \to \infty} \mathbb{P} \!\left[ \arg\max_{x \in \widetilde{\mathcal{X}}} \widetilde{\boldsymbol{w}}_{k}(x) \!\neq\! \boldsymbol{x}^\star_k \middle\vert \text{no IrJ in } I_k, \boldsymbol{x}^\star_{k-h_\varepsilon M} \!=\! x^\prime\!\right] \leq  o\!\left(\varepsilon \log\frac{1}{\varepsilon}\right).
\end{align}
Combining \eqref{eq:limsup_p_1}, \eqref{eq:_second_o}, \eqref{eq:third_O}, and \eqref{eq:lumsup_x_1_o}, we obtain:
 \begin{align}
     \limsup_{k \to \infty} p_{k} &\leq \frac{-q_{\min} \alpha}{\lambda} \varepsilon \log(\frac{1}{\varepsilon}) + o(\varepsilon \log(\frac{1}{\varepsilon}))\notag\\
     &\stackrel{(a)}{=} \kappa \varepsilon \log(\frac{1}{\varepsilon}) + o(\varepsilon \log(\frac{1}{\varepsilon})),
 \end{align}
 where in $(a)$ we define 
 \begin{align}
 \label{eq:kappa}
     \kappa \triangleq \frac{-q_{\min} \alpha}{\lambda}.
 \end{align}
 Now we complete the proof.
\end{proof}

\subsection{Discussion on $\xi$}
\label{remark:xi}
    The constant $\xi$ defined in \eqref{eq:xi_def} plays an important role in this state-decoding problem. Here, we provide a more detailed discussion of this constant.
    
    As shown in \eqref{eq:xi_def_1}--\eqref{eq:xi_def}, $\xi$ depends on the permutation backbone $P$ and the emission model $E$ through the quantities $M$, the order of the permutation matrix $P$, the set $\Gamma(x)$, and $\Lambda_n$, the moment generating function of the log-likelihood ratios. This $\Lambda_n$ characterizes the intrinsic difficulty of the inference task. In particular, the emission matrix $E$ directly influences $\Lambda_n$: when the likelihoods under different states (i.e., the columns of $E$) are more distinguishable, $\xi$ becomes larger, allowing for a larger choice of $\lambda$ in \eqref{eq:rule_3}. When the emission model is known, $\xi$ can be computed analytically from \eqref{eq:xi_def}. When the emission model is unknown or too complex, $\xi$ can instead be estimated empirically from data. The parameter $\xi$ is related to the notion of the \emph{error exponent} in the social learning literature; see, e.g., \citet{khammassi2024adaptive}.
    
    Now we provide an example of $\xi$ computation corresponding to the model in Section~\ref{sec:two_state_exp}:
    We consider a two-state HMM without transient states ($r=0$), thus we have $\mathcal{X} = \widetilde{\mathcal{X}} = \{x_1, x_2\}$. We have the permutation order $M=2$. Due to the symmetry, we have
    \begin{align}
        \xi &= \xi(x_1) \notag\\
        &= \min_{x \in \widetilde{\mathcal{X}} \setminus \{x_1\}} - \frac{1}{2}  \sum_{n \in \Gamma(x)} \int_{0}^{-\frac{1}{2}} \frac{\Lambda_n(\tau;x)}{\tau} d\tau\notag\\
        &\stackrel{(a)}{=} \sum_{n=0}^1 - \frac{1}{2}\int_{0}^{-\frac{1}{2}} \frac{\Lambda_n(\tau;x_2)}{\tau} d\tau\notag\\
        &\stackrel{(b)}{=} - \int_{0}^{-\frac{1}{2}} \frac{\Lambda_0(\tau;x_2)}{\tau} d\tau
    \end{align}
    where $(a)$ follows from the fact that $\widetilde{\mathcal{X}} \setminus \{x_1\} = \{x_2\}$, $\Gamma(x_2) = \{0,1\}$ since the two columns in $E$ are not identical; $(b)$ is due to the symmetry of the likelihood model determined by $E$. Then, following the definition of $\Lambda_n$, we have
    \begin{align}
        \xi &= - \int_0^{-\frac{1}{2}}
\frac{
\log\!\left(
0.9\,e^{t\log(0.9/0.1)}
+
0.1\,e^{t\log(0.1/0.9)}
\right)
}{t}
\,dt
\approx 0.7206.
    \end{align}

\subsection{Supporting Lemmas}

\begin{lemma}
\label{lemma:p_epsilon}
    Consider a nearly-deterministic transition matrix $T$ with $D$ in \eqref{eq:near_D}, let $\ell$ be the nilpotent index of $D_{22}$. Then, we have for $k \geq \ell$:
    \begin{align}
        \mathbb{P} \left[\boldsymbol{x}^\star_k = x\right] = O(\varepsilon), \quad \forall x \in \mathcal{X}\setminus\widetilde{\mathcal{X}}.
    \end{align}
\end{lemma}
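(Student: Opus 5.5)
The plan is to analyze the marginal law of the transient block directly through the transition matrix. Write $\pi_k \triangleq \mathbb{P}[\boldsymbol{x}^\star_k = \cdot\,] = T^k \pi_0$ and split $\pi_k = [\widetilde{\pi}_k^\top,\ \bar{\pi}_k^\top]^\top$, where $\bar{\pi}_k \in \mathbb{R}^r$ collects the transient states $\mathcal{X}\setminus\widetilde{\mathcal{X}}$. With $D$ in the form \eqref{eq:near_D}, the lower-left block of $D$ is zero, so the bottom block of $T = D + \varepsilon Q$ gives the recursion
\begin{align}
    \bar{\pi}_{k} = (D_{22} + \varepsilon Q_{22})\,\bar{\pi}_{k-1} + \varepsilon Q_{21}\,\widetilde{\pi}_{k-1},
\end{align}
where $Q_{21}, Q_{22}$ are the corresponding blocks of $Q$. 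The goal is to show that $\bar{\pi}_k$ is entrywise $O(\varepsilon)$ once $k \ge \ell$, with a constant independent of $k$.

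The first step is to unroll the recursion $\ell$ times:
\begin{align}
    \bar{\pi}_k = (D_{22}+\varepsilon Q_{22})^{\ell}\,\bar{\pi}_{k-\ell} + \varepsilon \sum_{j=0}^{\ell-1} (D_{22}+\varepsilon Q_{22})^{j} Q_{21}\,\widetilde{\pi}_{k-1-j}.
\end{align}
The second term is $O(\varepsilon)$ uniformly in $k$. This holds because $\|\widetilde{\pi}\|_1 \le 1$, $\ell \le r$ is fixed, and every matrix involved has bounded entries (for instance $|q_{ij}|\le 1/\varepsilon$ is not needed, since $Q$ is fixed and $\varepsilon<1$).

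For the first term, expand $(D_{22}+\varepsilon Q_{22})^\ell$ as a sum of words in $D_{22}$ and $\varepsilon Q_{22}$. The only word without a factor of $\varepsilon$ is $D_{22}^\ell$, which vanishes by nilpotency (Lemma~\ref{lemma:determinsitic}). Every remaining word has norm at most $C\varepsilon$ with $C$ depending only on $D$, $Q$ and $\ell$, and $\|\bar{\pi}_{k-\ell}\|_1 \le 1$. Hence the first term is also $O(\varepsilon)$. This already gives the claim for every $k \ge \ell$, including $k = \ell$, where $\bar{\pi}_0$ is arbitrary.

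The main obstacle is bookkeeping rather than substance: we must make sure the constant in $O(\varepsilon)$ does not grow with $k$. The unrolling above avoids any geometric accumulation, since it only ever looks back $\ell$ steps and uses the trivial bound $\|\pi_j\|_1 = 1$ for the earlier history.

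A cleaner way to state the bound is
\begin{align}
    \mathbb{P}[\boldsymbol{x}^\star_k = x] \le \varepsilon \Big(\sum_{j=1}^{\ell} \binom{\ell}{j} \|D_{22}\|^{\ell-j}\|Q_{22}\|^{j} + \ell\, c\,\|Q_{21}\|\Big),
\end{align}
where $c$ bounds $\|(D_{22}+\varepsilon Q_{22})^j\|$ for $j<\ell$ and $\varepsilon<1$. As a sanity check, entries of $Q$ restricted to off-$n_j$ positions are nonnegative. Columns of $T$ are stochastic, so all norms involved are in fact at most $1$ in the induced $1$-norm, and $c \le 1$ can be used.
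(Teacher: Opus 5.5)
Your proof is correct and follows essentially the same route as the paper: block-decompose $T$, use $D_{22}^\ell = 0$ to remove the only $\varepsilon$-free contribution to the transient block over $\ell$ steps, and bound every remaining contribution by $\varepsilon$ times a constant. Your $\ell$-step unrolling of $\bar{\pi}_k$ is the vector form of the paper's computation of the bottom row-block of $T^\ell$, and your use of $\|\pi_{k-\ell}\|_1 = 1$ makes explicit the uniformity in $k$ that the paper leaves implicit when it passes from $T^\ell$ to $T^k$ (one small slip in your closing remark: stochasticity of $T$ gives $c \le 1$, $\|D_{22}\|_1 \le 1$ and $\|\varepsilon Q_{21}\|_1 \le 1$, but not $\|Q_{21}\|_1 \le 1$ or $\|Q_{22}\|_1 \le 1$, since \eqref{eq:Q_constraint} does not bound the size of $Q$; this is harmless because $Q$ is fixed).
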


\begin{proof}
    In the following, we assume $D$ has the from in \eqref{eq:near_D}. The nearly-deterministic transition matrix $T$ can be formed as:
\begin{align}
    T = \left[
  \begin{array}{c|c}
    P + \varepsilon Q_{11} &  D_{12} + \varepsilon Q_{12} \\
    \hline
    \varepsilon Q_{21} & D_{22} + \varepsilon Q_{22}
  \end{array}
\right],
\end{align}
where the blocks $Q_{11} \in \mathbb{R}^{r \times r}, Q_{12} \in \mathbb{R}^{r \times N-r}, Q_{21} \in \mathbb{R}^{N-r \times r}, $ and $Q_{22} \in \mathbb{R}^{N-r \times N-r}$ form the matrix $Q$. Then, we have:
    \begin{align}
        T^2 &= \left[ \begin{array}{c|c}
           \cdot & \cdot
           \\\hline
          (D_{22} + \varepsilon Q_{22})\varepsilon Q_{21} + \varepsilon Q_{21} (P + \varepsilon Q_{11}) & (D_{22} + \varepsilon Q_{22})^2 + \varepsilon Q_{21} (D_{12} + \varepsilon Q_{12})
        \end{array}\right] \notag\\
        &=\left[ \begin{array}{c|c}
           \cdot & \cdot \\\hline
          O(\varepsilon) & D_{22}^2 + O(\varepsilon)
        \end{array}\right].
    \end{align}
    Similarly, for $T^\ell$ we can get:
    \begin{align}
        T^\ell &=\left[ \begin{array}{c|c}
           \cdot &\cdot \\\hline
          O(\varepsilon) & D_{22}^\ell + O(\varepsilon)
        \end{array}\right]\notag\\
        &\stackrel{(a)}{=} \left[ \begin{array}{c|c}
            \cdot & \cdot \\\hline
          O(\varepsilon) & O(\varepsilon)
        \end{array}\right].
    \end{align}
    where $(a)$ follows from the fact that $D_{22}$ is a nilpotent matrix with index $\ell$. For $k\geq \ell$, we always have:
    \begin{align}
        T^k &= \left[ \begin{array}{c|c}
            \cdot & \cdot \\\hline
          O(\varepsilon) & O(\varepsilon)
        \end{array}\right].
    \end{align}
    Therefore, in conclusion, for $k\geq \ell$ and $\forall x \in \mathcal{X}\setminus\widetilde{\mathcal{X}}$, we have:
    \begin{align}
        \mathbb{P} \left[\boldsymbol{x}^\star_k = x\right] = \pi_k(x) = \left[(T)^k \pi_0\right]_{x} = O(\varepsilon).
    \end{align}

\end{proof}

\begin{lemma}
\label{lemma:p_irreg}
Under a nearly-deterministic transition matrix as defined in Definition \ref{def:near_det}, we have
    \begin{equation}
        \mathbb{P} \left[ \text{one or more irregular jumps in } [k - T_\varepsilon, k] \right] = O(\varepsilon T_\varepsilon).
    \end{equation}\hfill\qed
\end{lemma}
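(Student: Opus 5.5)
The plan is a union bound over the time steps in the window, bounding the one-step irregular-jump probability uniformly in the current state. An irregular jump at instant $i$ means $\boldsymbol{x}^\star_i \neq n_j$, where $j$ is the state at time $i-1$ and $n_j$ is the row index of the unique $1$ in column $j$ of $D$. Conditioned on $\boldsymbol{x}^\star_{i-1}=x_j$, this has probability
\begin{align}
\sum_{i'\neq n_j} t_{i'j} = \varepsilon \sum_{i'\neq n_j} q_{i'j} = -\varepsilon\, q_{n_j j},
\end{align}
using Definition~\ref{def:near_det} and \eqref{eq:Q_constraint}. Set
\begin{align}
q_{\min} \triangleq \min_{j} q_{n_j j} \le 0,
\end{align}
which is a constant depending only on $Q$, not on $\varepsilon$. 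Averaging over the law of $\boldsymbol{x}^\star_{i-1}$ then gives $\mathbb{P}[\text{IrJ at } i] \le -q_{\min}\,\varepsilon$ for every $i\ge 1$, independently of $k$ and of the initial distribution $\pi_0$.

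The window $[k-T_\varepsilon,k]$ contains at most $\lceil T_\varepsilon\rceil+1 \le T_\varepsilon+2$ integer instants. The event ``one or more irregular jumps in the window'' is the union of the single-instant events, so subadditivity gives
\begin{align}
\mathbb{P}[\text{IrJ in } [k-T_\varepsilon,k]] \le (T_\varepsilon+2)(-q_{\min})\,\varepsilon .
\end{align}
This is the bound of the form $(T_\varepsilon+1)\varepsilon(-q_{\min})$ that is later used in \eqref{eq:third_O}, up to how the endpoint count is made precise. Since $T_\varepsilon\to\infty$ under \eqref{eq:delta_1}, this is $O(\varepsilon T_\varepsilon)$ uniformly in $k$, and therefore also after taking $\limsup_{k\to\infty}$.

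There is no real obstacle here. The only care points are: (i) the constant $-q_{\min}$ must be uniform over states, which holds because $\mathcal{X}$ is finite; (ii) for small $k$ the window may extend below time $0$, where it should be truncated, and truncation only decreases the count; (iii) the Markov dependence between steps is irrelevant, because the union bound needs only the marginal one-step bounds.
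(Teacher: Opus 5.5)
Your proof is correct. It reaches the paper's bound by a different aggregation step.

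Both arguments start from the same per-step fact: given $\boldsymbol{x}^\star_{i-1}=x_j$, an irregular jump occurs with probability $-\varepsilon q_{n_j j}\le -\varepsilon q_{\min}$. The paper then works with the complementary event. It lower-bounds $\mathbb{P}[\text{no IrJ in the window}]$ by the product $(1+\varepsilon q_{\min})^{\lceil T_\varepsilon\rceil}$, which implicitly uses the chain rule together with the Markov property: each conditional no-jump probability, given the whole past, is at least $1+\varepsilon q_{\min}$. It then linearizes with Bernoulli's inequality, which requires $\varepsilon\le -1/q_{\min}$.

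Your union bound needs only the marginal one-step bounds. It uses no product structure and no smallness condition on $\varepsilon$, and it loses nothing: Bernoulli's inequality in the form $1-(1-p)^n\le np$ is itself just a union bound. Both routes therefore end at the same linear estimate. The paper's version keeps the intermediate geometric expression $1-(1+\varepsilon q_{\min})^{\lceil T_\varepsilon\rceil}$, which is marginally sharper before linearization, but that sharpness is never used later.

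On the endpoint count: for integer $k$, the window $[k-T_\varepsilon,k]$ contains exactly $\lfloor T_\varepsilon\rfloor+1\le T_\varepsilon+1$ integer instants. So you can recover the paper's constant $(T_\varepsilon+1)\varepsilon(-q_{\min})$ exactly. In any case, the extra $O(\varepsilon)$ from your $T_\varepsilon+2$ is $o(\varepsilon\log(1/\varepsilon))$ and would not change $\kappa$ in Lemma~\ref{lemma:xi}.
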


\begin{proof}
    Since 
    \begin{align}
        &\mathbb{P} \left[ \text{one or more irregular jumps in } [k - T_\varepsilon, k] \right] \notag\\
        &= 1 - \mathbb{P} \left[ \text{no irregular jumps in } [k - T_\varepsilon, k] \right].
    \end{align}
The probability of not having an irregular jump at a single step $k$, given
$\boldsymbol{x}_{k-1}^\star = x_j$, is
$1 + \varepsilon\, q_{n_j j}.$
Since $q_{n_j j} \le 0$, define
\begin{align}
\label{eq:q_def}
    q_{\max} \triangleq \max_j q_{n_j j}, \qquad
q_{\min} \triangleq \min_j q_{n_j j}.
\end{align}
The probability of not having irregular jumps at any step lies between
$1 + \varepsilon q_{\min}$ and $1 + \varepsilon q_{\max}$. Therefore,
\begin{align}
(1 + \varepsilon q_{\min})^{\lceil T_\varepsilon \rceil}
&\le \mathbb{P}\!\left[\text{no irregular jumps in } [k - \lceil T_\varepsilon \rceil, k]\right] \notag\\
&\le \mathbb{P}\!\left[\text{no irregular jumps in } [k - T_\varepsilon, k]\right] \notag\\
&\le \mathbb{P}\!\left[\text{no irregular jumps in } [k - \lfloor T_\varepsilon \rfloor, k]\right] \notag\\
&\le (1 + \varepsilon q_{\max})^{\lfloor T_\varepsilon \rfloor}.
\end{align}
This implies that
\begin{align}
    \mathbb{P} \left[ \text{one or more irregular jumps in } [k - T_\varepsilon, k] \right] \leq 1 - (1 + \varepsilon q_{\min})^{\lceil T_\varepsilon \rceil}.
\end{align}
We use Bernoulli's inequality, which states that for any real number $x\geq-1$ and any integer $n\geq 0$:
\begin{align}
    (1+x)^n \geq 1+nx.
\end{align}
Now let $x=\varepsilon q_{\min}, n=\lceil T_\varepsilon \rceil$, if $\varepsilon$ is small enough such that 
\begin{align}
\label{eq:epsilon_bound}
    \varepsilon \leq -\frac{1}{q_{\min}},
\end{align} 
we have:
\begin{align}
    (1 + \varepsilon q_{\min})^{\lfloor T_\varepsilon \rfloor} \geq 1+ \lceil T_\varepsilon \rceil \varepsilon q_{\min}.
\end{align}
Now we have the upper bound:
\begin{align}
    &\mathbb{P} \left[ \text{one or more irregular jumps in } [k - T_\varepsilon, k] \right] \notag\\
    &\leq 1 - (1 + \varepsilon q_{\min})^{\lceil T_\varepsilon \rceil}\notag\\
    &\leq  \lceil T_\varepsilon \rceil \varepsilon (-q_{\min})\notag\\
    &\leq  (T_\varepsilon +1) \varepsilon (-q_{\min}).
\end{align}

\end{proof}

Note that when the matrix $Q=0$ in Definition \ref{def:near_det}, all the steps in Lemma \ref{lemma:p_irreg} still hold, while $\varepsilon$ no longer matters since in \eqref{eq:epsilon_bound} the right hand side is $\infty$, and the conclusion becomes 
\begin{align}
    \mathbb{P} \left[ \text{one or more irregular jumps in } [k - T_\varepsilon, k] \right] = 0
\end{align}
which is also $O(T_\varepsilon \varepsilon)$.

\begin{lemma}
\label{lemma:gamma_x}
    Under Assumption \ref{ass:oberv_E_weak_2}, the set $\Gamma(x)$ defined in \eqref{eq:def_gamma_x} has $|\Gamma(x)| \geq 1, \forall x \in \mathcal{X}\setminus\{x_1\}$.
\end{lemma}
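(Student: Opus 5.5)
Seeking a contradiction, suppose that for some $x \in \widetilde{\mathcal{X}}\setminus\{x_1\}$ (the statement writes $\mathcal{X}\setminus\{x_1\}$, but $\Gamma(x)$ is only used for recurrent $x$, so we read it that way) we have $\Gamma(x)=\emptyset$. Unwinding the definition \eqref{eq:def_gamma_x}, this means that for every $n\in\{0,\dots,M-1\}$ the columns $E u_{\sigma^{(-n)\bmod M}(x)}$ and $E u_{\sigma^{(-n)\bmod M}(x_1)}$ coincide. As $n$ ranges over $\{0,\dots,M-1\}$, the exponent $m=(-n)\bmod M$ also ranges over all of $\{0,\dots,M-1\}$; this is a bijection. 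So the assumption is equivalent to
\begin{align}
\widetilde{E}\, u_{\sigma^{m}(x)} = \widetilde{E}\, u_{\sigma^{m}(x_1)}, \quad m=0,\dots,M-1.
\end{align}
Here we use that $\sigma^m$ maps $\widetilde{\mathcal{X}}$ to itself, so only the recurrent columns $\widetilde{E}$ of $E$ are involved.

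Next, we relate this to the stacked matrix $\widetilde{E}_s$. By the computation in \eqref{eq:E_prime_u1}, applied to an arbitrary basis vector, we have $P^m u_{x'} = u_{\sigma^m(x')}$ for any $x'\in\widetilde{\mathcal{X}}$. The $m$-th block of $\widetilde{E}_s u_{x'}$ is therefore $\widetilde{E} u_{\sigma^m(x')}$. Hence the displayed equalities say exactly that all $M$ blocks of $\widetilde{E}_s u_x$ and $\widetilde{E}_s u_{x_1}$ agree, that is, $\widetilde{E}_s u_x = \widetilde{E}_s u_{x_1}$.

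Since $x\neq x_1$, these are two distinct columns of $\widetilde{E}_s$ that are identical. This contradicts Assumption~\ref{ass:E_M_2}, so $|\Gamma(x)|\ge 1$.

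The only delicate points are bookkeeping. First, the reindexing $n\mapsto(-n)\bmod M$ must be a bijection, which it is. Second, the identity $P^m u_{x'}=u_{\sigma^m(x')}$ must hold with the same orientation convention used to build $\widetilde{E}_s$; we take it from \eqref{eq:E_prime_u1}. No probabilistic argument is needed.
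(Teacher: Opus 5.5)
Your proof is correct and is essentially the paper's argument: assume $\Gamma(x)=\emptyset$ for some recurrent $x\neq x_1$, note that the $M$ resulting column equalities (reindexed via the bijection $n\mapsto(-n)\bmod M$) are exactly the blocks of $\widetilde{E}_s(u_x-u_{x_1})=0$, and contradict Assumption~\ref{ass:E_M_2}. Reading the quantifier as ranging over $\widetilde{\mathcal{X}}\setminus\{x_1\}$ also matches what the paper's own proof does.
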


\begin{proof}
     We prove this by contradiction. Suppose there exists an $x \in \widetilde{\mathcal{X}} \setminus \{x_1\}$ such that $|\Gamma(x)| = 0$. According to the definition of $\Gamma(x)$ in \eqref{eq:def_gamma_x}, this implies that: 
    \begin{align}
        E \left(u_{\sigma^{ (- n) \bmod M}(x)} - u_{\sigma^{ (- n) \bmod M}(x_1)}\right) = 0, \forall n \in \{0, \cdots, M-1\}.
    \end{align}
    Expanding this, we get:
    \begin{align}
        &E \left(u_{x} - u_{x_1}\right) = 0 \notag \\
        &E \left(u_{\sigma^{M-1}(x)} - u_{\sigma^{M-1} (x_1)}\right) = 0 \notag \\
        & E \left(u_{\sigma^{M-2}(x)} - u_{\sigma^{M-2} (x_1)}\right) = 0 \notag \\
        &\vdots \notag \\
        & E \left(u_{\sigma(x)} - u_{\sigma (x_1)}\right) = 0.
    \end{align}
    This leads to:
    \begin{align}
        &\left[\begin{array}{c}
             E u_{x} - E u_{x_1}  \\
              E u_{\sigma(x)} - Eu_{\sigma (x_1)}
        \\
              \vdots\\
              E u_{\sigma^{M-1}(x)} - Eu_{\sigma^{M-1} (x_1)}\\
        \end{array}\right] = 0.
    \end{align}
    This implies:
    \begin{align}
          \widetilde{E}_s (u_x - u_{x_1}) = 0
    \end{align}
    which means there are two columns in $\widetilde{E}_s$ that are identical. Thus, this contradicts Assumption \ref{ass:E_M_2}. 
\end{proof}



\begin{lemma}
\label{lemma:d_nx}
    Under Assumption \ref{ass:oberv_E_weak_2}, for each $n \in \Gamma(x)$, we have :
    \begin{align}
        d_n(x) \triangleq \mathbb{E}\left[\boldsymbol{g}_{h_\varepsilon M -n}(x)\right]
        &>0.
    \end{align}
\end{lemma}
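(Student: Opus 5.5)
The plan is to compute the expectation directly and recognize it as a Kullback--Leibler divergence. By the conditioning in \eqref{eq:prob_bound_o}, the true state at time $h_\varepsilon M - n$ is $\sigma^{(h_\varepsilon M-n)\bmod M}(x_1)=\sigma^{(-n)\bmod M}(x_1)$. Hence $\boldsymbol{y}_{h_\varepsilon M-n}$ is distributed according to the column of $\widetilde{E}$ indexed by this state; write this distribution as $p(\cdot)\triangleq \mathbb{P}(\cdot\mid \sigma^{(-n)\bmod M}(x_1))$. Similarly, write $q(\cdot)\triangleq \mathbb{P}(\cdot\mid \sigma^{(-n)\bmod M}(x))$.

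By \eqref{eq:gi_def}, $\boldsymbol{g}_{h_\varepsilon M-n}(x)=\log\big(p(\boldsymbol{y})/q(\boldsymbol{y})\big)$. Therefore
\begin{align}
d_n(x)=\sum_{y\in\mathcal{Y}} p(y)\log\frac{p(y)}{q(y)} = D_{\mathrm{KL}}(p\,\|\,q).
\end{align}
This quantity is well defined and finite because Assumption~\ref{ass:pos} makes all entries of $\widetilde{E}$ strictly positive. The needed positivity applies here, since both $\sigma^{(-n)\bmod M}(x_1)$ and $\sigma^{(-n)\bmod M}(x)$ lie in $\widetilde{\mathcal{X}}$, as $\sigma$ permutes the recurrent states.

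Gibbs' inequality then gives $D_{\mathrm{KL}}(p\|q)\ge 0$, with equality if and only if $p=q$. This follows from Jensen's inequality applied to the strictly concave $\log$:
\begin{align}
-D_{\mathrm{KL}}(p\|q)=\sum_y p(y)\log\frac{q(y)}{p(y)}\le \log\sum_y q(y)=0,
\end{align}
and equality holds iff $q(y)/p(y)$ is constant over the support of $p$, which is all of $\mathcal{Y}$. Finally, the definition \eqref{eq:def_gamma_x} of $\Gamma(x)$ says precisely that $n\in\Gamma(x)$ means there is some $y$ with $p(y)\ne q(y)$. So for $n\in\Gamma(x)$ the equality case is excluded, and $d_n(x)>0$.

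The only delicate step is the identification of the true state at time $h_\varepsilon M-n$ under the conditioning event, together with the cyclic reduction $(h_\varepsilon M-n)\bmod M=(-n)\bmod M$. This is already used in \eqref{eq:def_gamma_x}. One should also check that the expectation is taken under the conditional law fixing the state trajectory, which is the convention stated before \eqref{eq:ohM}. Everything else is standard.
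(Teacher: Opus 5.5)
Your proposal is correct and follows essentially the same route as the paper: both write $d_n(x)$ as the Kullback--Leibler divergence between the emission distributions of $\sigma^{(-n)\bmod M}(x_1)$ and $\sigma^{(-n)\bmod M}(x)$, use Assumption~\ref{ass:pos} to get common full support, and invoke the definition of $\Gamma(x)$ to exclude the equality case. Your Jensen derivation of Gibbs' inequality only spells out the step that the paper cites as a standard property of the KL divergence.
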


\begin{proof}
    We have that \begin{align}&d_n(x)\notag\notag\\&=\mathbb{E}\left[\boldsymbol{g}_{h_\varepsilon M -n}(x)\right]\notag\\
        &= \mathbb{E}\left[\log\left(\frac{\mathbb{P}(\boldsymbol{y}_{h_\varepsilon M -n}\mid \sigma^{(h_\varepsilon M -n) \bmod M}(x_1))}{\mathbb{P}(\boldsymbol{y}_{h_\varepsilon M -n}\mid \sigma^{(h_\varepsilon M -n) \bmod M}(x))}\right)\right] \notag\\
        &=\sum_{\boldsymbol{y}_{h_\varepsilon M -n}\in \mathcal{Y}} \mathbb{P}(\boldsymbol{y}_{h_\varepsilon M -n}\mid \sigma^{(h_\varepsilon M -n) \bmod M}(x_1)) \log\left(\frac{\mathbb{P}(\boldsymbol{y}_{h_\varepsilon M -n}\mid \sigma^{(h_\varepsilon M -n) \bmod M}(x_1))}{\mathbb{P}(\boldsymbol{y}_{h_\varepsilon M -n}\mid \sigma^{(h_\varepsilon M -n) \bmod M}(x))}\right)\notag\\
        &\stackrel{(a)}{
        >
        } 0,
    \end{align}
    where $(a)$ follows from the definition of $\Gamma(x)$ in \eqref{eq:def_gamma_x}, the two likelihood $\mathbb{P}(\boldsymbol{y}_{h_\varepsilon M -n}\mid~\sigma^{(h_\varepsilon M -n) \bmod M}(x_1))$ and $\mathbb{P}(\boldsymbol{y}_{h_\varepsilon M -n}\mid \sigma^{(h_\varepsilon M -n) \bmod M}(x))$ have the same support, and since the two distributions are not the same, we always have $d_n(x)>0$ based on the definition of Kullback–Leibler divergence.
\end{proof}

\begin{lemma}
\label{lemma:Lambda}
    Under Assumption \ref{ass:oberv_E_weak_2}, let $\Lambda_n(t;x)$ be as defined in \eqref{eq:lambda_n}. It has the following properties for each $n \in \Gamma(x)$:
    \begin{enumerate}[label=(\arabic*)]
        \item \label{lemma:Lambda_1}$\Lambda_n(t;x) < \infty$ for all $t \in \mathbb{R}$ and is infinitely differentiable in $\mathbb{R}$. 

        \item \label{lemma:Lambda_2}$\Lambda_n(0;x) = 0$, $\Lambda_n(-1;x) = 0$, and $\Lambda_n^\prime(0;x) = d_n(x) >0$.

        \item \label{lemma:Lambda_3}$\Lambda_n(t;x)$ is strictly convex in $\mathbb{R}$:
        \begin{align}
           \Lambda_n^{\prime\prime}(t;x) > 0, \forall t \in \mathbb{R}.
        \end{align}

        \item $\Lambda_n(\tau;x) \geq d_n(x) \tau$ for all $t \in \mathbb{R}$.
        
        \item \label{lemma:Lambda_4} $\frac{\Lambda_n(t;x)}{t}$ is continuously differentiable in $\mathbb{R}$. In addition, we have:
        \begin{align}
           \frac{d}{dt}\frac{\Lambda_n(t;x)}{t} > 0, \forall t \in \mathbb{R}.
        \end{align}
        
    \end{enumerate}
\end{lemma}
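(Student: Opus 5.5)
We treat $\Lambda_n(t;x)=\log\mathbb{E}[e^{t\boldsymbol{g}}]$ as the cumulant generating function of a discrete random variable $\boldsymbol{g}=\boldsymbol{g}_{h_\varepsilon M-n}(x)$. It is the log-likelihood ratio $\log(p(\boldsymbol{y})/q(\boldsymbol{y}))$ with $\boldsymbol{y}\sim p$, where $p=\mathbb{P}(\cdot\mid\sigma^{-n}(x_1))$ and $q=\mathbb{P}(\cdot\mid\sigma^{-n}(x))$. By Assumption~\ref{ass:pos} both distributions have full support on $\mathcal{Y}$. Hence $\boldsymbol{g}$ takes finitely many finite values, and
\begin{align}
\Lambda_n(t;x)=\log\sum_{y} p(y)^{1+t}q(y)^{-t}
\end{align}
is the logarithm of a finite sum of exponentials of $t$. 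This gives (1) immediately: $\Lambda_n$ is finite and $C^\infty$, in fact real analytic.

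For (2), we have $\Lambda_n(0;x)=\log\sum_y p(y)=0$ and $\Lambda_n(-1;x)=\log\sum_y q(y)=0$. Also $\Lambda_n'(0;x)=\mathbb{E}[\boldsymbol{g}]=d_n(x)$, which is positive by Lemma~\ref{lemma:d_nx}.

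For (3), we use the standard fact that $\Lambda_n''(t;x)$ is the variance of $\boldsymbol{g}$ under the tilted law $p_t(y)\propto p(y)^{1+t}q(y)^{-t}$. This law has full support for every $t$. So the variance vanishes only if $\log(p/q)$ is constant on $\mathcal{Y}$. Since $p$ and $q$ are both probability vectors, that constant would have to be $0$, giving $p=q$. This contradicts $n\in\Gamma(x)$. Hence $\Lambda_n''>0$ everywhere.

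Item (4) then follows from convexity: $\Lambda_n$ lies above its tangent line at $0$, which is $d_n(x)\tau$.

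Item (5) is the step requiring care. We define $\phi(t)=\Lambda_n(t;x)/t$ for $t\neq0$ and $\phi(0)=d_n(x)$. By the Taylor representation $\Lambda_n(t)=t\int_0^1\Lambda_n'(st)\,ds$, we get $\phi(t)=\int_0^1\Lambda_n'(st)\,ds$ for all $t$. Differentiating under the integral gives
\begin{align}
\phi'(t)=\int_0^1 s\,\Lambda_n''(st)\,ds>0
\end{align}
by (3). This yields continuous differentiability and strict monotonicity in one stroke, including at $t=0$, where the quotient form would otherwise be awkward.

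The main obstacle is only justifying this integral representation and the differentiation under the integral. Both are routine, since $\Lambda_n$ is smooth on compact sets.
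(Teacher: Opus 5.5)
Your proof is correct. Items (1), (2) and (4) match the paper's argument: finiteness and smoothness because $\boldsymbol{g}_{h_\varepsilon M-n}(x)$ takes finitely many finite values, direct evaluation at $t=0$ and $t=-1$ together with Lemma~\ref{lemma:d_nx}, and the tangent line at $0$.

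The difference is in (3) and (5). There the paper gives no argument and simply cites \citet[(115), (119)]{matta2016diffusion}. You prove both directly:
\begin{itemize}
\item strict convexity via $\Lambda_n''(t;x)=\mathrm{Var}_{p_t}(\boldsymbol{g})$ under the exponentially tilted law;
\item monotonicity of $\Lambda_n(t;x)/t$ via $\phi(t)=\int_0^1\Lambda_n'(st;x)\,ds$, which gives $\phi'(t)=\int_0^1 s\,\Lambda_n''(st;x)\,ds$.
\end{itemize}

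What your version buys is transparency about where the hypotheses enter, which the citation leaves implicit. Assumption~\ref{ass:pos} gives finiteness of $\boldsymbol{g}$ and full support of the tilted law for every $t$. The condition $n\in\Gamma(x)$ gives non-constancy of the log-likelihood ratio, since a constant ratio between two probability vectors forces $p=q$.

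Your integral representation also treats $t=0$ in the same step. It yields the removable-singularity value $\phi(0)=d_n(x)$ and $\phi'(0)=\Lambda_n''(0;x)/2>0$. The more common quotient-rule route, presumably what the cited reference uses, works differently:
\begin{itemize}
\item it writes $\phi'(t)=\bigl(t\Lambda_n'(t;x)-\Lambda_n(t;x)\bigr)/t^2$;
\item it gets positivity of the numerator for $t\neq0$ from strict convexity, because the tangent at $t$ evaluated at $0$ lies strictly below $\Lambda_n(0;x)=0$;
\item it then needs a separate limiting argument at the origin.
\end{itemize}

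The paper's version is shorter, but it relies on an external source whose setting the reader must match to the present one.
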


\begin{proof}
    We have:
    \begin{enumerate} [label=(\arabic*)]
        \item The statement follows from the definition of $\Lambda_n(t;x) < \infty$ and the fact that $\boldsymbol{g}_{h_\varepsilon M -n}(x)$ is a discrete random variable. 

        \item The statement follows from simple calculations and Lemma \ref{lemma:d_nx}. 

        \item The statement can be found in \citet[(115)]{matta2016diffusion}.
        
        \item  The statement can be proved by fist noticing that $\Lambda_n(\tau;x)$ is convex from (3) and the first-order convexity characterization:
        \begin{align}
            \Lambda_n(\tau;x) - \Lambda_n(0;x) \geq \Lambda_n^\prime(0;x) (\tau -0) 
            \stackrel{(a)}{\Rightarrow} \Lambda_n(\tau;x) \geq d_n(x) \tau,
        \end{align}
        where $(a)$ follows from (2).
        
        \item The statement can be found in \citet[(119)]{matta2016diffusion}.
    \end{enumerate}
\end{proof}

\begin{lemma}
\label{lemma:lambda_sum}
    Let $f(t)$ be twice continuously differentiable on $\mathbb{R}$, and define 
    \begin{align}
        g(t) \triangleq \frac{f(t)}{t}
    \end{align}
    which is continuously differentiable on $\mathbb{R}$. Suppose further that $f(0)=0$.
    Let $\tau_h \triangleq c (1-\mu)^h t$ where $0<\mu<1$ and $c>0$. Then, for $t<0$ it holds that
    \begin{align}
        \sum_{h=0}^{H-1} f(\tau_h) \leq \frac{1}{\mu} \left[\int_{c (1-\mu)^H t}^{ct} g(\tau) d\tau + \frac{\mu}{2-\mu} h_1(ct)\right],
    \end{align}
    where 
    \begin{align}
        h_1(t) \triangleq \max_{\tau \in [t, 0]} \left|g^\prime(\tau)\right| \frac{t^2}{2}.
    \end{align}
    \hfill\qed
\end{lemma}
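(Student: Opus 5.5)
The idea is to compare the sum with a Riemann-type approximation of the integral of $g$ over the geometric grid $\{\tau_h\}$. Write $f(\tau_h)=\tau_h\, g(\tau_h)$, which is legitimate since $\tau_h\neq 0$ for $t<0$. Since $\tau_{h+1}=(1-\mu)\tau_h$, the grid spacing is $\tau_{h+1}-\tau_h=-\mu\tau_h>0$. Hence, for $t<0$, the points satisfy $ct=\tau_0<\tau_1<\dots<\tau_H=c(1-\mu)^H t<0$, and
\begin{align}
f(\tau_h)=\tau_h\, g(\tau_h) = -\frac{1}{\mu}\,(\tau_{h+1}-\tau_h)\, g(\tau_h) = -\frac{1}{\mu}\int_{\tau_h}^{\tau_{h+1}} g(\tau_h)\, d\tau .
\end{align}

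Summing over $h=0,\dots,H-1$ and using $\int_{\tau_H}^{\tau_0} g = -\sum_{h}\int_{\tau_h}^{\tau_{h+1}} g$, I obtain the exact identity
\begin{align}
\mu\sum_{h=0}^{H-1} f(\tau_h) - \int_{c(1-\mu)^H t}^{ct} g(\tau)\, d\tau = \sum_{h=0}^{H-1}\int_{\tau_h}^{\tau_{h+1}} \big(g(\tau)-g(\tau_h)\big)\, d\tau .
\end{align}
This reduces the lemma to bounding the right-hand side by $\frac{\mu}{2-\mu}h_1(ct)$.

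Set $L\triangleq\max_{\tau\in[ct,0]}|g'(\tau)|$. This is finite because $g$ is continuously differentiable, and every $[\tau_h,\tau_{h+1}]$ lies in $[ct,0]$. By the mean value theorem, $g(\tau)-g(\tau_h)\le L(\tau-\tau_h)$ on each subinterval. Therefore each integral is at most
\begin{align}
\frac{L}{2}(\tau_{h+1}-\tau_h)^2=\frac{L}{2}\mu^2 c^2t^2(1-\mu)^{2h}.
\end{align}
Summing the geometric series and bounding it by its infinite version gives
\begin{align}
\sum_{h=0}^{H-1}(1-\mu)^{2h}\le \frac{1}{1-(1-\mu)^2}=\frac{1}{\mu(2-\mu)},
\end{align}
so the total error is at most
\begin{align}
\frac{\mu}{2-\mu}\,L\frac{(ct)^2}{2}=\frac{\mu}{2-\mu}\,h_1(ct).
\end{align}
Dividing by $\mu>0$ yields the claim.

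The only delicate point is sign bookkeeping. Because $t<0$, the integration limits run from the point nearer zero to the more negative one, and the subintervals $[\tau_h,\tau_{h+1}]$ must be oriented correctly. I will check that the identity above holds with the orientation as written, so that the Lipschitz bound is applied to integrals over intervals of positive length. Beyond that, the argument uses only $f(0)=0$ (implicitly, through the continuity of $g$ at $0$) and the $C^1$ regularity of $g$. No convexity of $f$ is needed.
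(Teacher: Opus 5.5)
Your proof is correct and takes essentially the same route as the paper. The paper expands $G_h(t)=\int_{\tau_h}^{t}g(\tau)\,d\tau$ to second order with a Lagrange remainder, obtaining $G_h(\tau_{h+1})=-\mu f(\tau_h)+\tfrac12 g'(\bar t_h)\mu^2\tau_h^2$; this is your left-endpoint identity, with the remainder $\int_{\tau_h}^{\tau_{h+1}}\big(g(\tau)-g(\tau_h)\big)\,d\tau$ written in Lagrange form, and it is then bounded using $\max_{[ct,0]}|g'|$ and the same geometric series.
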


\begin{proof}
    Now we have $t<0$.  For $h \in \{0, \cdots, H\}$, we have 
    \begin{align}
    \label{eq:tau_h}
        \tau_h = c (1-\mu)^h t \in [ct, c (1-\mu)^{H} t].
    \end{align}
    Now let
    \begin{align}
        G_h(t) \triangleq \int_{\tau_h}^t g(\tau)d\tau.
    \end{align}
    We can obtain:
    \begin{align}
    \label{eq:Gh_1}
        G_h^\prime(t) &= g(t)\\
        G_h^{\prime\prime}(t) &= g^\prime(t).\label{eq:Gh_2}
    \end{align}
    A second-order Taylor expansion gives:
    \begin{align}
        G_h(\tau_{h+1}) &\stackrel{(a)}{=} G_h(\tau_{h}) + G_h^\prime(\tau_{h}) (\tau_{h+1} - \tau_{h}) + \frac{1}{2} G_h^{\prime\prime}(\bar{t}_h) (\tau_{h+1} - \tau_{h})^2 \notag\\
        &\stackrel{(b)}{=} g(\tau_{h}) (\tau_{h+1} - \tau_{h}) + \frac{1}{2} g^\prime(\bar{t}_h) (\tau_{h+1} - \tau_{h})^2\notag\\
        &\stackrel{(c)}{=} - g(\tau_{h}) \mu  \tau_{h} + \frac{g^\prime(\bar{t}_h)}{2} \mu^2 \tau_{h}^2\notag\\
        &\stackrel{(d)}{=} - f(\tau_h)\mu + \frac{g^\prime(\bar{t}_h)}{2} \mu^2 \tau_{h}^2,
    \end{align}
    where $(a)$ follows from the fact that $f(t)$ is twice continuously differentiable and Taylor's theorem with Lagrange remainder, here $\bar{t}_h \in \left[\tau_{h}, \tau_{h+1}\right]$; $(b)$ follows from the differentiation of $G_h(t)$ in \eqref{eq:Gh_1} and \eqref{eq:Gh_2}; $(c)$ follows from $\tau_h$'s definition; $(d)$ follows from $g(t)$'s definition. 

    Then we have:
    \begin{align}
        \int_{ct}^{c (1-\mu)^{H} t} g(\tau) d\tau &= \sum_{h = 0}^{H-1} G_h(\tau_{h+1})\notag\\
        &= - \sum_{h = 0}^{H-1} f(\tau_h)\mu + \sum_{h = 0}^{H-1} \frac{g^\prime(\bar{t}_h)}{2} \mu^2 \tau_{h}^2.
    \end{align}
    Therefore,
    \begin{align}
        \sum_{h = 0}^{H-1} f(\tau_h)  &= \frac{1}{\mu}\left[\int_{c (1-\mu)^{H} t}^{ct} g(\tau) d\tau + \sum_{h = 0}^{H-1} \frac{g^\prime(\bar{t}_h)}{2} \mu^2 \tau_{h}^2\right]\notag\\
        &\stackrel{(a)}{\leq} \frac{1}{\mu}\left[\int_{c (1-\mu)^{H} t}^{ct} g(\tau) d\tau + \sum_{h = 0}^{H-1}\left|\frac{g^\prime(\bar{t}_h)}{2}\right| \mu^2 c^2 (1-\mu)^{2h} t^2\right]\notag\\
        &\stackrel{(b)}{\leq} \frac{1}{\mu}\left[\int_{c (1-\mu)^{H} t}^{ct} g(\tau) d\tau + \mu^2  \max_{\tau \in [ct, 0]} \left|g^\prime(\tau)\right| \frac{(ct)^2}{2}\sum_{h = 0}^{H-1} (1-\mu)^{2h}\right]\notag\\
        &\stackrel{(c)}{\leq} \frac{1}{\mu}\left[\int_{c (1-\mu)^{H} t}^{ct} g(\tau) d\tau +  \frac{\mu}{2-\mu} \max_{\tau \in [ct, 0]} \left|g^\prime(\tau)\right| \frac{(ct)^2}{2}\right],
    \end{align}
    where $(a)$ follows from the definition of $\tau_h$; $(b)$ follows from the fact that $\bar{t}_h \in \left[\tau_{h}, \tau_{h+1}\right]$ and $\tau_h  \in [ct, c (1-\mu)^{H} t] \subset [ct, 0]$; $(c)$ follows from the fact that $\sum_{h = 0}^{\infty} (1-\mu)^{2h} = \frac{1}{2\mu - \mu^2}$.
\end{proof}

\begin{lemma}
\label{lemma:delta_gamma}
    If $\delta_\varepsilon$ satisfies \eqref{eq:delta_1} and \eqref{eq:delta_2}, and $\gamma_\varepsilon$ is defined as in \eqref{eq:gamma_epsilon}, we have:
    \begin{align}
    \lim_{\varepsilon \rightarrow 0} \gamma_\varepsilon &=0 \\
    \lim_{\varepsilon \rightarrow 0} \frac{\delta_\varepsilon}{\gamma_\varepsilon} &= \frac{1}{M}.
\end{align}
\end{lemma}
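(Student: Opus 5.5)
The statement concerns the scalar sequence $\gamma_\varepsilon = 1-(1-\delta_\varepsilon)^M$, with $M$ a fixed positive integer independent of $\varepsilon$. My plan is to treat it as a statement about the map $\delta \mapsto 1-(1-\delta)^M$ near $\delta=0$. I would then compose with $\delta_\varepsilon \to 0$, which is guaranteed by \eqref{eq:delta_1}. Condition \eqref{eq:delta_2} is not needed here; only \eqref{eq:delta_1} enters.

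For the first limit, I would use continuity of the polynomial $\delta \mapsto (1-\delta)^M$ at $\delta=0$. Since $\delta_\varepsilon \to 0$, we get $(1-\delta_\varepsilon)^M \to 1$, hence $\gamma_\varepsilon \to 0$. I would also note that $\gamma_\varepsilon>0$ whenever $0<\delta_\varepsilon<1$, which holds for $\varepsilon$ small enough. This makes the ratio $\delta_\varepsilon/\gamma_\varepsilon$ well defined.

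For the second limit, I would expand with the binomial theorem:
\begin{align}
\gamma_\varepsilon = \sum_{j=1}^{M} \binom{M}{j} (-1)^{j+1} \delta_\varepsilon^{j} = M\delta_\varepsilon + R(\delta_\varepsilon),
\end{align}
where $|R(\delta)| \le C\delta^2$ for $\delta\in(0,1)$ and some constant $C$ depending only on $M$. Dividing by $\delta_\varepsilon$ gives
\begin{align}
\frac{\gamma_\varepsilon}{\delta_\varepsilon} = M + O(\delta_\varepsilon) \to M.
\end{align}
Taking reciprocals then yields $\delta_\varepsilon/\gamma_\varepsilon \to 1/M$. Alternatively, one can read $\gamma_\varepsilon/\delta_\varepsilon$ as a difference quotient of $f(\delta)=1-(1-\delta)^M$ at $0$. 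Its limit is $f'(0)=M$, which one could also get from L'Hôpital's rule.

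A sandwich argument gives the same conclusion more concretely. Bernoulli's inequality gives $\gamma_\varepsilon \le M\delta_\varepsilon$, as already used in \eqref{eq:log_epsilon}. In the other direction, $(1-\delta)^M \le 1 - M\delta + \binom{M}{2}\delta^2$ gives $\gamma_\varepsilon \ge M\delta_\varepsilon - \binom{M}{2}\delta_\varepsilon^2$. Together these yield
\begin{align}
\frac{1}{M} \le \frac{\delta_\varepsilon}{\gamma_\varepsilon} \le \frac{1}{M - \binom{M}{2}\delta_\varepsilon},
\end{align}
and both bounds tend to $1/M$.

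There is no real obstacle in this argument. The only point needing care is the lower bound on $(1-\delta)^M$. It follows from the alternating-series or Taylor remainder bound, since the third derivative of $(1-\delta)^M$ has a fixed sign on $[0,1]$. If that becomes fiddly, the difference-quotient argument avoids it entirely.
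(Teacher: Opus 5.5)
Your proposal is correct and follows essentially the same route as the paper: continuity of $\delta\mapsto(1-\delta)^M$ gives the first limit, and for the second you use that $\gamma_\varepsilon/\delta_\varepsilon$ is a difference quotient of $1-(1-\delta)^M$ at $0$ with limit $M$ (the paper gets this via L'Hôpital's rule, which you also mention). Your binomial/Bernoulli sandwich $\frac{1}{M}\le \delta_\varepsilon/\gamma_\varepsilon \le 1/\bigl(M-\binom{M}{2}\delta_\varepsilon\bigr)$ is a slightly more explicit, non-asymptotic version of the same computation.
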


\begin{proof}
    For the first one, we can easily prove by:
    \begin{align}
    \lim_{\varepsilon\rightarrow0}\gamma_\varepsilon = \lim_{\varepsilon\rightarrow0} 1- (1-\delta_\varepsilon)^M= \lim_{\delta_\varepsilon
         \rightarrow0}1- (1-\delta_\varepsilon)^M = 0.
    \end{align}
    For the second one, we have:
    \begin{align}
        \lim_{\varepsilon \rightarrow 0} \frac{\delta_\varepsilon}{\gamma_\varepsilon} &= \lim_{\varepsilon \rightarrow 0} \frac{\delta_\varepsilon}{1- (1-\delta_\varepsilon)^M}\notag\\
        &= \lim_{\delta_\varepsilon \rightarrow 0} \frac{\delta_\varepsilon}{1- (1-\delta_\varepsilon)^M} \notag\\
        &\stackrel{(a)}{=} \lim_{\delta_\varepsilon \rightarrow 0} \frac{1}{M(1-\delta)^{M-1}}\notag\\
        &= \frac{1}{M},
    \end{align}
    where $(a)$ follows from L'Hôpital's Rule and the fact that $(\delta_\varepsilon)^\prime = 1$ and $\left(1- (1-\delta_\varepsilon)^M\right)^\prime = M(1-\delta_\varepsilon)^{M-1}$. 
\end{proof}

\begin{lemma}
\label{lemma:bound_w}
    Under Assumptions \ref{ass:init_dynamics_2} and \ref{ass:oberv_E_weak_2}, we have $\|\widetilde{\boldsymbol{w}}_k\|_\infty \leq W$, where 
    \begin{align}
        A_E &= \max_{y \in \mathcal{Y}} \|\log(\widetilde{E}^\top y)\|_\infty, \notag\\
       W &= \max\{\|\widetilde{w}_0\|_\infty, A_E\}.
    \end{align}
\end{lemma}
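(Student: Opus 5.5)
The plan is a direct induction on $k$. We use the ALF recursion \eqref{eq:near_deter_dynamic} restricted to the recurrent block. Under Assumption~\ref{ass:init_dynamics_2}, the transient components are initialized at $-\infty$ and the block-diagonal structure of the recursion matrix means they never mix into the recurrent components. Hence the recurrent part evolves autonomously as
\begin{align*}
\widetilde{\boldsymbol{w}}_k = (1-\delta) P \widetilde{\boldsymbol{w}}_{k-1} + \delta \log(\widetilde{E}^\top \boldsymbol{y}_k).
\end{align*}
It is important to note that the input term involves only $\widetilde{E}$, the columns of $E$ for recurrent states. This is exactly where Assumption~\ref{ass:pos} enters: because every entry of $\widetilde{E}$ is strictly positive, $\log(\widetilde{E}^\top y)$ is finite for every one-hot $y\in\mathcal{Y}$. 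Consequently $A_E=\max_{y}\|\log(\widetilde{E}^\top y)\|_\infty$ is a finite constant, being a maximum over the finitely many $y \in \mathcal{Y}$.

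For the induction itself, the base case $k=0$ holds because $\|\widetilde{w}_0\|_\infty \le W$ by definition of $W$. For the inductive step, assume $\|\widetilde{\boldsymbol{w}}_{k-1}\|_\infty \le W$. Since $P$ is a permutation matrix, it only reorders entries, so $\|P v\|_\infty = \|v\|_\infty$. The triangle inequality, together with $0\le\delta\le1$, then gives
\begin{align*}
\|\widetilde{\boldsymbol{w}}_k\|_\infty \le (1-\delta)\|\widetilde{\boldsymbol{w}}_{k-1}\|_\infty + \delta \|\log(\widetilde{E}^\top \boldsymbol{y}_k)\|_\infty \le (1-\delta) W + \delta A_E \le W.
\end{align*}
The bound is a convex combination of two quantities each at most $W$, which closes the induction. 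The bound holds surely, i.e.\ for every realization of the observation sequence, so in particular it holds conditionally on any event. This is how it is used in the proof of Lemma~\ref{lemma:lemma_main}, namely to bound $\boldsymbol{o}_0(x)\ge -2W$ at a relabeled time origin.

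The only mildly delicate point, which I expect to be the main obstacle, is justifying the decoupling of the transient block. The transient entries are $-\infty$, and we need them never to contaminate $\widetilde{\boldsymbol{w}}_k$. This follows because the off-diagonal blocks of $\mathrm{blkdiag}(P, I_r)$ are zero and we use the convention $-\infty\cdot 0=0$. The transient entries then stay at $-\infty$, since $(1-\delta)(-\infty) + \delta\log(\cdot)=-\infty$ for $\delta<1$; even if some $\log$ entry is $-\infty$, no conflicting $+\infty$ ever appears. They do not enter the claimed bound in any case, since the bound concerns only $\widetilde{\boldsymbol{w}}_k$. The edge case $\delta=1$ needs no special treatment, because then $\widetilde{\boldsymbol{w}}_k=\log(\widetilde{E}^\top\boldsymbol{y}_k)$ directly and the bound $\|\widetilde{\boldsymbol{w}}_k\|_\infty \le A_E \le W$ is immediate.
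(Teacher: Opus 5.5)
Your proposal is correct and follows essentially the same route as the paper: induction on $k$ via the triangle inequality, $\|Pv\|_\infty=\|v\|_\infty$, and the convex-combination step $(1-\delta)W+\delta A_E\le W$, with $A_E<\infty$ coming from the strict positivity of $\widetilde{E}$. Your remarks on why the transient block decouples, and on the bound holding for every realization (and hence conditionally, as used in the proof of Lemma~\ref{lemma:lemma_main}), spell out points the paper leaves implicit.
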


\begin{proof}
    From Assumption \ref{ass:oberv_E_weak_2}, we can see $A_E < \infty$ since $\widetilde{E}>0$. From Assumption \ref{ass:init_dynamics_2}, we can get $W < \infty$. 
    Given the dynamics in \eqref{eq:near_deter_dynamic}, we have:
    \begin{align}
    \label{eq:bound_w}
        \|\widetilde{\boldsymbol{w}}_{k}\|_\infty &= \|P (1-\delta) \widetilde{\boldsymbol{w}}_{k-1} + \delta \log(\widetilde{E}^\top \boldsymbol{y}_{k})\|_\infty \notag\\
        &\leq \|P (1-\delta) \widetilde{\boldsymbol{w}}_{k-1}\|_\infty + \| \delta \log(\widetilde{E}^\top \boldsymbol{y}_{k})\|_\infty \notag\\
        &= (1-\delta)\|P \widetilde{\boldsymbol{w}}_{k-1}\|_\infty + \delta\| \log(\widetilde{E}^\top \boldsymbol{y}_{k})\|_\infty \notag\\
        &= (1-\delta)\|\widetilde{\boldsymbol{w}}_{k-1}\|_\infty + \delta\| \log(\widetilde{E}^\top \boldsymbol{y}_{k})\|_\infty .
    \end{align}
    Then we prove by induction. For \(k=0\), the statement is \(\|\widetilde{w}_0\|_\infty \leq \max\{\|\widetilde{w}_0\|_\infty, A_E\}\), which is trivially true.

    Assume the hypothesis is true for \(k-1\), i.e., \(\|\widetilde{\boldsymbol{w}}_{k-1}\|_\infty \le \max\{\|\widetilde{w}_0\|_\infty, A_E\}\). From \eqref{eq:bound_w}, we have:
    \begin{align}
        \|\widetilde{\boldsymbol{w}}_{k}\|_\infty &\leq (1-\delta)\|\widetilde{\boldsymbol{w}}_{k-1}\|_\infty + \delta A_E\notag \\
        &\leq (1-\delta) \max\{\|\widetilde{w}_0\|_\infty, A_E\} + \delta\max\{\|\widetilde{w}_0\|_\infty, A_E\} \notag \\
        &= \max\{\|\widetilde{w}_0\|_\infty, A_E\}.
    \end{align}
Thus, the hypothesis holds for \(k\). By the principle of mathematical induction, it holds for all \(k \ge 0\).
\end{proof}

\section{Additional Details for Section \ref{sec:exp}}
\subsection{Illustrative Example}
\label{sec:illus_exp}
The state distribution is initialized as $\pi_0=[1,\,0]^\top$, and the initial logit vector as $w_0=[0,\,0]^\top$ for both \textbf{ALF} and \textbf{LOF}.

Let $\varepsilon = 0.005$. With $20{,}000$ Monte Carlo iterations, the estimated instantaneous error probabilities for \textbf{LOF} and \textbf{ALF} are shown in Fig.~\ref{fig:evolution_error}. Since $w_0 = \left[0, 0\right]^\top$, the first update of \textbf{ALF} is given by $\boldsymbol{w}_1 = \delta_\varepsilon \log(E^\top \boldsymbol{y}_1)$, which implies that the decoding decision at $k=1$ depends entirely on the observation. From the observation model in \eqref{eq:T_example}, the resulting error probability at the first timestep is $0.1$. A similar argument holds for the \textbf{LOF} dynamics, which explains why all the curves in Fig.~\ref{fig:evolution_error} start at approximately $0.1$. We find that the estimated $p_k$ stablizes after $1000$ steps, thus it's valid to use $p_{1000}$ to approximate $\limsup_{k \rightarrow \infty} p_k$.
\begin{figure}
    \centering
    \includegraphics[width=0.5\linewidth]{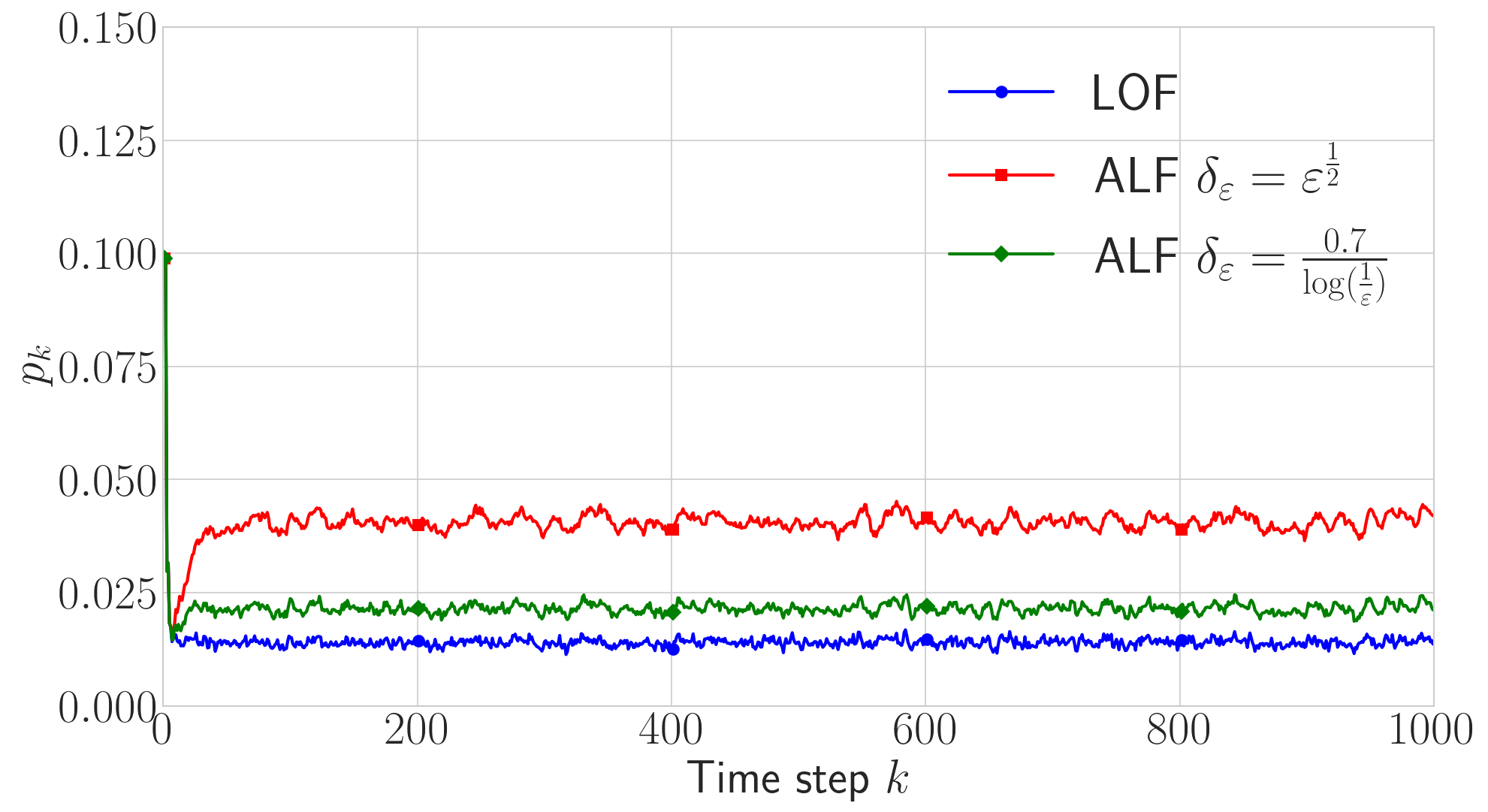}
    \caption{Per-time-step error probability for \textbf{ALF} and \textbf{LOF} under $\varepsilon = 0.005$.}
    \label{fig:evolution_error}
\end{figure}


It should be noted that the per-time-step error probabilities in Figure~\ref{fig:evolution_error} and the long-run error behavior in Figure~\ref{fig:long_run_error} exhibit similar patterns for more general deterministic matrices $D$ and when the underlying Markov chain involves more than two states. These additional results are carefully checked and omitted here for conciseness.

\subsection{RingWorld Game}
\label{sec:ringworld_details}
\subsubsection{Settings}
A sample transition matrix corresponding to CW1 is  
\begin{align}
    T(\text{CW}1) = P(\text{CW}1) + \varepsilon Q(\text{CW}1),
\end{align}
with entries for all \(i,j\in\{1,\ldots,N\}\) defined as
\begin{align}
p_{ij}(\text{CW}1) &=
\begin{cases}
1, & i = (j \bmod N)+1,\\
0, & \text{otherwise},
\end{cases}
&
q_{ij}(\text{CW}1) &=
\begin{cases}
-2, & i = (j \bmod N)+1,\\
1, & i \in \{ (j-1 \bmod N)+1, (j+1 \bmod N)+1\}.
\end{cases}
\end{align}
Let $\theta_j$ and $\xi_i$ denote the angles of state $j$ and beacon $i$, respectively.  
The observation model is 
\begin{align}
\label{eq:observation}
    \mathbb{P}(\boldsymbol{y}_k = u_i \mid \boldsymbol{x}^\star_k = j) 
    \propto \exp\!\big(\cos(\theta_j - \xi_i)\big).
\end{align}
and is further illustrated in Figure \ref{fig:obsv}.

\begin{figure}
    \centering
    \includegraphics[width=0.5\linewidth]{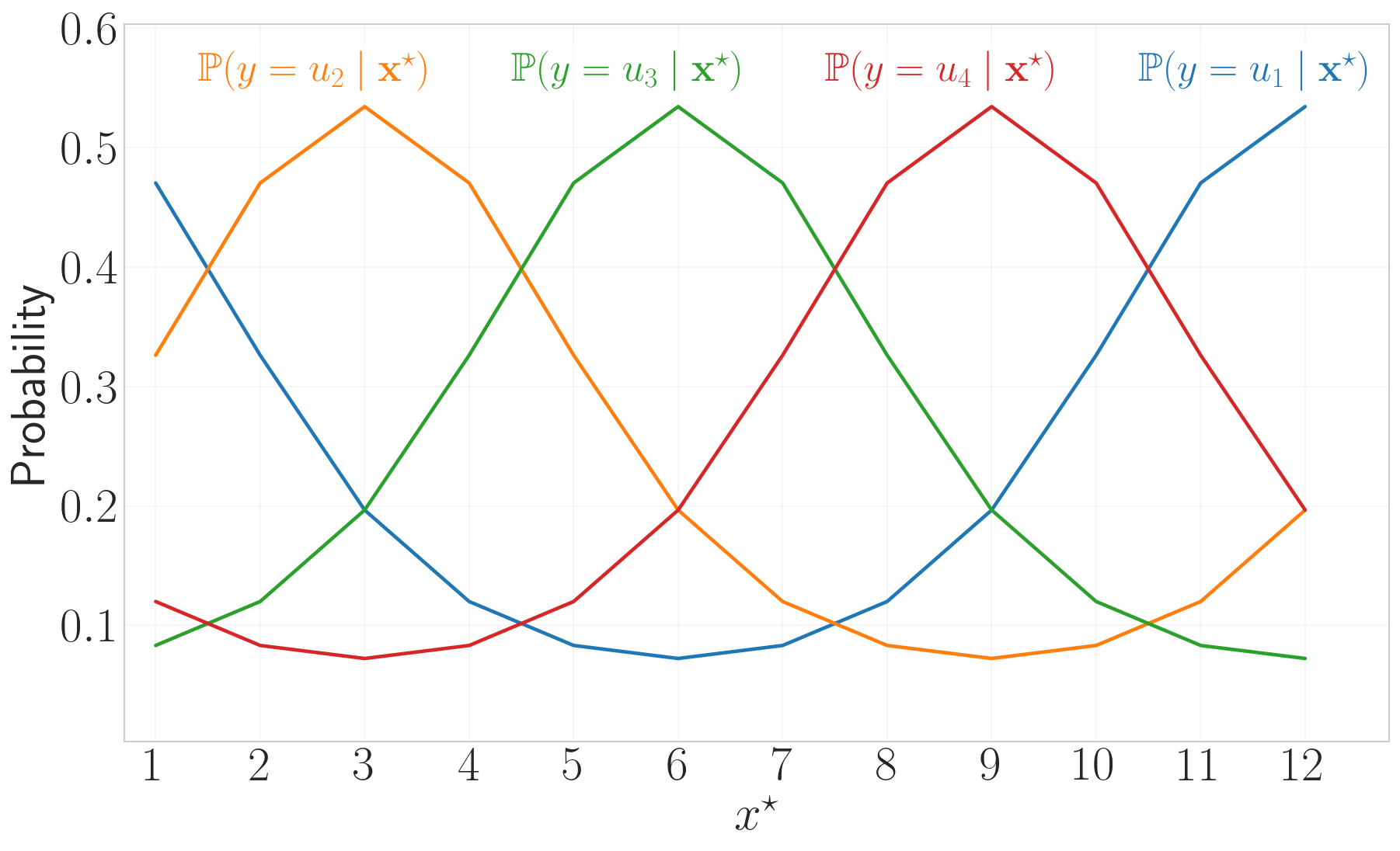}
    \caption{Observation probabilities $\mathbb{P}(y=u_i \mid \boldsymbol{x}^\star)$ for $i=1,\dots,4$ versus $\boldsymbol{x}^\star\in\{1,\dots,12\}$, computed from \eqref{eq:observation}.}
    \label{fig:obsv}
\end{figure}
The reward function is defined as  
\begin{align}
    r(\boldsymbol{x}^\star_{k}) = \frac{1}{K} \Big(\mathbbm{1}\{\boldsymbol{x}^\star_{k} \text{ is a goal state}\} - \mathbbm{1}\{\boldsymbol{x}^\star_{k} \text{ is a trap state}\}\Big),
\end{align}
where \(K\) is the episode length. The total episodic reward therefore lies in \(\sum_{k=1}^{K} r(\boldsymbol{x}^\star_{k}) \in [-1,1]\).

Our PPO implementations, including the encoder, S5, actor, and critic, follow \citet{lu2023structured}. 
The general hyperparameters used for \textbf{LOF}, \textbf{ALF}, and \textbf{S5-based memory} in PPO training are summarized in Tables~\ref{tab:6}.

\begin{table}[h]
\centering
\caption{Hyperparameters for training PPO}
\begin{tabular}{l|l}
\hline
\textbf{Parameter} & \textbf{Value} \\ \hline
Adam Learning Rate       & 5e-5 \\
Number of Environments   & 64 \\
Unroll Length            & 1024 \\
Number of Timesteps      & 30e6 \\
Number of Epochs         & 30 \\
Number of Minibatches    & 8 \\
Discount $\gamma$        & 0.99 \\
GAE $\lambda$            & 1.0 \\
Clipping Coefficient $\varepsilon$ & 0.2 \\
Entropy Coefficient      & 0.0 \\
Value Function Weight    & 1.0 \\
Maximum Gradient Norm    & 0.5 \\
Learning Rate Annealing  & None \\
Activation Function      & LeakyReLU \\
Action Decoder Layer Sizes & [128, 128] \\
Value Decoder Layer Sizes  & [128, 128] \\
\end{tabular}
\label{tab:6}
\end{table}

\begin{table}[!h]
\centering
\caption{Additional hyperparameters for S5-based memory}
\begin{tabular}{l|l}
\hline
\textbf{Parameter} & \textbf{Value} \\ \hline
S5 Discretization        & ZOH \\
S5 $\Delta_{\min}$       & 0.001 \\
S5 $\Delta_{\max}$       & 0.1 \\ \hline
\end{tabular}
\label{tab:12}
\end{table}

For \textbf{S5-based memory}, the encoder layer sizes are $[128, 12]$ with recurrent hidden size $12$,  
and $[128, 256]$ with hidden size $256$. Additional fixed hyperparameters are shown in Table~\ref{tab:12}.  

\subsubsection{ALF-Inspired Deep Learning Memory}
\label{sec:alf_deep}
In our experiment, all permutation matrices 
$P(\text{CW1})$, $P(\text{CW2})$, $P(\text{CCW1})$, and $P(\text{CCW2})$ 
are circulant, meaning each row is a cyclic right shift of the previous one. 
According to \citet[Theorem~3.1]{gray2006toeplitz}, any circulant matrix can be diagonalized by the discrete Fourier transform (DFT) matrix:
\begin{align}
    P(\text{CW1})  &= V \Lambda(\text{CW1})  V^{-1}, \notag\\
    P(\text{CW2})  &= V \Lambda(\text{CW2})  V^{-1}, \notag\\
    P(\text{CCW1}) &= V \Lambda(\text{CCW1}) V^{-1}, \notag\\
    P(\text{CCW2}) &= V \Lambda(\text{CCW2}) V^{-1},
\end{align}
where $V \in \mathbb{C}^{N \times N}$ is the DFT matrix, and each 
$\Lambda(\cdot) \in \mathbb{C}^{N \times N}$ is diagonal with eigenvalues lying on the complex unit circle. 
Specifically, $\Lambda(\text{CW1})$ has $N$ diagonal entries uniformly spaced on the unit circle. 
Moreover, since
\begin{align}
   P(\text{CW2}) = P(\text{CW1})^2, \quad 
   P(\text{CCW1}) = P(\text{CW1})^\top, \quad 
   P(\text{CCW2}) = (P(\text{CW1})^\top)^2,
\end{align}
we obtain
\begin{align}
    \Lambda(\text{CW2})  &= \Lambda(\text{CW1})^2, \notag\\
    \Lambda(\text{CCW1}) &= \Lambda(\text{CW1})^{-1}, \notag\\
    \Lambda(\text{CCW2}) &= \Lambda(\text{CW1})^{-2}.
\end{align}
Then, ALF in \eqref{eq:near_deter_ext} can be rewritten as
\begin{align}
    V^{-1}\boldsymbol{w}_{k} 
    &= \Lambda(\boldsymbol{a}_{k-1})(1-\delta)\,(V^{-1}\boldsymbol{w}_{k-1})
    + \delta V^{-1}\log(E^\top \boldsymbol{y}_{k}), \notag\\
    \underbrace{\boldsymbol{w}_{k}}_{\text{to actor and critic}}
    &= V(V^{-1}\boldsymbol{w}_{k}).
\end{align}
Let
\begin{align}
   \boldsymbol{h}_{k} \triangleq V^{-1}\boldsymbol{w}_{k},
\end{align}
so that
\begin{align}
\label{eq:deep_alf}
    \boldsymbol{h}_{k} 
    &= \Lambda(\boldsymbol{a}_{k-1})(1-\delta)\,\boldsymbol{h}_{k-1} 
    + \delta V^{-1}\log(E^\top \boldsymbol{y}_{k}), \notag\\
    \boldsymbol{w}_{k} &= V\boldsymbol{h}_{k}.
\end{align}
This transformation maps the real-valued state dynamics into the complex diagonal space, which is how linear RNNs are usually implemented.  
In this complex space, we aim to develop ALF-inspired deep learning memory architectures by treating 
$\Lambda(\cdot)$, $\delta$, $E$, and $V$ as learnable parameters.  
We refer to these architectures collectively as \textbf{Deep ALF}.

We first make all parameters learnable, initializing $\Lambda(\cdot)$, $E$, and $V$ with their true values, and setting $\delta = 0.1$ to the empirically good value used in Section~\ref{sec:ringworld}. The results are compared with the original \textbf{ALF} (with $\delta = 0.1$) and the softmax-processed \textbf{LOF} in Figure~\ref{fig:deep_alf_results}.

\begin{figure}[t]
    \centering
    \begin{subfigure}[b]{0.7\linewidth}
        \centering
        \includegraphics[width=\linewidth]{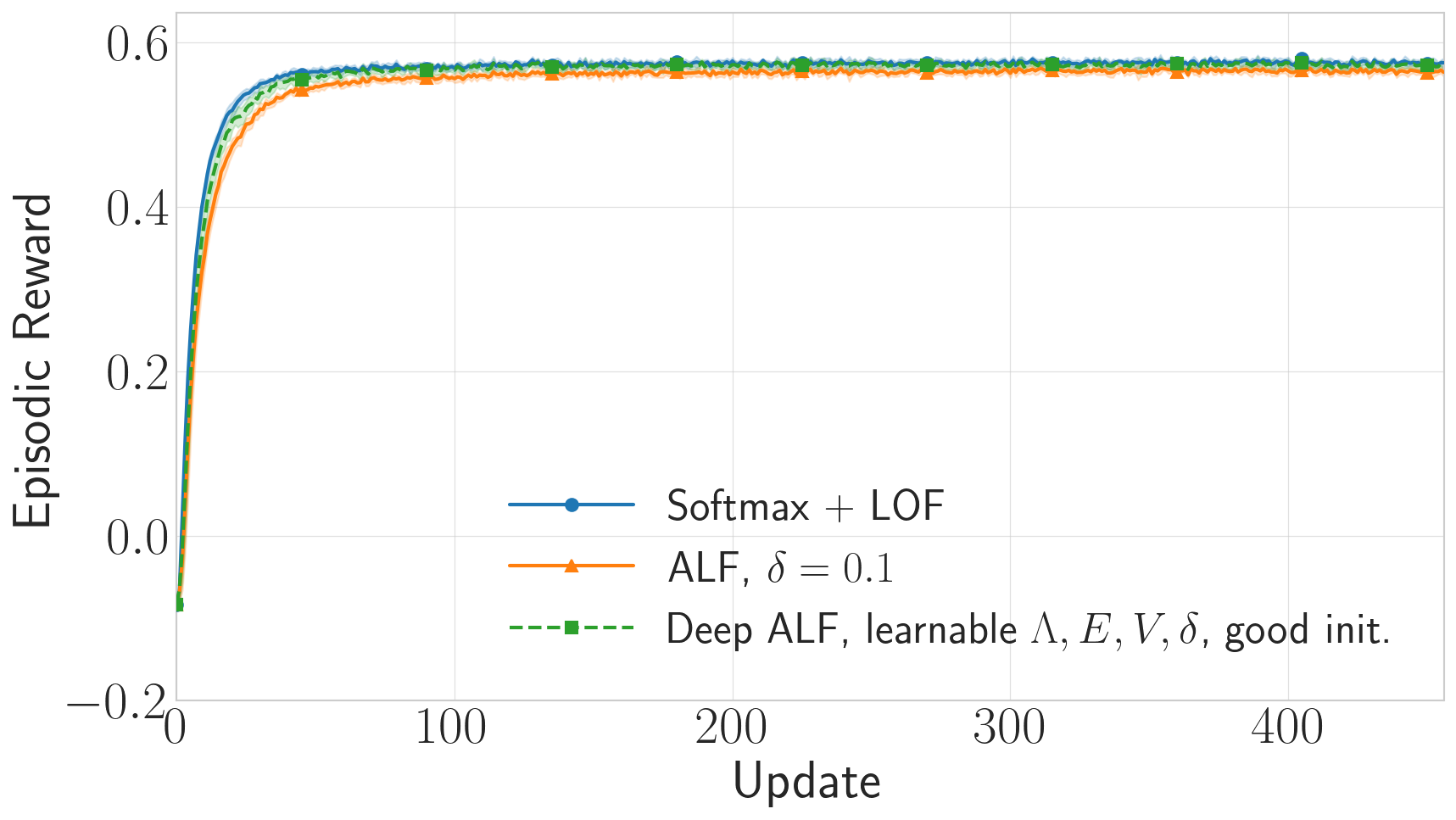}
        \caption{}
        \label{fig:deep_alf_full}
    \end{subfigure}
    \hfill
    \begin{subfigure}[b]{0.7\linewidth}
        \centering
        \includegraphics[width=\linewidth]{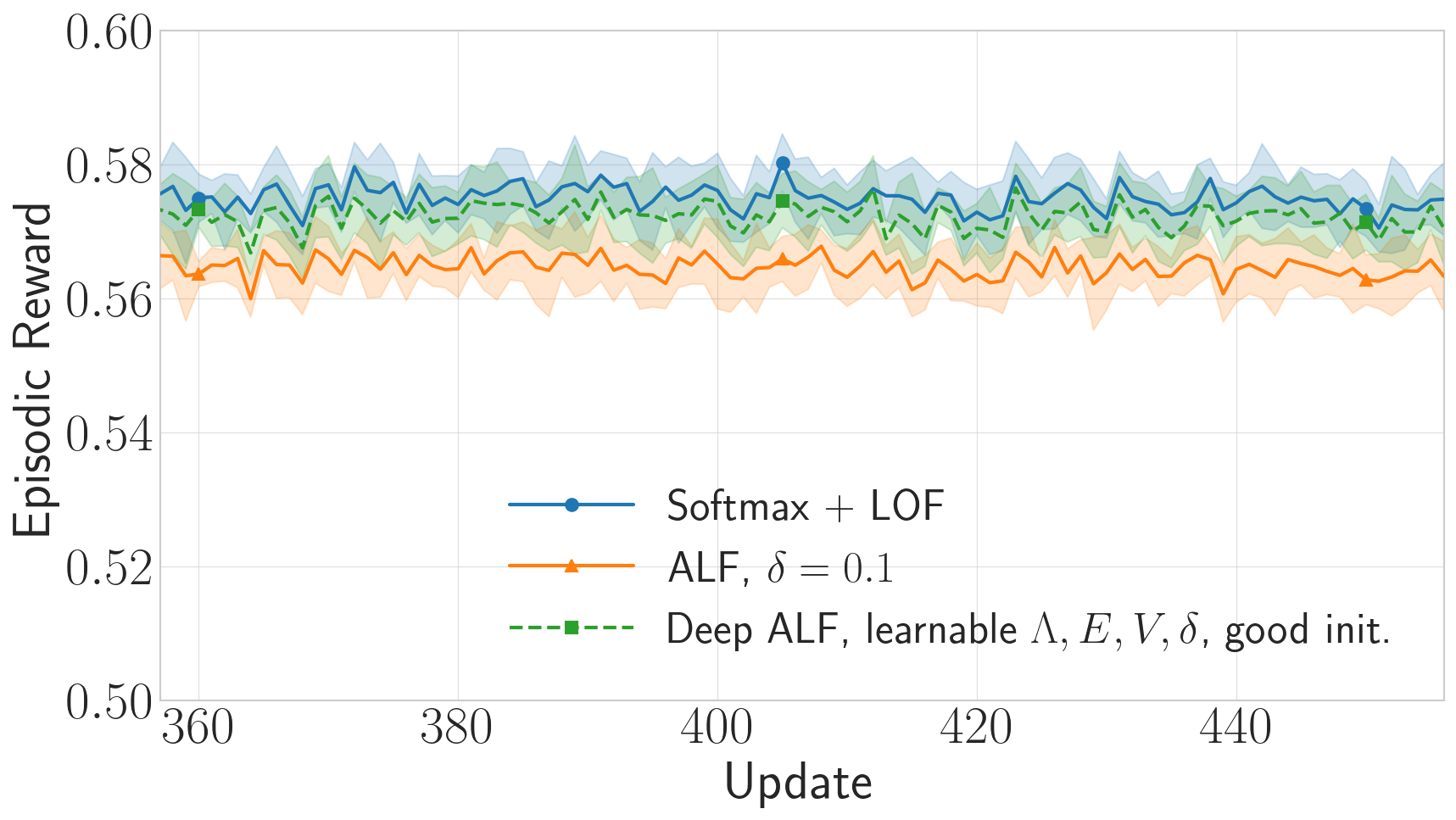}
        \caption{}
        \label{fig:deep_alf_zoom}
    \end{subfigure}
    \caption{Episodic return versus update. (a) shows the full results, and (b) shows a zoomed-in view of the last 100 updates. The shaded area indicates the standard deviation across 8 random seeds.}
    \label{fig:deep_alf_results}
\end{figure}

From Figure~\ref{fig:deep_alf_results}, we observe that making all parameters learnable allows \textbf{Deep ALF} to slightly outperform \textbf{ALF}, reaching performance levels close to the softmax-processed \textbf{LOF}.  

Next, we keep $\Lambda(\cdot)$, $E$, $V$, and $\delta$ learnable, but vary their initialization schemes:

\begin{itemize}
    \item \textbf{Random $E$:} $\Lambda(\cdot)$ and $V$ are initialized with their true values, and $\delta = 0.1$. For $E$, we first draw $\kappa \in \mathbb{R}^{S\times N}$ from a standard normal distribution and then normalize it column-wise:
    \begin{align}
        E = \operatorname{softmax}(\kappa) \quad \text{(column-wise)}.
    \end{align}
    This setup represents the case where the observation model is unknown, but other environment parameters are known.

    \item \textbf{Random $\delta$:} $\Lambda(\cdot)$, $E$, and $V$ are initialized with their true values, while $\delta$ is drawn uniformly from $(0, 0.5)$. This corresponds to an unknown perturbation pattern $\varepsilon Q$, since the optimal $\delta$ should depend on the specific $\varepsilon Q$.

    \item \textbf{Random $\Lambda(\cdot)$:} $E$ and $V$ are initialized with their true values, and $\delta = 0.1$. Each eigenvalue of $\Lambda(\cdot)$ is initialized on the complex unit circle with uniformly random phases, representing an unknown permutation transition pattern.

    \item \textbf{All random:} $\Lambda(\cdot)$, $E$, and $\delta$ follow the random initialization schemes above. For $V$, each element is initialized with independent standard normal real and imaginary parts. This setting models a fully unknown environment where only the numbers of states $N$ and observations $S$ are known.
\end{itemize}

\begin{figure}[!t]
    \centering
    \begin{subfigure}[b]{0.8\linewidth}
        \centering
        \includegraphics[width=\linewidth]{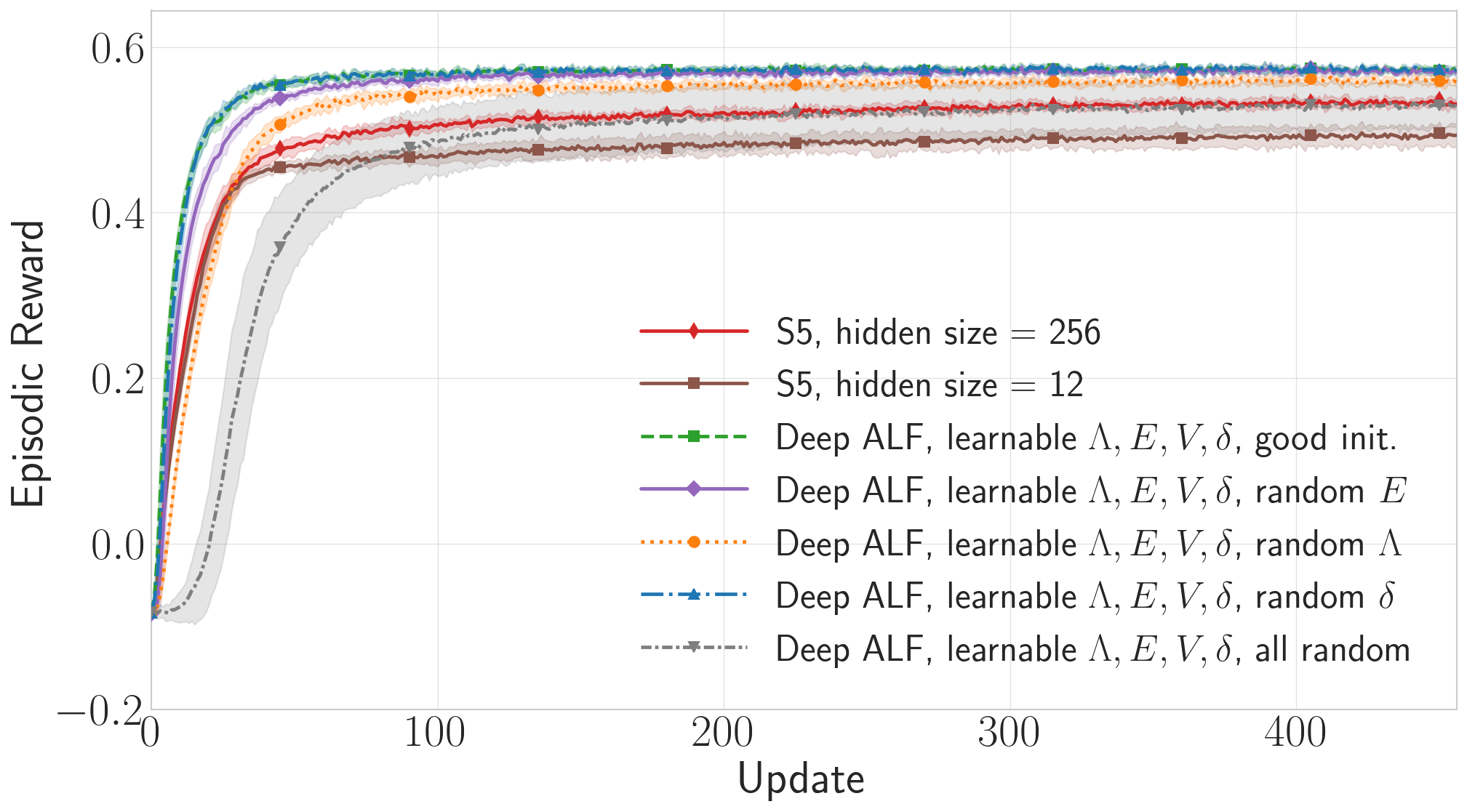}
        \caption{}
        \label{fig:deep_alf_comp_full}
    \end{subfigure}
    \hfill
    \begin{subfigure}[b]{0.8\linewidth}
        \centering
        \includegraphics[width=\linewidth]{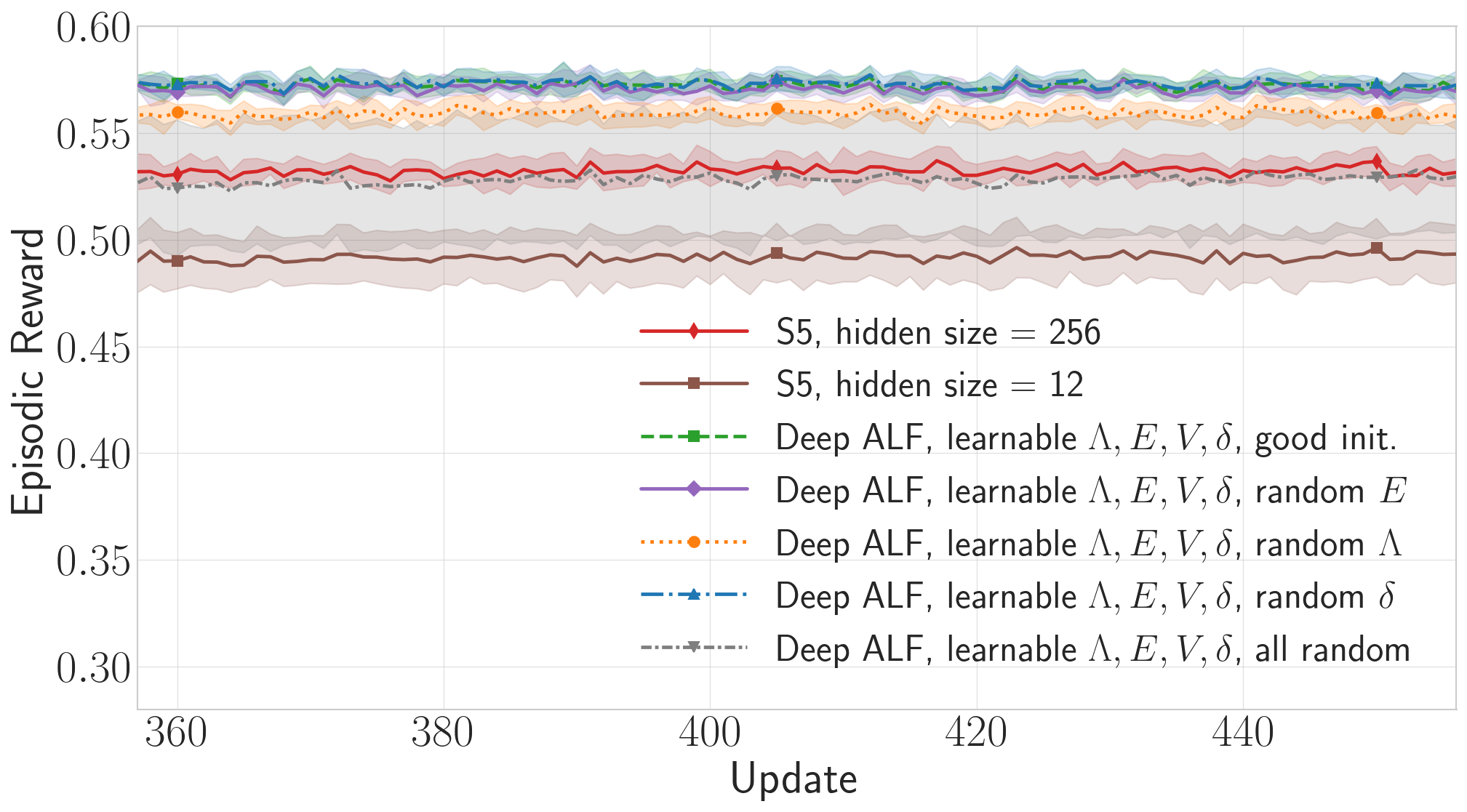}
        \caption{}
        \label{fig:deep_alf_comp_zoom}
    \end{subfigure}
    \caption{Episodic return versus update. (a) shows the full results, and (b) shows a zoomed-in view of the last 100 updates. The shaded area indicates the standard deviation across 8 random seeds.}
    \label{fig:deep_alf_comp_results}
\end{figure}

From Figure~\ref{fig:deep_alf_comp_results}, we observe that random initialization of $E$ or $\delta$ still enables effective learning, achieving performance close to that with full informed initialization. Random $\Lambda(\cdot)$ causes a slight degradation in performance. 

In addition, when all parameters $\Lambda(\cdot)$, $E$, $V$, and $\delta$ are randomly initialized, making this setting directly comparable to \textbf{S5-based memory}, Figure \ref{fig:deep_alf_comp_full} shows that the parameterization of \textbf{Deep ALF} is more efficient and yields better performance than \textbf{S5-based memory}: With the same hidden size of 12, both \textbf{Deep ALF} and \textbf{S5} converge quickly, but \textbf{Deep ALF} achieves noticeably better final performance, as shown in Figure \ref{fig:deep_alf_comp_full}. Only by increasing \textbf{S5}'s hidden size to 256 (with substantially more parameters) does its performance approach that of \textbf{Deep ALF}.
        Furthermore, \textbf{Deep ALF} offers a far more efficient encoder parameterization. The \textbf{S5} agent uses a two-layer MLP encoder with $2048 \,(4\times128 + 128\times12)$ parameters for hidden size 12 and $33{,}280 \,(4\times128 + 128\times256)$ parameters for hidden size 256. {In contrast, \textbf{Deep ALF} uses the encoder $\log(E^\top y)$ (Equation \eqref{eq:deep_alf}, with $E$ learned from scratch, with only $48 \,(4\times12)$ parameters, due to its well-designed dynamics. These results demonstrate that \textbf{Deep ALF} achieves better performance with significantly fewer parameters.


However, note that our transformation from \textbf{ALF} in \eqref{eq:near_deter_ext} to \textbf{Deep ALF} in \eqref{eq:deep_alf} relies on all permutation matrices in this experiment being circulant, such that a common DFT matrix $V$ diagonalizes them. In general, while permutation matrices are diagonalizable, they do not necessarily share the same diagonalization matrix $V$. Designing ALF-inspired deep learning memory architectures for more general nearly-permutation environments is an interesting direction for future work.

\end{document}